\def\eqref#1{equation~\ref{#1}}
\def\1{\bm{1}}
\def\rvc{{\mathbf{c}}}
\def\rvs{{\mathbf{s}}}
\def\rvx{{\mathbf{x}}}
\def\rvz{{\mathbf{z}}}
\DeclareMathAlphabet{\mathsfit}{\encodingdefault}{\sfdefault}{m}{sl}
\SetMathAlphabet{\mathsfit}{bold}{\encodingdefault}{\sfdefault}{bx}{n}
\newtheorem{corollary}{Corollary}
\newmdenv[
  backgroundcolor=gray!5,
  linecolor=gray!30,
  linewidth=0.0pt,
  roundcorner=2pt,
  innertopmargin=6pt, innerbottommargin=6pt,
  innerleftmargin=8pt, innerrightmargin=8pt,
  skipabove=6pt, skipbelow=6pt
]{insightbox}
\newdimen\arrowsize
\tikzset{
  block/.style={circle, draw, fill=white},
  myarrow/.style={single arrow, draw, single arrow head extend=2mm, minimum width=30pt},
  myar/.style={rounded corners=2pt, fill=black!20},
  mytri/.style={isosceles triangle, anchor=apex, isosceles triangle apex angle=90, minimum width=50pt},
}
\newtcolorbox{myblock}[1][]{
  colback=gray!20!white,
  colframe=gray!20!white,
  boxrule=0pt,
  fonttitle=\bfseries,
  boxsep=0pt,
  left=4pt,
  right=4pt,
  top=2pt,
  bottom=2pt,
  #1
}
\theoremstyle{plain}
\newtheorem{theorem}{Theorem}[section]
\newtheorem{lemma}[theorem]{Lemma}
\theoremstyle{definition}
\newtheorem{definition}[theorem]{Definition}
\newtheorem{assumption}[theorem]{Assumption}
\theoremstyle{remark}
\newcommand{\ie}{\textit{i.e.}}
\newcommand{\eg}{\textit{e.g.}}
\newcommand{\Rev}[2]{#1}
\Crefname{equation}{Eq.}{Eqs.}
\newcommand{\shortiff}{\Leftrightarrow}
\newcommand\Std[1]{{\scriptsize \textcolor{gray}{$\pm$ #1}}} 
\definecolor{mylink}{HTML}{1A2B3C}   
\definecolor{mycite}{HTML}{0F62FE}   
\definecolor{myurl}{HTML}{E83E8C}    
\definecolor{panelA}{HTML}{FFF6DC} 
\definecolor{panelB}{HTML}{EEF7F1} 
\newcolumntype{A}{>{\columncolor{panelA}}c}
\newcolumntype{B}{>{\columncolor{panelB}}c}
\newcolumntype{G}{>{\columncolor{gray!25}}c}
\newtcbox{\fmh}{on line,
  colback=blue!60!black,
  coltext=white,
  boxrule=0pt, arc=2pt,
  left=2pt, right=2pt, top=1pt, bottom=1pt}
\newtheorem{example}{Example}
\def\xx{\mathbf{x}}
\def\zz{\mathbf{z}}
\def\ss{\mathbf{s}}
\def\hzz{\hat{\mathbf{z}}}
\def\hxx{\hat{\mathbf{x}}}
\def\SS{\mathbf{S}}
\icmltitlerunning{Learning General Causal Structures with Hidden Dynamic Process for Climate Analysis}
\begin{document}

\twocolumn[
  \icmltitle{Learning General Causal Structures with \\ Hidden Dynamic Process for Climate Analysis}



  \icmlsetsymbol{equal}{*}
  \icmlsetsymbol{equaladv}{$\dagger$}

  \begin{icmlauthorlist}
    \icmlauthor{Minghao Fu}{mbzuai,cmu}
    \icmlauthor{Biwei Huang}{ucsd}
    \icmlauthor{Zijian Li}{mbzuai,cmu}
    \icmlauthor{Yujia Zheng}{cmu}
    \icmlauthor{Ignavier Ng}{cmu}
    \icmlauthor{Guangyi Chen}{mbzuai,cmu}\\
    \icmlauthor{Yingyao Hu}{equaladv,jhu}
    \icmlauthor{Kun Zhang}{equaladv,mbzuai,cmu}
  \end{icmlauthorlist}

  \icmlaffiliation{mbzuai}{Mohamed bin Zayed University of Artificial Intelligence, Abu Dhabi, UAE}
  \icmlaffiliation{cmu}{Carnegie Mellon University, Pittsburgh, PA, USA}
  \icmlaffiliation{ucsd}{University of California San Diego, La Jolla, CA, USA}
  \icmlaffiliation{jhu}{Johns Hopkins University, Baltimore, MD, USA}

  \icmlcorrespondingauthor{Minghao Fu}{minghao.fu@mbzuai.ac.ae}

  \icmlkeywords{Machine Learning, ICML}

  \vskip 0.3in
]



\printAffiliationsAndNotice{\textsuperscript{$\dagger$}Equal advising}

\begin{abstract}


Understanding climate dynamics requires going beyond correlations in observational data to uncover the underlying causal process. Latent drivers such as atmospheric processes play a central role in temporal dynamics, while direct causal influences also exist among geographically proximate observed variables. Traditional Causal Representation Learning (CRL) typically focuses on latent factors but overlooks such observable-to-observable causal relations, which limits its applicability to climate analysis. In this paper, we introduce a unified framework that jointly uncovers (i) causal relations among observed variables and (ii) latent driving forces together with their interactions. We establish conditions under which both the hidden dynamic process and the causal structure among observed variables are simultaneously identifiable from time-series data, and our guarantees continue to hold in the nonparametric setting through contextual information that recovers latent variables and causal relations. Building on these insights, we propose CaDRe (\textbf{Ca}usal \textbf{D}iscovery and \textbf{Re}presentation learning), a time-series generative model with structural constraints that integrates CRL and causal discovery. Experiments on synthetic datasets validate our theoretical results. On real-world climate datasets, CaDRe delivers competitive forecasting accuracy and recovers visualized causal graphs aligned with domain expertise, thereby offering interpretable insights into climate systems. Code is available at \url{https://github.com/MinghaoFu/CaDRe}.

\end{abstract}

\section{Introduction}
\begin{figure*}[t]
    \centering
    \includegraphics[width=0.9\linewidth]{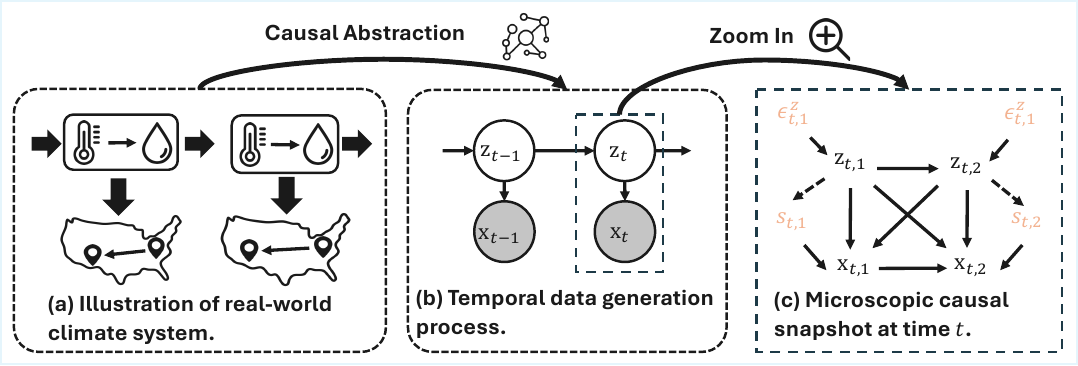}
    \vspace{-1mm}
    \caption{From climate system to causal graph. $\xx_t$ represent observed data and $\zz_t$ denotes unobserved variables behind $\xx_t$, $\epsilon^z_t$ denotes the stochasticity in latent causal process, and $s_t$ denotes the noise variable varying with $\mathbf{z}_t$, \eg, human activities.
    } 
    \label{fig:climate-system}
    \vspace{-4mm}
\end{figure*}
Understanding the causal structure of climate systems is fundamental not only to scientific reasoning~\citep{runge2019inferring}, but also to reliable modeling and prediction. Given the observed data with $d_x$ variables: $\xx_t = [x_{t,1}, \ldots, x_{t, d_x}]$, our goal is twofold: (i) to discover the underlying latent variables $\mathbf{z}_t = [z_{t,1}, \ldots, z_{t,d_z}]$ and their temporal interactions, and (ii) to identify causal relations among observed variables. To better understand this problem, we describe it using a causal modeling perspective. As depicted in Figure~\ref{fig:climate-system}, latent drivers $\zz_t$, such as pressure and precipitation~\citep{chen1995effects}, are not directly measured but significantly influence the observed dynamics. These latent processes evolve jointly and stochastically, exhibiting both \textit{instantaneous} and \textit{time-lagged} causal dependencies~\citep{lucarini2014mathematical, rolnick2022tackling}. They govern observable quantities $\xx_t$, such as temperature, which reflect underlying dynamics and exhibit spatial interactions through emergent weather patterns, like wind circulation systems.

Identifying these underlying hidden variables and temporal relations is the central objective of Causal Representation Learning (CRL)~\citep{scholkopf2021toward}. Recent advances in identifiability theory and practical algorithm design fall under the framework of nonlinear Independent Component Analysis (ICA). These approaches typically rely on auxiliary variables~\citep{hyvarinen2016unsupervised, hyvarinen2017nonlinear, hyvarinen2023nonlinear, yao2022temporally}, sparsity~\citep{lachapelle2022disentanglement, zheng2022identifiability, zheng2023generalizing, lachapelle2024nonparametric, brouillard2024causal}, or restricted generative functions~\citep{gresele2021independent}, and generally assume a \textit{noise-free} and \textit{invertible} generation from $\zz_t$ to $\xx_t$, in order to \textit{directly} recover latent space. However, climatic measurements exhibit both observational dependencies and stochastic noise, violating these assumptions and limiting the applicability of existing CRL approaches.  

This problem can also be cast as the problem of Causal Discovery (CD)~\citep{spirtes2001causation,pearl2009causality} in the presence of latent processes.
CD often relies on parametric models, such as linear non-Gaussianity~\citep{shimizu2006linear}, nonlinear additive~\citep{hoyer2008nonlinear, lachapelle2019gradient}, post-nonlinear models~\citep{zhang2012identifiability}, as well as nonparametric methods with~\citep{huang2020causal,reizinger2023jacobian, monti2020causal} or without auxiliary variables~\citep{spirtes1991algorithm, zheng2023generalized, zhang2012kernel}. However, generally speaking, they cannot identify latent variables, their interrelations, and the causal influence on observed variables. For example, Fast Causal Inference (FCI) algorithm~\citep{spirtes1991algorithm} produces asymptotically correct results in the presence of latent confounders by using conditional independence relations, but its result is often not informative enough, \eg, it cannot recover causally-related latent variables. 

The above underscores the need for a unified framework capable of modeling both the causal relations among the observed variables and latent dynamic processes inherent to real-world climate systems. We understand the climate system through a causal lens and establish the identifiability guarantees for jointly recovering latent dynamics and causal graphs among observations. Intuitively, the temporal structure enables leveraging contextual observable information to identify latent factors, while the inferred latent dynamics, in turn, modulate how that observable-level causal graphs evolve. We instantiate this insight in a state-space Variational AutoEncoder (VAE), which conducts nonparametric \textbf{Ca}usal \textbf{D}iscovery and \textbf{Re}presentation learning (\textbf{CaDRe}). 

CaDRe employs parallel flow-based priors to \textit{learn independent components} to reflect structural dependencies, and introduces gradient-based structural penalties on both latent transitions and decoders to ensure identifiability. Extensive synthetic experiments on the identification of latent representation learning and causal discovery validate our theoretical guarantees. On real-world climate data, CaDRe achieves competitive forecasting accuracy, indicating the effectiveness of the learned temporal process. Moreover, the visualized causal graphs and latent variables provide physical interpretations aligned with known scientific phenomena, and they further reveal structural patterns that may inspire new hypotheses in climate science.

\paragraph{Conflict of Interest Disclosure.} The authors declare no financial or other substantive conflicts of interest related to this work. All datasets used in this paper are publicly available climate benchmarks, and no author's employer is responsible for the development of the models, methods, or systems evaluated in this work.
\section{Problem Setup}
\begin{table*}[t]
\centering
\caption{Key notations used in the CaDRe and brief explanations.}
\renewcommand{\arraystretch}{1.3}
\setlength{\tabcolsep}{6pt}
\resizebox{\textwidth}{!}{
\begin{tabular}{|c|l|c|l|c|l|}
\hline
\textbf{Symbol} & \textbf{Explanation} & \textbf{Symbol} & \textbf{Explanation} & \textbf{Symbol} & \textbf{Explanation} \\
\hline
$\xx_t$ & observed variables at time $t$ & $\zz_t$ & latent variables at time $t$ & $\mathbf{s}_t$ &  observation noise dependent on $\zz_t$ \\
\hline
$\boldsymbol{\epsilon}_{\xx_t}$ & independent noise of observations & $\boldsymbol{\epsilon}_{\zz_t}$ & independent latent noise & $\mathcal{X}_t$ & support of observed variables \\
\hline
$\mathcal{Z}_t$ & support of latent variables & $g(\cdot)$ & generating function from $(\zz_t, \mathbf{s}_t)$ to $\xx_t$ & $r(\cdot)$ & latent transition function from $\zz_{t-1}$ to $\zz_t$ \\
\hline
$m(\cdot)$ & mixing function from $(\zz_t, \mathbf{s}_t)$ to $\xx_t$ & $h_z(\cdot)$ & invertible transformation from $\zz_t$ to $\hat{\zz}_t$ & $\mathbf{J}_g(\xx_t)$ & Jacobian for observed causal DAG \\
\hline
$\mathbf{J}_r(\zz_t)$ & Jacobian for latent instantaneous DAG & $\mathbf{J}_m(\mathbf{s}_t)$ & Jacobian for mixing structure & $\text{supp}(\cdot)$ & support matrix of Jacobian \\
\hline
\end{tabular}
}
\end{table*}
\textbf{Technical Notations.} We formalize the climate system using the following variable/function notations. We observe a time-series of observed variables $\mathbf{X} = [\xx_1, \xx_2, \cdots, \xx_T]$, whereas their underlying factors $\mathbf{Z} = [\zz_1, \zz_2, \cdots, \zz_T]$ are unobservable. Regarding the system in one time-step, as depicted in Figure~\ref{fig:climate-system}, it consists of observed variables $\mathbf{x}_t := [x_{t,i}]_{i \in \mathcal{I}}$ with index set $\mathcal{I} = \{1, 2, \ldots, d_x\}$, and latent variables $\mathbf{z}_t := [z_{t,j}]_{j \in \mathcal{J}}$ indexed by $\mathcal{J} = \{1, 2, \ldots, d_z\}$. Let $\mathbf{pa}(\cdot)$ denote the parent variables, $\mathbf{pa}_{O}(\cdot)$ refer to observable parents, and $\mathbf{pa}_{L}(\cdot)$ indicate the latent parents. In particular, $\mathbf{pa}_L(\cdot)$ comprises latent variables from both the current and previous time step. And the hat notation, \eg, $\hat{\mathbf{x}}_t$, denotes estimated variables or functions. 
\textbf{Data Generating Process.}\ We translate how the dynamic climate system evolves to the following Structural Equation Model (SEM)~\citep{pearl2009causality} at each discrete time step:
\begin{equation} \label{equ:dgp}
\small
\begin{split}
    x_{t,i} &= \underbrace{g_{i}(\mathbf{pa}_{O}(x_{t,i}), \mathbf{pa}_L(x_{t,i}), s_{t,i})}_{\text{effects from } \xx_t \text{ and } \zz_t}, \quad \\
    z_{t,j} &= \underbrace{f_j(\mathbf{pa}_L(z_{t,j}), \epsilon^z_{t,j})}_{\text{effects from $\zz_{t-1}$ and $\zz_{t}$}}, \quad 
    \underbrace{s_{t,i} = g_{s_i}(\mathbf{z}_t, \epsilon^x_{t,i})}_{\text{noise conditioned on $\zz_t$}},
\end{split}
\end{equation}
where $g_i$ and $f_j$ are differentiable functions, and noise terms $\epsilon^z_{t,j} \sim p_{\epsilon_{z_j}}, \epsilon^x_{t,i} \sim p_{\epsilon_{x_i}}$ are mutually independent for $\mathcal{I}$ and $\mathcal{J}$. As discussed in the introduction, the observed variable $x_{t,i}$ may be influenced by other observed components $\mathbf{x}_{t, \setminus i}$ and the latent variables $\mathbf{z}_t$. For example, temperature in a specific region may be governed by latent drivers such as solar radiation, and also be affected by neighboring regions through heat transfer. The endogenous mediator or nonstationary noise $s_{t,i}$, depending on latent driving forces $\zz_t$, is designed to capture dynamic uncertainties, \eg, perturbations introduced by CO$_2$~\citep{stips2016causal}. The latent variable $z_{t,j}$ evolves according to both instantaneous interactions with other components $\zz_{t, \setminus j}$ and time-lagged dependencies from the previous step $\zz_{t-1}$. 
Aiming at reliably discovering causal graphs, we adopt ~\citet{spirtes2001causation}'s
\setlength{\parskip}{0pt}
\begin{assumption}\label{con:faith dag} 
    The distribution of $(\mathbf{X}, \mathbf{Z})$ is Markov and faithful to a Directed Acyclic Graph (DAG). 
\end{assumption}

\section{Identification Theory}
\paragraph{Overview of Theory.} Given the above definitions and goals, we establish the identifiability of the latent space in Theorem~\ref{thm:blk idn}, and then identify the latent causal process in \Cref{thm:zijian 1} under sparsity within the latent temporal process. To leverage those recovered latent driving, in Theorem~\ref{thm:ext ica}, we draw a connection between the SEM and nonlinear ICA with latent variables, which are shown to describe the same data generating process. It nourishes a \textit{functional equivalence} in Theorem~\ref{thm:func-equ} for computing the causal graphs through the mixing structure of ICA. Then, leveraging the cross-derivative condition~\citep{lin1997factorizing}, Theorem~\ref{thm:ident B} subsequently identifies causal graphs over observed variables in the spirit of identifiable nonlinear ICA with internal temporal shift and the DAG constraints.
\subsection{Latent Space Recovery and Latent Variables Identification}
In this section, we aim to characterize the relationships between ground truth $\zz_t$ and estimated $\hzz_t$. We consider, without loss of generality, a first-order Markov structure, in which three consecutive observations $\{ \xx_{t-1}, \xx_t, \xx_{t+1}\}$ are used as contextual information. The generalization to higher-order Markov structures is discussed in Appendix~\ref{app:time-lag-iden}. To formalize the stochastic generation process, we introduce an operator $L$~\citep{dunford1988linear} to represent distribution-level transformations, that is, how one probability distribution is pushed forward to another. Given two random variables $a$ and $b$ with supports $\mathcal{A}$ and $\mathcal{B}$ respectively, the transformation $p_a \mapsto p_b$ is formalized as:
\begin{equation} \label{equ:lin op def}
\small
    p_b = L_{b \mid a} \circ p_a, \text{ where  } L_{b \mid a} \circ p_a \coloneqq \int_{\mathcal{A}} p_{b \mid a}(\cdot \mid a) p_a(a) da.
\end{equation}
For example, operators $L_{\xx_{t+1} \mid \zz_t}$ and $L_{\xx_{t-1} \mid \xx_{t+1}}$ represent the distributional transformations $p_{\zz_t} \mapsto p_{\xx_{t+1}}$ and $p_{\xx_{t+1}} \mapsto p_{\xx_{t-1}}$, respectively.
With this operator in hand, we turn to addressing the problem of "causally-related" observed variables in recovering the latent space. When the causal graph is a DAG, information propagates along causal pathways without being trapped in a \textit{self-loop}. In other words, causal influence can be traced back to its source by following the reverse direction of the DAG through the "short reaction lag"~\citep{fisher1970correspondence}. Consequently, the distributional transformation from sources $\mathbf{s}_t$ to observations $\mathbf{x}_t$ in the generative process must be \textit{injective}. We formalize it as:
\begin{lemma}(Injective DAG Operator)\label{thm:sub inj}
    Under Assumption~\ref{con:faith dag}, $L_{\mathbf{x}_t \mid \mathbf{s}_t}$ is injective for all $t \in \mathcal{T}$.
\end{lemma}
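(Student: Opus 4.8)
The plan is to reduce the operator statement to an almost-everywhere invertibility statement about a \emph{triangular} generative map, and for that the acyclicity in Assumption~\ref{con:faith dag} is exactly what is needed. Since the subgraph of the DAG induced on $\{x_{t,1},\dots,x_{t,d_x}\}$ is itself acyclic, fix a topological order and relabel so that $\mathbf{pa}_O(x_{t,i})\subseteq\{x_{t,1},\dots,x_{t,i-1}\}$. Freezing the latent parents, \eqref{equ:dgp} then writes $\mathbf{x}_t$ as a lower-triangular transform of $\mathbf{s}_t$: $x_{t,i}=g_i\!\left(x_{t,1},\dots,x_{t,i-1},\mathbf{pa}_L(x_{t,i}),s_{t,i}\right)$. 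This is the analytic meaning of the ``short reaction lag'' picture invoked above: causal influence runs strictly forward in the order, so whatever is injected through a coordinate of $\mathbf{s}_t$ cannot be annihilated by a cycle and can be peeled back off recursively.

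Next I would establish that this triangular map is injective in $\mathbf{s}_t$ (for fixed latent parents). Its Jacobian $\partial\mathbf{x}_t/\partial\mathbf{s}_t$ is lower triangular with diagonal entries $\partial g_i/\partial s_{t,i}$, so $\det(\partial\mathbf{x}_t/\partial\mathbf{s}_t)=\prod_i \partial g_i/\partial s_{t,i}$. Because $s_{t,i}$ is a parent of $x_{t,i}$, the map $g_i$ genuinely depends on its last argument; together with faithfulness, which forbids the non-generic cancellation that would make $x_{t,i}\perp s_{t,i}$, and differentiability of $g_i$, the diagonal entries are nonzero almost everywhere (if needed I would phrase this as a mild post-nonlinear / strict-monotonicity regularity on the noise channel, consistent with \eqref{equ:dgp} where $s_{t,i}$ enters last). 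Hence $\mathbf{s}_t\mapsto\mathbf{x}_t$ is invertible coordinate by coordinate in the topological order. Passing to distributions, $L_{\mathbf{x}_t\mid\mathbf{s}_t}$ is the linear extension of the pushforward $p_{\mathbf{s}_t}\mapsto p_{\mathbf{x}_t}$, and a pushforward along an (a.e.) invertible map is injective on signed densities by change of variables: from $p_{\mathbf{x}_t}$ one reconstructs $p_{\mathbf{s}_t}$ via the inverse map weighted by the Jacobian factor.

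The step I expect to be the real obstacle is that $\mathbf{x}_t$ is \emph{not} a function of $\mathbf{s}_t$ alone: the latent drivers $\mathbf{z}_t$ enter through $\mathbf{pa}_L(x_{t,i})$, and moreover $\mathbf{s}_t$ is itself generated from $\mathbf{z}_t$ via $g_{s_i}$, so conditioning on $\mathbf{s}_t$ already carries information about $\mathbf{z}_t$ and marginalizing $\mathbf{z}_t$ mixes the fibres of the triangular map. I would handle this by running the change-of-variables argument fibrewise --- conditionally on $\mathbf{z}_t$ the triangular inversion above is exact --- and then verifying that integrating against $p_{\mathbf{z}_t\mid\mathbf{s}_t}$ does not create a kernel for the resulting integral operator; here I would lean on faithfulness (no hidden cancellation between the direct $\mathbf{s}_t \to \mathbf{x}_t$ channel and the indirect $\mathbf{s}_t$--$\mathbf{z}_t$--$\mathbf{x}_t$ channel) together with mutual independence of the exogenous noises $\epsilon^x_{t,i}$. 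Making this measure-theoretic bookkeeping rigorous --- essentially a nonparametric completeness argument layered on top of the purely graphical triangularization --- is the crux; the DAG part itself is routine.
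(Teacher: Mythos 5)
Your plan is, at its core, the same argument the paper uses: acyclicity of the observed subgraph makes the map $\mathbf{s}_t \mapsto \mathbf{x}_t$ a (triangularizable) deterministic bijection onto its image, the conditional kernel $p(\mathbf{x}_t \mid \mathbf{s}_t)$ is therefore a Dirac delta $\delta(\mathbf{x}_t - m(\mathbf{s}_t))$ with pairwise-disjoint supports across distinct values of $\mathbf{s}_t$, and an integral operator with such a kernel has trivial null space, i.e.\ is complete/injective. The paper gets the injectivity of $m$ in $\mathbf{s}_t$ from Corollary~\ref{cor:DAG-inv} (invertibility of $\mathbf{J}_m(\mathbf{s}_t)$, itself a consequence of the permuted lower-triangular form of $\mathbf{I}_{d_x}-\mathbf{J}_g(\mathbf{x}_t)$), whereas you obtain it by recursive peeling along a topological order with strict monotonicity in the noise channel. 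Your route is arguably cleaner on one point: a nonvanishing Jacobian determinant only gives local invertibility, so the paper's step from ``$\mathbf{J}_m(\mathbf{s}_t)$ invertible'' to ``$m(\mathbf{s}_t^{(1)}) \neq m(\mathbf{s}_t^{(2)})$ for all distinct pairs'' is really justified by exactly the triangular, coordinate-by-coordinate inversion you spell out.

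On the issue you flag as the crux --- that $\mathbf{x}_t$ also depends on $\mathbf{z}_t$, so that marginalizing $\mathbf{z}_t$ turns $p(\mathbf{x}_t\mid\mathbf{s}_t)$ into a mixture of deltas over the fibre $p(\mathbf{z}_t\mid\mathbf{s}_t)$ and could in principle create a nontrivial kernel --- the paper's proof does not resolve this either: it simply writes $p(\mathbf{x}_t\mid\mathbf{s}_t)=\delta(\mathbf{x}_t-m(\mathbf{s}_t))$, i.e.\ it works as if the generation were deterministic given $\mathbf{s}_t$ alone (equivalently, conditionally on a fixed $\mathbf{z}_t$). So you have not missed an idea that the paper supplies; you have correctly identified a gap that the paper's own argument papers over. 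If you want a complete proof in the stated generality you would indeed need the fibrewise argument plus a no-cancellation condition on the $\mathbf{z}_t$-mixture, but to match the paper it suffices to read $L_{\mathbf{x}_t\mid\mathbf{s}_t}$ as the conditional (on $\mathbf{z}_t$) pushforward and apply the disjoint-support/completeness step you already have.
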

This result implies that the nonlinear causal DAG over $\xx_t$ does not disturb the recovery of the latent space distribution. Building on this, we address a more fundamental challenge: the latent variable $\mathbf{z}_t$ cannot be recovered from a single noisy observation $\mathbf{x}_t$, as the stochasticity makes the value-level mapping ill-posed. Instead of identifying values directly, we first formulate identifiability at the distributional level, and then seek the value-level identifiability from it for the subsequent component-wise interpretability. We found that, adjacent observations $\mathbf{x}_{t-1}$ and $\mathbf{x}_{t+1}$ contain non-trivial information about $\zz_t$ if they exhibit \textit{minimal distributional changes}. To instantiate it, we provide \textit{nonparametric} identifiability of the latent submanifold based on distributional variations captured by contextual measurements.
\newcommand{\TheoremRecover}{
    Suppose observed variables and hidden variables follow the data generating process in ~\Cref{equ:dgp}, and estimated observations $\{ \hxx_{t-1}, \hxx_t, \hxx_{t+1} \}$ match the true joint distribution of $\{ \xx_{t-1}, \xx_t, \xx_{t+1} \}$. The following assumptions are imposed:
\begin{itemize}[leftmargin=*, label={}]
    \item \label{asp:bounded density} A1 (\underline{Computable Probability:}) The joint, marginal, and conditional distributions of $(\xx_t, \zz_t)$ are all bounded and continuous.
    \item \label{asp:linear operator} A2 (\underline{Contextual Variability:}) The operators $L_{\xx_{t+1} \mid \zz_t}$ and $L_{\xx_{t-1} \mid \xx_{t+1}}$ are injective and bounded. 
    \item \label{asp:nonredundant} A3 (\underline{Latent Drift:}) For any $\zz^{(1)}_{t}, \zz^{(2)}_{t} \in \mathcal{Z}_t$ where $\zz^{(1)}_{t} \neq \zz^{(2)}_{t}$, we have $p (\xx_t|\zz^{(1)}_t) \neq p (\xx_t|\zz^{(2)}_t)$. 
    \item \label{asp:smooth} A4 (\underline{Differentiability:}) There exists a functional $F$ such that $F\left[ p_{\xx_t \mid \zz_t}(\cdot \mid \zz_t) \right] = h_z(\zz_t)$ for all $\zz_t \in \mathcal{Z}_t$, where $h_z$ is differentiable.
\end{itemize}
Then we have 
$
    \hat{\zz}_t = h_z(\zz_t)
$,
where $h_z: \mathbb{R}^{d_z} \rightarrow \mathbb{R}^{d_z}$ is an invertible and differentiable function.
}

\begin{theorem}\textbf{(Identifiability of Latent Space)}\label{thm:blk idn}
    \TheoremRecover
\end{theorem}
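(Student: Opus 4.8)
The plan is to adapt the operator-diagonalization technique used for nonparametric identification in latent-variable / measurement-error models to this temporal causal setting, treating the triple $\{\xx_{t-1},\xx_t,\xx_{t+1}\}$ as, respectively, an instrument, the variable to be ``sliced,'' and a dependent repeated measurement of the hidden state $\zz_t$. First I would extract from Assumption~\ref{con:faith dag} and the first-order Markov structure the conditional independencies $\xx_{t+1}\perp(\xx_{t-1},\xx_t)\mid\zz_t$ and $\xx_{t-1}\perp\xx_t\mid\zz_t$ (the second requires mild care about which latents load on $\xx_t$ in \Cref{equ:dgp}, and is where Lemma~\ref{thm:sub inj} is convenient, since the observable DAG does not collapse the source-to-observation map). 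These yield the integral identity $p(\xx_{t-1},\xx_t,\xx_{t+1})=\int p(\xx_{t-1}\mid\zz_t)\,p(\xx_t\mid\zz_t)\,p(\xx_{t+1},\zz_t)\,d\zz_t$. Under A1 every density here is bounded and continuous, hence induces a bounded integral operator; conditioning on a slice $\xx_t=\chi$ and rewriting in the operator notation of \Cref{equ:lin op def}, this reads $L_{\xx_{t-1},\,\xx_t=\chi\mid\xx_{t+1}}=L_{\xx_{t-1}\mid\zz_t}\,D_{\xx_t=\chi\mid\zz_t}\,L_{\zz_t\mid\xx_{t+1}}$, while dropping the slice gives $L_{\xx_{t-1}\mid\xx_{t+1}}=L_{\xx_{t-1}\mid\zz_t}\,L_{\zz_t\mid\xx_{t+1}}$, where $D_{\xx_t=\chi\mid\zz_t}$ denotes the multiplication (``diagonal'') operator with symbol $\zz_t\mapsto p(\xx_t=\chi\mid\zz_t)$.

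Next I would use A2 — injectivity and boundedness of $L_{\xx_{t+1}\mid\zz_t}$ and $L_{\xx_{t-1}\mid\xx_{t+1}}$ — together with A1 and Lemma~\ref{thm:sub inj} to conclude that $L_{\xx_{t-1}\mid\zz_t}$ and $L_{\zz_t\mid\xx_{t+1}}$, and hence $L_{\xx_{t-1}\mid\xx_{t+1}}$, are boundedly invertible on the relevant spaces. Then the purely observable operator $L_{\xx_{t-1},\,\xx_t=\chi\mid\xx_{t+1}}\,\big(L_{\xx_{t-1}\mid\xx_{t+1}}\big)^{-1}=L_{\xx_{t-1}\mid\zz_t}\,D_{\xx_t=\chi\mid\zz_t}\,\big(L_{\xx_{t-1}\mid\zz_t}\big)^{-1}$ is diagonalizable, with eigenfunctions the ``columns'' $\xx_{t-1}\mapsto p(\xx_{t-1}\mid\zz_t)$ of $L_{\xx_{t-1}\mid\zz_t}$ and eigenvalues $p(\xx_t=\chi\mid\zz_t)$. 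The family of these operators over all slices $\chi$ mutually commutes, hence is simultaneously diagonalized by the same eigenfunctions, and A3 guarantees that the eigenvalue profiles $\chi\mapsto p(\xx_t=\chi\mid\zz_t)$ separate distinct $\zz_t$; therefore the eigendecomposition is unique up to the inevitable relabeling of the latent index, and normalizing each eigenfunction to integrate to $1$ (it is a density) fixes the residual scaling. This identifies the unordered families $\{p(\xx_t\mid\zz_t=v)\}_v$ and $\{p(\xx_{t-1}\mid\zz_t=v)\}_v$ from the joint law of $\{\xx_{t-1},\xx_t,\xx_{t+1}\}$; since the estimated model reproduces that law, its latent indexes the same family, so there is a bijection $\phi$ between the estimated and true latent supports with $\hat p(\xx_t\mid\hzz_t)=p(\xx_t\mid\phi(\hzz_t))$.

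It remains to promote this to the variable-level statement. By A4 there is a functional $F$ with $F\big[p_{\xx_t\mid\zz_t}(\cdot\mid\zz_t)\big]=h_z(\zz_t)$ and $h_z$ differentiable; taking, without loss of generality, the estimated latent to be normalized so that $F$ likewise recovers $\hzz_t$ from $\hat p(\xx_t\mid\hzz_t)$, the density equality $\hat p(\xx_t\mid\hzz_t)=p(\xx_t\mid\zz_t)$ for the matched pair $\zz_t=\phi(\hzz_t)$ gives $\hzz_t=h_z(\zz_t)$. Finally, A3 makes $\zz_t\mapsto p(\cdot\mid\zz_t)$, hence $h_z$, injective; since $h_z$ is also a surjection between the $d_z$-dimensional supports of $\zz_t$ and $\hzz_t$, it is invertible as a map $\R^{d_z}\to\R^{d_z}$, completing the proof.

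The step I expect to be the main obstacle is the uniqueness of the eigendecomposition of a generically infinite-dimensional, non-self-adjoint operator: ruling out degenerate eigenvalues at an individual slice $\chi$ — which forces the simultaneous-diagonalization argument across all $\chi$ and is precisely where A3 must be turned from a statement about whole conditional densities into separation of eigenvalue profiles — and rigorously establishing that the inverses $\big(L_{\xx_{t-1}\mid\xx_{t+1}}\big)^{-1}$ and $\big(L_{\xx_{t-1}\mid\zz_t}\big)^{-1}$ exist and are bounded on the chosen function spaces, for which A1, A2, and Lemma~\ref{thm:sub inj} must be combined carefully. Deriving the conditional-independence factorization and passing from identified densities to the differentiable, invertible $h_z$ are, by comparison, routine.
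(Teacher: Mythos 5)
Your proposal is correct and follows essentially the same route as the paper's proof: the Hu--Schennach-style operator factorization of the joint law of $\{\xx_{t-1},\xx_t,\xx_{t+1}\}$ via the conditional independencies given $\zz_t$, the spectral decomposition $L\,D\,L^{-1}$ whose eigenvalues are the slices $p(\xx_t\mid\zz_t)$ and whose eigenfunctions are normalized densities (fixing the scale), followed by A3 to make the relabeling of conditioning values injective and A4 to obtain the differentiable $h_z$ with $\hzz_t=h_z(\zz_t)$. The only differences are cosmetic --- you swap the roles of $\xx_{t-1}$ and $\xx_{t+1}$ relative to the paper (so you must derive injectivity of $L_{\xx_{t-1}\mid\zz_t}$ from A2 rather than use the assumed injectivity of $L_{\xx_{t+1}\mid\zz_t}$ directly) and you are somewhat more explicit than the paper about resolving eigenvalue degeneracy by simultaneously diagonalizing over all slices $\chi$.
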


\textbf{Discussion on Assumptions.} \hyperref[asp:bounded density]{A1} is a moderate condition for computable density functions. \hyperref[asp:linear operator]{A2} introduces sufficient distributional variability, formalized via injectivity at the density level. \hyperref[asp:nonredundant]{A3} ensures that distinct values of $\mathbf{z}_t$ induce distinct conditionals $p(\mathbf{x}_t \mid \mathbf{z}_t)$, which is violated only when two values of $\mathbf{z}_t$ yield identical distributions. \hyperref[asp:smooth]{A4} requires that the mapping from $\mathbf{z}_t$ to $p(\mathbf{x}_t \mid \mathbf{z}_t)$ is differentiable, a condition naturally satisfied by models based on differentiable neural networks, such as VAEs. Please refer to Appendix~\ref{prf:blk idn} for a detailed elaboration of the assumptions.

\textbf{Proof Sketch and Contributions.} The complete proof is deferred to Appendix~\ref{sec:proof}. Prior nonparametric identification results based on a related eigendecomposition argument~\citep{hu2008instrumental, carroll2010identification} require partial knowledge of the generative function and recover the latent only at the distribution level $p_{\hat{\mathbf{z}}_t} = p_{\mathbf{z}_t}$. Our approach drops that requirement: Assumption~\hyperref[asp:smooth]{A4} only asks that a differentiable summary functional exists, which is satisfied by any differentiable encoder. This change lifts the conclusion to value-level identifiability $\hat{\zz}_t = h_z(\zz_t)$, the prerequisite for the downstream Theorems~\ref{thm:func-equ} and~\ref{thm:ident B}. We begin by proving the uniqueness of the posterior collection $\{p(\mathbf{x}_t \mid \hat{\mathbf{z}}_t)\}_{\hat{\mathcal{Z}}_t}$, where the unordered set unveils the existence of a relabeling function $h_z$ on the conditioning variables. ~\hyperref[asp:nonredundant]{A3} then ensures a one-to-one correspondence between $\mathbf{z}_t$ and the posteriors $p(\mathbf{x}_t \mid \hat{\mathbf{z}}_t)$, thereby ruling out degenerate mappings from posteriors to values.
\begin{figure}[H]
\vspace{-4.5mm}
    \centering
\includegraphics[width=0.7\linewidth]{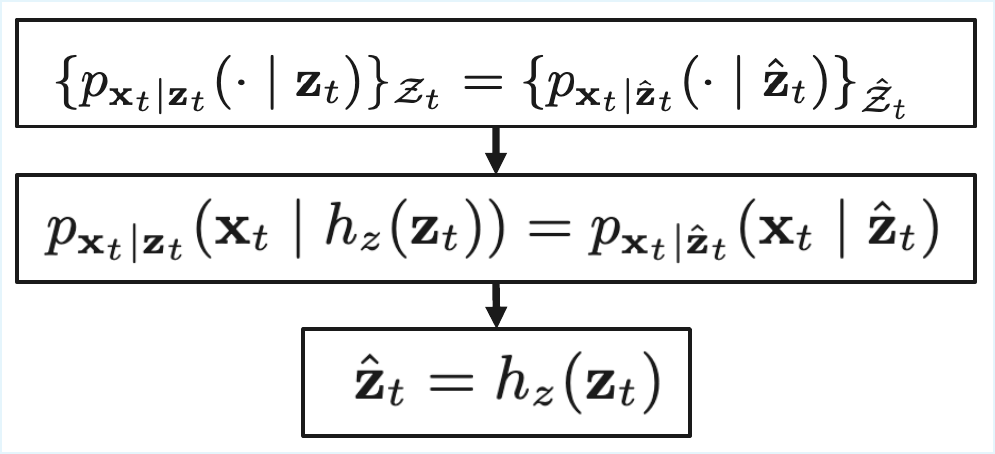}
    \vspace{-4mm}
\end{figure}
Finally, \hyperref[asp:smooth]{A4} pins down the $h_z$ to be differentiable. Unlike nonlinear ICA, the nonparametric formulation accommodates \textit{sparse/noisy} observations, thereby improving its applicability on climate data. To enhance interpretability by ensuring that each latent component corresponds to a distinct physical variable, in Appendix~\ref{prf:z-idt}, we introduce a sparsity assumption on the latent dynamics to achieve \textit{\textbf{component-wise identifiability of latent variables and causal process}}, which is motivated by physical climate factors, \eg, solar radiation and atmospheric, exhibit localized sparse influences. 
\subsection{Nonparametric Causal Discovery with Hidden Dynamic Process} \label{sec:iden obs dag}
Building upon these results, our goal is to identify nonparametric causal graphs over $\xx_t$, even if they are modulated by a hidden dynamic process. This setting is notably more general than the identifiability guarantees in prior (standard) CD frameworks, as summarized in Table~\ref{tab:cd_related}. 
Recent works~\citep{monti2020causal, reizinger2023jacobian} extend the ICA-based CD~\citep{shimizu2006linear} to nonparametric settings via nonlinear ICA~\citep{hyvarinen2019nonlinear}. However, they are not applicable in the presence of dynamic latent confounders. To overcome this limitation, we start by establishing a connection between SEMs and nonlinear ICA in the presence of a hidden process.
\begin{lemma}(Nonlinear SEM $\shortiff$ Nonlinear ICA) \label{thm:ext ica}There exists a function $m_i$, which is differentiable w.r.t. $s_{t,i}$ and $\mathbf{x}_t$, for any fixed $s_{t,i}$ and $\zz_t$, such that the following two representations, 
    \begin{equation} \label{equ:ica=sem}
    \small
    \begin{split}
        x_{t,i} = g_{i}(\mathbf{pa}_{O}(x_{t,i}), \mathbf{pa}_{L}(x_{t,i}), s_{t,i}) \text{ and } x_{t,i} &= m_i\left( \zz_t, \ss_t \right)
    \end{split}
    \end{equation}
    describe the \underline{same data generating process}. That is, both expressions yield the same value of $x_{t,i}$.
\end{lemma}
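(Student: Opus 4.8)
The plan is to build $m_i$ by recursively substituting the observed‑variable equations of \Cref{equ:dgp} within a single time step $t$, along a topological order of the contemporaneous causal subgraph, and then to read off the differentiability and invertibility properties from the chain rule together with Lemma~\ref{thm:sub inj}.

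First I would fix $t$ and isolate the sub‑SEM for $\mathbf{x}_t$. By Assumption~\ref{con:faith dag} the joint graph over $(\mathbf{X},\mathbf{Z})$ is a DAG, so its induced subgraph on the nodes $\{x_{t,1},\dots,x_{t,d_x}\}$ is acyclic, and there is a permutation $\pi$ of $\mathcal{I}$ with $\mathbf{pa}_O(x_{t,\pi(\ell)})\subseteq\{x_{t,\pi(1)},\dots,x_{t,\pi(\ell-1)}\}$ for every $\ell$. I would also record the structural fact used implicitly: in the generating equation for $x_{t,i}$, the observed parents $\mathbf{pa}_O(x_{t,i})$ are contemporaneous observed variables and the latent parents $\mathbf{pa}_L(x_{t,i})$ are components of $\zz_t$ (this is what the underbrace ``effects from $\xx_t$ and $\zz_t$'' in \Cref{equ:dgp} encodes), so the only exogenous inputs feeding the $\mathbf{x}_t$‑block are $\zz_t$ and $\ss_t=[s_{t,1},\dots,s_{t,d_x}]$, and no substitution needs to reach past time $t$.

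Next I would induct on $\ell$ to show $x_{t,\pi(\ell)}=m_{\pi(\ell)}(\zz_t,\ss_t)$ for a differentiable $m_{\pi(\ell)}$. In the base case $\mathbf{pa}_O(x_{t,\pi(1)})=\emptyset$, so $x_{t,\pi(1)}=g_{\pi(1)}(\mathbf{pa}_L(x_{t,\pi(1)}),s_{t,\pi(1)})$ is already a function of coordinates of $(\zz_t,\ss_t)$; take $m_{\pi(1)}$ to be $g_{\pi(1)}$ precomposed with the relevant coordinate projections. For the inductive step, each coordinate of $\mathbf{pa}_O(x_{t,\pi(\ell)})$ is some $x_{t,\pi(k)}$ with $k<\ell$, which by hypothesis equals $m_{\pi(k)}(\zz_t,\ss_t)$; substituting these together with the latent/noise coordinates into $g_{\pi(\ell)}$ defines $m_{\pi(\ell)}$, differentiable as a composition of the differentiable $g_{\pi(\ell)}$, the $\{m_{\pi(k)}\}_{k<\ell}$, and linear projections. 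Since $m_i$ is obtained purely by eliminating the internal observed parents, $g_i(\mathbf{pa}_O(x_{t,i}),\mathbf{pa}_L(x_{t,i}),s_{t,i})$ and $m_i(\zz_t,\ss_t)$ return the same value of $x_{t,i}$ on every realization, which is the claimed equivalence.

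For the differentiability clause I would argue as follows. Since the noise terms are mutually independent, $\zz_t$ is upstream of $\ss_t$, and the parents of $x_{t,\pi(\ell)}$ are assembled only from $s_{t,\pi(k)}$ with $k<\ell$, the source $s_{t,i}$ enters $x_{t,i}$ only through the explicit last slot of $g_i$, so $\partial m_i/\partial s_{t,i}=\partial g_i/\partial s_{t,i}$ exists. Fixing $\zz_t$ and assembling the map $\Phi_{\zz_t}\colon\ss_t\mapsto\mathbf{x}_t$ from $\{m_j\}$, the topological ordering makes the Jacobian of $\Phi_{\zz_t}$, up to the permutation $\pi$, lower triangular with diagonal entries $\partial g_j/\partial s_{t,j}$; invoking Lemma~\ref{thm:sub inj}, injectivity of $L_{\mathbf{x}_t\mid\ss_t}$ — which for the deterministic map $\Phi_{\zz_t}$ (Dirac conditionals) is equivalent to a.e.\ injectivity of $\Phi_{\zz_t}$ — rules out a vanishing diagonal, so $\Phi_{\zz_t}$ is a local diffeomorphism and $s_{t,i}$ is a differentiable function of $(\zz_t,\mathbf{x}_t)$ by the inverse function theorem. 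I expect this last step to be the main obstacle: carefully stating which variables are conditioned on when applying Lemma~\ref{thm:sub inj}, and justifying that its distribution‑level injectivity transfers to pointwise invertibility of the triangular mixing map — the remainder of the argument is bookkeeping with the chain rule.
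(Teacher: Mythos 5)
Your construction is essentially the paper's own proof: both build $m_i$ by recursively substituting the contemporaneous observed-parent equations along a topological/causal order of the DAG over $\xx_t$ (guaranteed acyclic by Assumption~\ref{con:faith dag}), obtaining differentiability as a composition of differentiable maps and concluding value-equality because the substitution is purely an elimination of internal nodes. Your final paragraph invoking Lemma~\ref{thm:sub inj} and the inverse function theorem is not needed for this lemma (the paper defers invertibility of the mixing Jacobian to Corollary~\ref{cor:DAG-inv}) and, since the paper's proof of Lemma~\ref{thm:sub inj} itself depends on results downstream of this lemma, it is safer to omit it here to avoid circularity.
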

\noindent

\begin{figure}[t]
\centering
    \begin{minipage}{0.2\textwidth} 
        \centering
        \begin{tikzpicture}[scale=.2, line width=0.4pt, inner sep=0.2mm, shorten >=.1pt, shorten <=.1pt]
            \node[fill=gray!20,inner sep=3pt] (x1) {$x_{t,1}$};%
            \node[right=1cm of x1,fill=gray!20,inner sep=3pt] (x2) {$x_{t,2}$};%
            \node[above=0.75cm of x1, xshift=-0.5cm,color=orange!50] (s1) {$s_{t,1}$};%
            \node[above=0.75cm of x2, xshift=0.5cm,color=orange!50] (s2) {$s_{t,2}$};%
            \node[above=0.75cm of x1,xshift=.85cm] (c) {$\mathbf{z}_t$}; %
            \draw[-latex, dashed] (c) -- (s1);
            \draw[-latex, dashed] (c) -- (s2);
            \draw[-latex] (c) -- (x1);
            \draw[-latex] (c) -- (x2);
            \draw[-latex] (s1) -- (x1);
            \draw[-latex] (s2) -- (x2);
            \draw[-latex, gray!30, line width=0.5mm] (x2) -- (x1);
        \end{tikzpicture}
        {\small SEM}
        \label{fig:sem}
    \end{minipage}
    \hspace{0.2em} $\iff$ \hspace{-1.2em}  
    \begin{minipage}{0.2\textwidth} 
        \centering
        \begin{tikzpicture}[scale=.2, line width=0.4pt, inner sep=0.2mm, shorten >=.1pt, shorten <=.1pt]
            \node[fill=gray!20,inner sep=3pt] (x1) {$x_{t,1}$};%
            \node[right=1cm of x1,fill=gray!20,inner sep=3pt] (x2) {$x_{t,2}$};%
            \node[above=0.75cm of x1, xshift=-0.5cm,color=orange!50] (s1) {$s_{t,1}$};%
            \node[above=0.75cm of x2, xshift=0.5cm,color=orange!50] (s2) {$s_{t,2}$};%
            \node[above=0.75cm of x1,xshift=.85cm] (c) {$\mathbf{z}_t$}; %
            \draw[-latex, dashed] (c) -- (s1);
            \draw[-latex, dashed] (c) -- (s2);
            \draw[-latex] (c) -- (x1);
            \draw[-latex] (c) -- (x2);
            \draw[-latex] (s1) -- (x1);
            \draw[-latex] (s2) -- (x2);
            \draw[-latex, color=orange!50, line width=0.5mm] (s2) -- (x1);
        \end{tikzpicture}
        {\small ICA}
        \label{fig:ica}
    \end{minipage}
    \hspace{1.2em}
    \begin{minipage}{0.5\textwidth}
      \captionof{figure}{\textbf{Equivalent SEM and ICA.} The gray line in SEM denotes the influence $x_{t,2} \rightarrow x_{t,1}$ through the observation causal relation, which is equivalently represented as an indirect effect (the orange line): $s_{t,2} \dashrightarrow x_{t,1}$ in ICA, which can be decomposed into $s_{t,2} \rightarrow x_{t,2}$ and $x_{t,2} \rightarrow x_{t,1}$.} 
    \end{minipage}%
    \vspace{-4mm}
\end{figure}
Building upon this equivalence, we proceed to perform CD via the nonlinear ICA with latent variables. We begin by introducing the Jacobian matrices on this data generating process, as they serve as proxies for the (nonlinear) adjacency matrix. For all $(i, j) \in \mathcal{I} \times \mathcal{I}$, we define $[\mathbf{J}_{m}(\mathbf{s}_t)]_{i,j} = \frac{\partial x_{t,i}}{\partial s_{t,j}}$,
$
    [\mathbf{J}_{g}(\mathbf{x}_t)]_{i,j} = \frac{\partial x_{t,i}}{\partial x_{t,j}}
$,
$\mathbf{D}_{m}(\mathbf{s}_t) = \text{diag}(\frac{\partial x_{t,1}}{\partial s_{t,1}}, \frac{\partial x_{t,2}}{\partial s_{t,2}}, \ldots, \frac{\partial x_{t,d_x}}{\partial s_{t,d_x}})$, and $\mathbf{I}_{d_x}$ is the identity matrix in $\mathbb{R}^{d_x \times d_x}$. Especially, $\mathbf{J}_{m}(\mathbf{s}_t)$ corresponds to the mixing procedure of nonlinear ICA, as described on the R.H.S. of \Cref{equ:ica=sem}, and $\mathbf{J}_{g}(\mathbf{x}_t)$ signifies the causal graph over observations in the nonlinear SEM, the L.H.S. of \Cref{equ:ica=sem}, provided the faithfulness assumption outlined below holds. 
\begin{assumption}[Functional Faithfulness] \label{asp:func faith}
The causal adjacency structure among observed variables is given by the support of the Jacobian matrix $\mathbf{J}_{g}(\mathbf{x}_t)$.
\end{assumption}
This assumption implies \textit{edge minimality} in causal graphs, analogous to the structural minimality discussed in Remark 6.6 of \citet{peters2017elements} and minimality in \citet{zhang2013comparison}, which enables us to establish an equivalence between the causal graph and the ICA mixing structure as follows. 
\begin{theorem}\textbf{(Functional Equivalence)} \label{thm:func-equ} Consider the two types of data generating process described in ~\Cref{equ:ica=sem}, the following equation always holds:
    \begin{equation} \label{equ:func-equ}
    \small
        \mathbf{J}_{g}(\mathbf{x}_t)\mathbf{J}_{m}(\mathbf{s}_t) = \mathbf{J}_{m}(\mathbf{s}_t) - \mathbf{D}_{m}(\mathbf{s}_t).
    \end{equation}
\end{theorem}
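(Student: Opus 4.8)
The plan is to reduce the claimed matrix identity in \Cref{equ:func-equ} to a coordinate-wise chain-rule computation carried out in the structural-equation representation, and then use acyclicity to pin down the diagonal term. By Lemma~\ref{thm:ext ica}, the two representations in \Cref{equ:ica=sem} return the same value of $x_{t,i}$, so I may differentiate the structural form $x_{t,i} = g_i(\mathbf{pa}_{O}(x_{t,i}), \mathbf{pa}_{L}(x_{t,i}), s_{t,i})$ and read the resulting partials off as entries of $\mathbf{J}_m(\mathbf{s}_t)$. Hold $\mathbf{z}_t$ fixed throughout; since the noise variables $\epsilon^x_{t,i}$ are mutually independent, fixing $\mathbf{z}_t$ turns $\mathbf{s}_t = (s_{t,1},\dots,s_{t,d_x})$ into free coordinates, so $\partial s_{t,k}/\partial s_{t,j} = \delta_{kj}$, and the latent parents $\mathbf{pa}_{L}(x_{t,i})$ stay constant. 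The chain rule applied to $x_{t,i}=g_i(\cdot)$ with respect to $s_{t,j}$ then gives
\[
    \frac{\partial x_{t,i}}{\partial s_{t,j}} = \sum_{k \in \mathcal{I}} \frac{\partial g_i}{\partial x_{t,k}}\,\frac{\partial x_{t,k}}{\partial s_{t,j}} \;+\; \frac{\partial g_i}{\partial s_{t,i}}\,\delta_{ij},
\]
where $\partial g_i/\partial x_{t,k}$ is nonzero only when $x_{t,k}\in\mathbf{pa}_{O}(x_{t,i})$, and equals $[\mathbf{J}_g(\mathbf{x}_t)]_{i,k}$ by Assumption~\ref{asp:func faith}. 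Assembling these scalar identities over all $(i,j)\in\mathcal{I}\times\mathcal{I}$ yields $\mathbf{J}_m(\mathbf{s}_t) = \mathbf{J}_g(\mathbf{x}_t)\,\mathbf{J}_m(\mathbf{s}_t) + \widetilde{\mathbf{D}}$, where $\widetilde{\mathbf{D}} := \mathrm{diag}\!\big(\partial g_1/\partial s_{t,1},\dots,\partial g_{d_x}/\partial s_{t,d_x}\big)$ collects the direct sensitivities of each $g_i$ to its own source.

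It then remains to show $\widetilde{\mathbf{D}} = \mathbf{D}_m(\mathbf{s}_t)$, equivalently that $\mathbf{J}_g(\mathbf{x}_t)\,\mathbf{J}_m(\mathbf{s}_t)$ has zero diagonal. Here I will invoke Assumption~\ref{con:faith dag}: the causal graph over $\mathbf{x}_t$ is a DAG, so relabeling $\mathcal{I}$ in a topological order makes $\mathbf{J}_g(\mathbf{x}_t)$ strictly lower triangular (no self-loops; parents precede children). In the SEM, $s_{t,i}$ enters only the equation for $x_{t,i}$, hence it can affect $x_{t,k}$ only if $k=i$ or $x_{t,k}$ is a descendant of $x_{t,i}$; in the topological order this forces $[\mathbf{J}_m(\mathbf{s}_t)]_{k,i}=0$ for $k<i$, so $\mathbf{J}_m(\mathbf{s}_t)$ is lower triangular. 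A strictly-lower-triangular matrix times a lower-triangular one is strictly lower triangular, so $[\mathbf{J}_g(\mathbf{x}_t)\mathbf{J}_m(\mathbf{s}_t)]_{ii}=0$; taking the diagonal part of $\mathbf{J}_m = \mathbf{J}_g\mathbf{J}_m + \widetilde{\mathbf{D}}$ gives $[\mathbf{J}_m(\mathbf{s}_t)]_{ii}=[\widetilde{\mathbf{D}}]_{ii}$, i.e.\ $\widetilde{\mathbf{D}}=\mathbf{D}_m(\mathbf{s}_t)$. Substituting back and rearranging gives $\mathbf{J}_g(\mathbf{x}_t)\,\mathbf{J}_m(\mathbf{s}_t) = \mathbf{J}_m(\mathbf{s}_t) - \mathbf{D}_m(\mathbf{s}_t)$, as claimed. (A coordinate-free alternative to the triangularization: if $[\mathbf{J}_g(\mathbf{x}_t)]_{ik}[\mathbf{J}_m(\mathbf{s}_t)]_{ki}\neq 0$ for some $k$, then $x_{t,k}\to x_{t,i}$ is an edge and there is a directed path from $x_{t,i}$ to $x_{t,k}$, forming a cycle and contradicting Assumption~\ref{con:faith dag}.)

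The chain-rule bookkeeping and the matrix rearrangement are routine; the step that needs care is the diagonal identification, since $[\widetilde{\mathbf{D}}]_{ii}=\partial g_i/\partial s_{t,i}$ is the \emph{direct} effect of $s_{t,i}$ on $x_{t,i}$ whereas $[\mathbf{D}_m(\mathbf{s}_t)]_{ii}=\partial x_{t,i}/\partial s_{t,i}$ is the \emph{total} effect, and these coincide only because the observable causal graph is acyclic. One must also differentiate in the correct coordinate system --- perturbing $s_{t,j}$ with $\mathbf{z}_t$ held fixed --- so that the only arguments of $g_i$ that move are the observable parents (through their own structural equations) and, when $j=i$, $s_{t,i}$ itself; Lemma~\ref{thm:ext ica} is what licenses treating $x_{t,i}$ as a well-defined differentiable function of $(\mathbf{z}_t,\mathbf{s}_t)$ in the first place. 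As a sanity check, the identity rearranges to $(\mathbf{I}_{d_x}-\mathbf{J}_g(\mathbf{x}_t))\,\mathbf{J}_m(\mathbf{s}_t) = \mathbf{D}_m(\mathbf{s}_t)$ with $\mathbf{I}_{d_x}-\mathbf{J}_g(\mathbf{x}_t)$ invertible by nilpotency of $\mathbf{J}_g(\mathbf{x}_t)$, recovering the familiar linear-SEM relation that the total effect equals $(\mathbf{I}-\text{direct effects})^{-1}$ times the exogenous gain.
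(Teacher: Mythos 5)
Your proposal is correct and follows essentially the same route as the paper's proof: a coordinate-wise chain-rule expansion of $\partial x_{t,i}/\partial s_{t,j}$ through the observable parents, followed by using acyclicity (no self-loops and no path from $x_{t,i}$ back to its own parents) to kill the diagonal of $\mathbf{J}_g(\mathbf{x}_t)\mathbf{J}_m(\mathbf{s}_t)$. Your only addition is to make explicit that $[\widetilde{\mathbf{D}}]_{ii}=\partial g_i/\partial s_{t,i}$ (direct effect) coincides with $[\mathbf{D}_m(\mathbf{s}_t)]_{ii}=\partial x_{t,i}/\partial s_{t,i}$ (total effect) precisely because of the DAG structure --- a point the paper treats implicitly --- and your parenthetical cycle-based argument for the vanishing diagonal is exactly the paper's.
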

\textbf{Proof Sketch.} Following the depiction of the SEM, the flow of information can be traced starting from $\mathbf{x}_t$. The DAG structure ensures that the sources are the latent variables and the independent noise, implying that the data generation process conforms to a specific form of nonlinear ICA: $[\mathbf{z}_t, \boldsymbol{\epsilon}_{x_t}] \Rightarrow \mathbf{x}_t$, where $\mathbf{z}_t$ is characterized as a conditional prior (Refer to Appendix~\ref{prf:sem to ica} for details). From this formulation, we derive two corollaries: (i) the DAG structure removes the need for the invertibility assumption in nonlinear ICA, and (ii) the SEM–ICA correspondence enables efficient CD through transforming the mixing procedure, an \textit{instant inference} yielding causal graphs.
\begin{corollary}\label{cor:DAG-inv}
Under Assumption~\ref{con:faith dag}, given any $\mathbf{z}_t \in \mathcal{Z}_t$, $\mathbf{J}_{m}(\mathbf{s}_t)$ is an invertible matrix.
\end{corollary}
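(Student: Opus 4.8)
The plan is to derive the nonsingularity of $\mathbf{J}_{m}(\mathbf{s}_t)$ as a purely algebraic consequence of the Functional Equivalence identity. For the given $\mathbf{z}_t$, I would first rewrite \Cref{equ:func-equ} as
\[
    \bigl( \mathbf{I}_{d_x} - \mathbf{J}_{g}(\mathbf{x}_t) \bigr)\, \mathbf{J}_{m}(\mathbf{s}_t) = \mathbf{D}_{m}(\mathbf{s}_t),
\]
which reduces the claim to two sub-claims: (i) $\mathbf{I}_{d_x} - \mathbf{J}_{g}(\mathbf{x}_t)$ is invertible, and (ii) $\mathbf{D}_{m}(\mathbf{s}_t)$ is invertible. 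Given both, $\mathbf{J}_{m}(\mathbf{s}_t) = \bigl( \mathbf{I}_{d_x} - \mathbf{J}_{g}(\mathbf{x}_t) \bigr)^{-1} \mathbf{D}_{m}(\mathbf{s}_t)$ is a product of invertible matrices, hence invertible.

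For (i): under \Cref{con:faith dag} the causal structure among the components of $\mathbf{x}_t$ is a DAG, and by \Cref{asp:func faith} that DAG is exactly the support of $\mathbf{J}_{g}(\mathbf{x}_t)$; in particular its diagonal vanishes, as no $x_{t,i}$ is its own parent. Picking a topological order and applying the corresponding simultaneous row/column permutation turns $\mathbf{J}_{g}(\mathbf{x}_t)$ into a strictly triangular --- hence nilpotent --- matrix, so all its eigenvalues are zero and $\det\bigl( \mathbf{I}_{d_x} - \mathbf{J}_{g}(\mathbf{x}_t) \bigr) = 1$; equivalently, the Neumann series $\sum_{k \ge 0} \mathbf{J}_{g}(\mathbf{x}_t)^{k}$ terminates and gives the inverse explicitly. (The same fact can be seen directly: for fixed $\mathbf{z}_t$ the map $\mathbf{s}_t \mapsto \mathbf{x}_t$ is triangular in the topological order, so its Jacobian is triangular.)

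For (ii): $\mathbf{D}_{m}(\mathbf{s}_t)$ is diagonal with entries $\partial x_{t,i}/\partial s_{t,i}$, so it remains to show each of these is nonzero. In the SEM of \Cref{equ:ica=sem}, $s_{t,i}$ occurs only as the private noise argument of $g_i$ and its only child is $x_{t,i}$, so it cannot reach any ancestor of $x_{t,i}$ --- such a directed path would close a cycle, contradicting \Cref{con:faith dag}. Hence perturbing $s_{t,i}$ leaves $\mathbf{pa}_{O}(x_{t,i})$ and $\mathbf{pa}_{L}(x_{t,i})$ fixed, the chain rule collapses, and $\partial x_{t,i}/\partial s_{t,i}$ equals the partial of $g_i$ with respect to its noise slot, which is nonzero because $s_{t,i}$ is a genuine (non-redundant) input to $g_i$ --- the source-level counterpart of the minimality already built into \Cref{asp:func faith}. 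Then $\mathbf{D}_{m}(\mathbf{s}_t)$ is invertible and the corollary follows.

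\textbf{The main obstacle.} Step (i) is routine linear algebra on a nilpotent matrix. The delicate part is step (ii): Theorem~\ref{thm:func-equ} supplies only the algebraic identity, so that $\det \mathbf{J}_{m}(\mathbf{s}_t) = \prod_i \partial x_{t,i}/\partial s_{t,i}$, and converting this into nonsingularity requires knowing that every $g_i$ is non-degenerate in its own noise argument. I would pin this down either by extending the functional-faithfulness assumption to cover the $s_{t,i} \to x_{t,i}$ edges (not only the $x \to x$ edges), or by assuming each $g_i$ is strictly monotone in $s_{t,i}$, the standard regularity condition in noise-model-based identifiability.
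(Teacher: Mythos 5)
Your proposal follows essentially the same route as the paper: both factor the functional-equivalence identity as $(\mathbf{I}_{d_x}-\mathbf{J}_{g}(\mathbf{x}_t))\,\mathbf{J}_{m}(\mathbf{s}_t)=\mathbf{D}_{m}(\mathbf{s}_t)$, invert the first factor by permuting the DAG Jacobian to (strictly) triangular form, and conclude once $\mathbf{D}_{m}(\mathbf{s}_t)$ is nonsingular. The obstacle you flag in step (ii) is real but is treated no more rigorously in the paper, which simply asserts that the diagonal entries $\partial x_{t,i}/\partial s_{t,i}$ are nonzero; your suggestion to make the non-degeneracy of each $g_i$ in its noise argument an explicit assumption is, if anything, the more careful reading.
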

\begin{corollary}\label{cor:rep-B}
Causal graphs over observations are represented by
$
        \mathbf{J}_{g}(\mathbf{x}_t) = \mathbf{I}_{d_x} - \mathbf{D}_{m}(\mathbf{s}_t)\mathbf{J}^{-1}_{m}(\mathbf{s}_t).
$
\end{corollary}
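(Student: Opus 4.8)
The plan is to obtain Corollary~\ref{cor:rep-B} as an immediate algebraic consequence of Theorem~\ref{thm:func-equ} combined with Corollary~\ref{cor:DAG-inv}. First I would invoke Corollary~\ref{cor:DAG-inv}, which guarantees that $\mathbf{J}_{m}(\mathbf{s}_t)$ is invertible for every $\mathbf{z}_t \in \mathcal{Z}_t$ (recall that $\mathbf{s}_t$ is a deterministic function of $\mathbf{z}_t$ and the noise $\boldsymbol{\epsilon}_{x_t}$), so that $\mathbf{J}^{-1}_{m}(\mathbf{s}_t)$ is well-defined on the relevant domain. Then I would start from the functional-equivalence identity of Theorem~\ref{thm:func-equ}, namely $\mathbf{J}_{g}(\mathbf{x}_t)\mathbf{J}_{m}(\mathbf{s}_t) = \mathbf{J}_{m}(\mathbf{s}_t) - \mathbf{D}_{m}(\mathbf{s}_t)$, and right-multiply both sides by $\mathbf{J}^{-1}_{m}(\mathbf{s}_t)$; distributing the product gives $\mathbf{J}_{g}(\mathbf{x}_t) = \mathbf{I}_{d_x} - \mathbf{D}_{m}(\mathbf{s}_t)\mathbf{J}^{-1}_{m}(\mathbf{s}_t)$, which is exactly the claimed expression.

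The final step is interpretive rather than computational: to assert that this matrix \emph{represents the causal graph over observations}, I would appeal to Assumption~\ref{asp:func faith} (Functional Faithfulness), which identifies the causal adjacency structure among observed variables with the support of $\mathbf{J}_{g}(\mathbf{x}_t)$. Combining this with the identity above shows that the observed-variable causal graph can be read off directly from the support of $\mathbf{I}_{d_x} - \mathbf{D}_{m}(\mathbf{s}_t)\mathbf{J}^{-1}_{m}(\mathbf{s}_t)$, \ie, from quantities belonging entirely to the ICA-side representation $x_{t,i} = m_i(\mathbf{z}_t, \mathbf{s}_t)$ in \Cref{equ:ica=sem}; this is the ``instant inference'' alluded to after Theorem~\ref{thm:func-equ}.

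I do not expect a genuine obstacle here, since the corollary reduces to a one-line matrix manipulation once invertibility is available. The only point I would be careful about is ensuring that the invertibility in Corollary~\ref{cor:DAG-inv} indeed holds pointwise across the whole support of $\mathbf{s}_t$ (equivalently, across $\mathcal{Z}_t$ and the support of $\boldsymbol{\epsilon}_{x_t}$), so that $\mathbf{D}_{m}(\mathbf{s}_t)\mathbf{J}^{-1}_{m}(\mathbf{s}_t)$ is defined everywhere the statement is asserted, and that the resulting support pattern is stable across these points, so that ``the causal graph'' refers to a single well-defined object rather than a family of graphs varying with $\mathbf{s}_t$; this stability is where the DAG and faithfulness hypotheses (Assumptions~\ref{con:faith dag} and~\ref{asp:func faith}) do the real work.
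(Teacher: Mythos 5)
Your proposal is correct and matches the paper's (implicit) derivation: Corollary~\ref{cor:rep-B} is obtained exactly by right-multiplying the identity of Theorem~\ref{thm:func-equ} by $\mathbf{J}^{-1}_{m}(\mathbf{s}_t)$, whose existence is supplied by Corollary~\ref{cor:DAG-inv}, with Assumption~\ref{asp:func faith} providing the graph interpretation. Your closing worry about the support pattern being stable across the support of $\mathbf{s}_t$ is not something the paper requires---it explicitly allows the entries of $\mathbf{J}_{g}(\mathbf{x}_t)$, and hence the graph, to vary over time---so no further argument is needed.
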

Building upon these SEM–ICA connections, we derive sufficient conditions under which the causal graph over observed variables becomes identifiable, as it benefits from the recovered latent dynamics. 
\begin{theorem}\textbf{(Identifiability of Causal Graph over Observed Variables)}\label{thm:ident B} Let $\mathbf{A}_{t,k} = \log p(\mathbf{s}_{t,k}|\mathbf{z}_{t})$, assume that $\mathbf{A}_{t,k}$ is twice differentiable in $s_{t,k}$ and is differentiable in $z_{t,l}$, where $l = 1, 2, ..., d_z$. Suppose Assumption~\ref{con:faith dag}, ~\ref{asp:func faith} holds true, and 
\begin{itemize}[leftmargin=*, label={}]
    \item \label{asp:lin ind} A5 (\underline{Generation Variability}). For any estimated $\hat{g}_{m}$ that makes $\mathbf{x}_t = \hat{\mathbf{x}}_t = \hat{m}(\hat{\mathbf{z}}_t, \hat{\mathbf{s}}_t)$, let
\begin{equation*}
\small
\begin{split}
    \mathbf{V}(t,k)& \coloneqq 
    \begin{bmatrix}
        \frac{\partial^2 \mathbf{A}_{t,k}}{\partial s_{t,k} \partial z_{t,1}}, 
        \dots, 
        \frac{\partial^2 \mathbf{A}_{t,k}}{\partial s_{t,k} \partial z_{t,d_z}}
    \end{bmatrix},\\
    \mathbf{U}(t,k)& \coloneqq 
    \begin{bmatrix}
        \frac{\partial^3 \mathbf{A}_{t,k}}{\partial s_{t,k} \partial^2 z_{t,1}}, 
        \dots, 
        \frac{\partial^3 \mathbf{A}_{t,k}}{\partial s_{t,k} \partial^2 z_{t,d_z}}
    \end{bmatrix}^T,
\end{split}
\end{equation*}
\end{itemize}
where for $k = 1, 2, \ldots, d_x$, $2d_x$ vector functions $\mathbf{V}(t,1), \ldots, \mathbf{V}(t,d_x), \mathbf{U}(t,1), \ldots, \mathbf{U}(t,d_x)$ are linearly independent. Then we attain ordered component-wise identifiability (Definition~\ref{def:order ident}), and the structure of the causal graph is identifiable, i.e., $\text{supp}(\mathbf{J}_{g}(\mathbf{x}_t)) = \text{supp}(\mathbf{J}_{\hat{g}}(\hat{\mathbf{x}}_t))$.
\end{theorem}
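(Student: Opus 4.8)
The plan is to reduce the identifiability of $\mathbf{J}_g(\mathbf{x}_t)$ to a nonlinear-ICA-with-auxiliary-variable argument in the spirit of \citet{hyvarinen2019nonlinear}, and then transfer the resulting component-wise identifiability of the mixing map to the causal graph via the algebraic relation in Theorem~\ref{thm:func-equ} (equivalently Corollary~\ref{cor:rep-B}). Concretely, by Lemma~\ref{thm:ext ica} each representation $x_{t,i}=g_i(\mathbf{pa}_O(x_{t,i}),\mathbf{pa}_L(x_{t,i}),s_{t,i})$ can be rewritten as $x_{t,i}=m_i(\mathbf{z}_t,\mathbf{s}_t)$, and by the proof sketch of Theorem~\ref{thm:func-equ} (see Appendix~\ref{prf:sem to ica}) the generation $[\mathbf{z}_t,\boldsymbol{\epsilon}_{x_t}]\Rightarrow\mathbf{x}_t$ is a conditional nonlinear ICA with $\mathbf{z}_t$ playing the role of the auxiliary variable: the sources $\mathbf{s}_t=(s_{t,1},\dots,s_{t,d_x})$ are conditionally independent given $\mathbf{z}_t$ with log-densities $\mathbf{A}_{t,k}=\log p(s_{t,k}\mid\mathbf{z}_t)$. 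Since the latent dynamics are already identified up to an invertible differentiable reparametrization $h_z$ (Theorem~\ref{thm:blk idn}, and the component-wise refinement in Appendix~\ref{prf:z-idt}), we may work as if $\hat{\mathbf{z}}_t$ is a known invertible function of $\mathbf{z}_t$, so that conditioning on $\hat{\mathbf{z}}_t$ is equivalent to conditioning on $\mathbf{z}_t$ for the purpose of the sufficient-statistics argument.

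The core step is then the standard two-stage derivation. First I would match the observed densities: since $\mathbf{x}_t=\hat{\mathbf{x}}_t$ and both are generated through invertible mixings (invertibility is furnished by Corollary~\ref{cor:DAG-inv}), the change-of-variables identity $\log p(\mathbf{x}_t\mid\mathbf{z}_t)=\sum_k \mathbf{A}_{t,k} - \log|\det \mathbf{J}_m|$ must equal its hatted counterpart. Differentiating this equality with respect to the components of $\mathbf{x}_t$ kills the Jacobian-determinant terms (they depend only on $\mathbf{s}_t$, hence on $\mathbf{x}_t$ but not on $\mathbf{z}_t$ once we also differentiate in $\mathbf{z}_t$), leaving a relation between $\partial \mathbf{A}_{t,k}/\partial s_{t,k}$ and $\partial \hat{\mathbf{A}}_{t,k}/\partial \hat{s}_{t,k}$ composed with the transformation $\mathbf{s}_t\mapsto\hat{\mathbf{s}}_t$. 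Second, I would differentiate this relation once and twice more with respect to each $z_{t,l}$; the first- and second-order cross-derivatives are exactly the entries assembled into $\mathbf{V}(t,k)$ and $\mathbf{U}(t,k)$. Assumption~\hyperref[asp:lin ind]{A5} — linear independence of the $2d_x$ vector functions $\{\mathbf{V}(t,k),\mathbf{U}(t,k)\}_{k=1}^{d_x}$ — is precisely what is needed to conclude that the Jacobian of the source-to-source map $\mathbf{s}_t\mapsto\hat{\mathbf{s}}_t$ has a generalized-permutation (scaled permutation) structure, i.e. $\hat{s}_{t,k}$ depends on a single $s_{t,\pi(k)}$. This yields ordered component-wise identifiability of the sources (Definition~\ref{def:order ident}), after fixing the permutation by the ordering convention.

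Once the mixing maps agree up to a permutation and componentwise invertible transformation of the sources, I would translate this back to the Jacobians: $\mathbf{J}_{\hat m}(\hat{\mathbf{s}}_t)=\mathbf{J}_m(\mathbf{s}_t)\,\mathbf{P}\,\mathbf{\Lambda}$ for a permutation matrix $\mathbf{P}$ and an invertible diagonal $\mathbf{\Lambda}$ (both possibly depending on $\mathbf{s}_t$), and likewise $\mathbf{D}_{\hat m}=\mathbf{D}_m\mathbf{P}\mathbf{\Lambda}$ restricted appropriately. Plugging into Corollary~\ref{cor:rep-B}, $\mathbf{J}_{\hat g}(\hat{\mathbf{x}}_t)=\mathbf{I}-\mathbf{D}_{\hat m}\mathbf{J}_{\hat m}^{-1} = \mathbf{I}-\mathbf{P}\mathbf{D}_m\mathbf{J}_m^{-1}\mathbf{P}^{-1}$, so the off-diagonal support of $\mathbf{J}_{\hat g}$ is the $\mathbf{P}$-conjugate of that of $\mathbf{J}_g$; under the fixed ordering $\mathbf{P}=\mathbf{I}$ and Assumption~\ref{asp:func faith} (functional faithfulness, giving edge minimality so that zero partial derivatives correspond exactly to absent edges), we get $\mathrm{supp}(\mathbf{J}_g(\mathbf{x}_t))=\mathrm{supp}(\mathbf{J}_{\hat g}(\hat{\mathbf{x}}_t))$. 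The main obstacle I anticipate is handling the coupling between the two identifiability layers: rigorously justifying that the latent reparametrization $h_z$ from Theorem~\ref{thm:blk idn} does not contaminate the sufficient-statistics computation — in particular that differentiating $\mathbf{A}_{t,k}$ "in $z_{t,l}$" versus "in $\hat{z}_{t,l}$" only introduces an invertible linear change of the $\mathbf{V},\mathbf{U}$ collections (hence preserves their linear independence) — and verifying that the DAG/acyclicity constraint is what guarantees $\mathbf{P}=\mathbf{I}$ rather than an arbitrary permutation, since a nontrivial $\mathbf{P}$ would generally destroy acyclicity of $\mathbf{I}-\mathbf{P}\mathbf{D}_m\mathbf{J}_m^{-1}\mathbf{P}^{-1}$ only under an additional argument about consistency of orderings.
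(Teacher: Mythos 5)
Your proposal follows essentially the same route as the paper's proof: SEM--ICA equivalence plus the block identifiability of $\mathbf{z}_t$, then the conditional independence of $\mathbf{s}_t$ given $\mathbf{z}_t$ (via the vanishing cross-derivative of $\log p(\hat{\mathbf{s}}_t \mid \hat{\mathbf{z}}_t)$) combined with the linear independence in A5 to force $\mathbf{J}_{h_s}$ to be a monomial matrix, and finally Corollary~\ref{cor:rep-B} to transfer the support equality to $\mathbf{J}_g$. The one step you flag as an anticipated obstacle---showing the permutation must be the identity---is exactly what the paper resolves by permuting $\mathbf{J}_g$ to lower-triangular form and invoking Lemma~1 of LiNGAM (\Cref{lem:shimizu_lemma1}), so your argument is complete once that standard lemma is applied.
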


\textbf{Proof Sketch.} This proof extends the notion of component-wise identifiability. From Theorem~\ref{thm:blk idn}, we know that block-level information about $\mathbf{z}_t$ is identifiable, which can be treated as a continuous conditional prior. Recall that the data generating process in \Cref{equ:dgp} satisfies $s_{t,i} \perp\!\!\!\!\perp s_{t,j} \mid \mathbf{z}_t$ for all $i \neq j$. By introducing variability as specified in \hyperref[asp:lin ind]{A5}, we can recover the components up to permutation. To eliminate this permutation ambiguity, we further exploit the structural constraints encoded by the DAG over observed variables. The full proof is provided in Appendix~\ref{prf:ide B}.
\begin{figure}[H]
\vspace{-3mm}
    \centering
\includegraphics[width=0.9\linewidth]{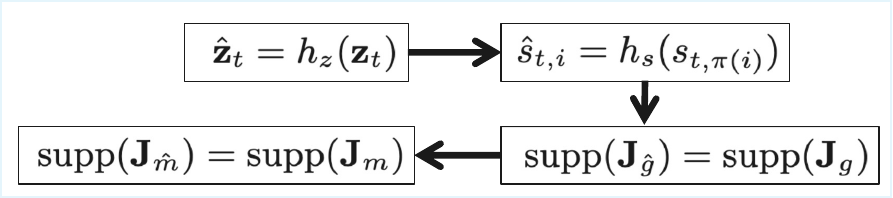}
    \vspace{-3mm}
\end{figure}
Generally speaking, Theorem~\ref{thm:blk idn} establishes latent space recovery through nonparametric identification, Theorem~\ref{thm:func-equ} demonstrates the functional equivalence between SEM and ICA, and Theorem~\ref{thm:ident B} builds upon these results: first applying Theorem~\ref{thm:blk idn} to identify a nonlinear ICA model and then using Theorem~\ref{thm:func-equ}'s equivalence to identify the corresponding nonlinear SEM.

\begin{table*}[t]
\centering
\caption{Attributes table of CD methods as applied to time-series data. A check mark indicates that the method possesses the attribute or achieves the result, while a cross mark indicates the opposite.
}
\label{tab:cd_related}
\renewcommand{\arraystretch}{0.5}
\resizebox{\textwidth}{!}{%
\begin{tabular}{ll|ccccc@{}}
\toprule
\multicolumn{2}{c}{Method}          & \begin{tabular}[c]{@{}c@{}}Nonparametric\end{tabular} & \begin{tabular}[c]{@{}c@{}}Latent Variables\end{tabular} & \begin{tabular}[c]{@{}c@{}}Latent Causal Graph\end{tabular} & \begin{tabular}[c]{@{}c@{}}Causal Graph over Observations\end{tabular} & No Equivalence Classes \\ 
\midrule
\midrule
& NESSM \citep{huang2019causal} &  \XSolidBrush                                                          &   \XSolidBrush                                                         &   \XSolidBrush                                                         &   \CheckmarkBold                                                        &  \CheckmarkBold  \\
& CD-NOD \citep{huang2020causal}           & \CheckmarkBold                                                                 & \XSolidBrush                                                                 &  \XSolidBrush                                                                  & \CheckmarkBold                                                            & \XSolidBrush              \\
& FCI \citep{spirtes1991algorithm}    &     \CheckmarkBold                                                              &    \CheckmarkBold                                                               &  \XSolidBrush                                                                   &  \CheckmarkBold                                                          & \XSolidBrush             \\
& LPCMCI \citep{gerhardus2020high}       &  \CheckmarkBold                                                                &  \CheckmarkBold                                                                &  \XSolidBrush                                                                  &  \CheckmarkBold                                                           &  \XSolidBrush             \\
& CDSD \citep{brouillard2024causal}            & \CheckmarkBold                                                                  &  \CheckmarkBold                                                                 &  \CheckmarkBold                                                                   & \XSolidBrush                                                            & \CheckmarkBold              \\
\rowcolor{gray!25}& \textbf{CaDRe}           &  \CheckmarkBold                                                                 &  \CheckmarkBold                                                                 &  \CheckmarkBold                                                                   &  \CheckmarkBold                                                           &  \CheckmarkBold             \\ \bottomrule
\end{tabular}%
}
\vspace{-3mm}
\end{table*}

\section{Estimation Methodology} \label{sec:est med}
Our theoretical insights shed light on the practical implementations. As shown in Figure~\ref{fig:instan_model}, we instantiate them into an estimation framework for doing \textbf{Ca}usal \textbf{D}iscovery and causal \textbf{Re}presentation learning (\textbf{CaDRe}) in the nonparametric setting, enabling accurate inference of causal structures.

\paragraph{Overall Architecture.} The proposed architecture is built upon the variational autoencoder~\citep{kingma2013auto}. In light of the data generating process described in ~\Cref{equ:dgp}, we can establish the Evidence Lower BOund (ELBO) as follows:
\begin{equation}
\small
\begin{aligned}
    \mathcal{L}_{ELBO} = 
    &\ \mathbb{E}_{q(\mathbf{s}_{1:T} \mid \mathbf{x}_{1:T})} 
    \left[ \log p(\mathbf{x}_{1:T} \mid \mathbf{s}_{1:T}, \mathbf{z}_{1:T}) \right] - \\ &\lambda_1 D_{\text{KL}}\left( q(\mathbf{s}_{1:T} \mid \mathbf{x}_{1:T}) \,\|\, p(\mathbf{s}_{1:T} \mid \mathbf{z}_{1:T}) \right) - \\&\lambda_2 D_{\text{KL}}\left( q(\mathbf{z}_{1:T} \mid \mathbf{x}_{1:T}) \,\|\, p(\mathbf{z}_{1:T}) \right),
\end{aligned}
\end{equation}
where \( \lambda_1 \) and \( \lambda_2 \) are hyperparameters, and \( \mathcal{D}_{KL} \) represents the Kullback-Leibler (KL) divergence. In practice, we set \( \lambda_1 = 4 \times 10^{-3} \) and \( \lambda_2 = 1.0 \times 10^{-2} \) to ensure stable training and reliable convergence. In Figure~\ref{fig:instan_model}, the \texttt{z-encoder}, \texttt{s-encoder}, and \texttt{decoder} are implemented by Multi-Layer Perceptrons (MLPs) as 
\begin{equation}
\small
    \hat{\mathbf{z}}_{1:T} = \phi(\mathbf{x}_{1:T}),\ \hat{\mathbf{s}}_{1:T} = \eta(\mathbf{x}_{1:T}), \ \hat{\mathbf{x}}_{1:T} = \psi(\hat{\mathbf{z}}_{1:T}, \hat{\mathbf{s}}_{1:T}),
\end{equation}
where \texttt{z-encoder} \(\phi\) extracts the latent variables from observations, \texttt{s-encoder} \(\psi\) and \texttt{decoder} \(\eta\) approximate functions for encoding $\ss_t$ and reconstructing observations $\mathbf{x}_t$, respectively. 
\hspace{-0.1em}
\begin{figure}[t]
\centering
    \begin{minipage}{0.49\textwidth} 
    \centering
    \includegraphics[width=\linewidth]{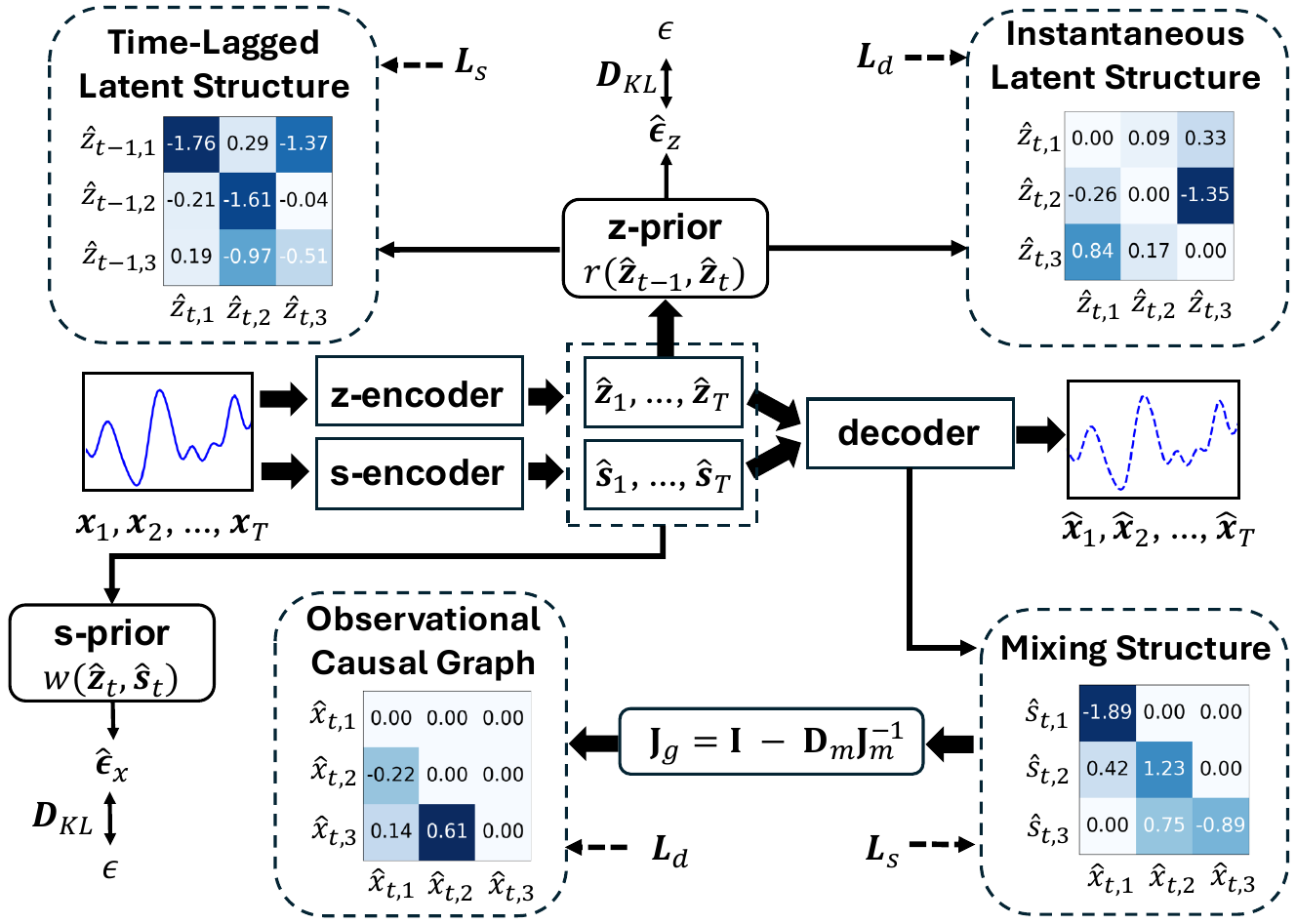}
    \end{minipage}
    \hspace{0.2em}
    \begin{minipage}{0.49\textwidth}
      \captionof{figure}{\textbf{The estimation procedure of CaDRe.} The model framework includes two encoders: \texttt{z-encoder} for extracting latent variables $\mathbf{z}_t$, and \texttt{s-encoder} for extracting nonstationary noise $\mathbf{s}_t$. A \texttt{decoder} reconstructs $\mathbf{x}_t$ from them. Additionally, prior networks estimate the prior distribution using a normalizing flow, focusing on learning the causal structures based on the Jacobian matrix. $\mathcal{L}_s$ imposes a sparsity constraint and $\mathcal{L}_d$ enforces the DAG structure on Jacobian matrix. $\mathcal{D}_{KL}$ enforces independence of the estimated noise by minimizing its KL divergence w.r.t. $\mathcal{N}(0, \mathbf{I})$. In summary, this method learns independent noise to inversely infer the causal structures.\label{fig:instan_model}}  
    \end{minipage}
    \vspace{-3mm}
\end{figure}

\textbf{Estimating $\mathbf{z}_t$ with $\mathbf{s}_t$ using Flow Networks.} 
To capture temporal dependencies with emerging latent causal structures, we minimize the KL divergence between the approximate posterior from $\rvx_t$ and a learned prior of $\rvz_t$ from $\rvz_{t-1}$. We use a \texttt{z-encoder} to compute the posterior, and use a \texttt{z-prior} network, a normalizing flow conditioned on selected inputs, to compute the prior. Such input selection is guided by learnable inverse transition functions $r_i$ for $i$-th latent variables, where $\hat{\epsilon}^z_{t,i} = r_i(\hat{\rvz}_{t-1}, \hat{\rvz}_t)$ to identify which latent variables that causally influence $\hat{z}_{t,i}$. In detail, for the transformation $\kappa := \{\hat{\rvz}_{t-1},\hat{\rvz}_t\}\rightarrow\{\hat{\rvz}_{t-1},\hat{\bm{\epsilon}}^z_{t}\}$, corresponded Jacobian matrix ${\mathbf{J}_{\kappa}=
    \begin{pmatrix}
        \mathbf{I} & 0\\
        \mathbf{J}_r(\hat{\mathbf{z}}_{t-1}) & \mathbf{J}_r(\hat{\mathbf{z}}_{t})
    \end{pmatrix}}$
encodes these causal relations. Derived from normalizing flow, we have a prior estimation: 
$
    \log p(\hat{\rvz}_t, \hat{\rvz}_{t-1})=\log p(\hat{\rvz}_{t-1},\hat{\bm{\epsilon}}^z_{t}) + \log |\frac{\partial r_i}{\partial \hat{z}_{t,i}}|
$. 
Since the noise $\epsilon^z_{t,i}$ is independent of $\rvz_{t-1}$, the prior can be decomposed in terms  as:
\begin{equation}
\small
    \log p(\hat{\rvz}_{1:T} \mid \hat{\rvz}_1)=\prod_{\tau=2}^T\left(\sum_{i=1}^{d_z} \log p(\hat{\epsilon}^z_{\tau,i}) + \sum_{i=1}^{d_z} \log |\frac{\partial r_i}{\partial \hat{z}_{\tau,i}}|  \right),
\end{equation}
where we assume $p(\hat{\epsilon}^z_{\tau,i}) \sim \mathcal{N}(\mathbf{0}, \mathbf{I})$. After that, we can access $\frac{\partial r_i}{\partial \hat{z}_{t,i}}$ or $\mathbf{J}_r(\hat{\mathbf{z}}_{t})$  which captures latent instantaneous structure and $\frac{\partial r_i}{\partial \hat{z}_{t-1,i}}$ or $\mathbf{J}_r(\hat{\mathbf{z}}_{t-1})$ which captures latent transition effects. 

Using the same estimation backbone, we minimize the KL divergence between the prior of $\mathbf{s}_t$, parameterized through $\hat{\epsilon}^x_{t,i} = w_i(\hat{\rvz}_t, \hat{\rvs}_{t})$ via a \texttt{s-prior} network, and the posterior obtained from the \texttt{s-encoder}. The transformation between $\hat{\mathbf{s}}_t$ and $\hat{\mathbf{z}}_t$ is then modeled as follows:
\begin{equation}
\small
\log p\left(\hat{\mathbf{s}}_{1:T} \mid \hat{\mathbf{z}}_{1:T}\right) = \prod_{\tau=1}^T\left( \sum_{i=1}^{d_x} \log p\left(\hat{\epsilon}^x_{\tau,i}\right) + \sum_{i=1}^{d_x} \log \left|  \frac{\partial w_{i}}{\partial \hat{s}_{\tau,i}} \right| \right).
\end{equation}
Specifically, to ensure the conditional independence of $\hat{\mathbf{z}}_t$ and $\hat{\mathbf{s}}_t$, we use $\mathcal{D}_{KL}$ to minimize the KL divergence from the distributions of $\hat{\boldsymbol{\epsilon}}^x_t$ and $\hat{\boldsymbol{\epsilon}}^z_t$ to the distribution $\mathcal{N}(\mathbf{0},\mathbf{I})$.

\begin{figure}[t]
        \centering
        \includegraphics[width=\linewidth]{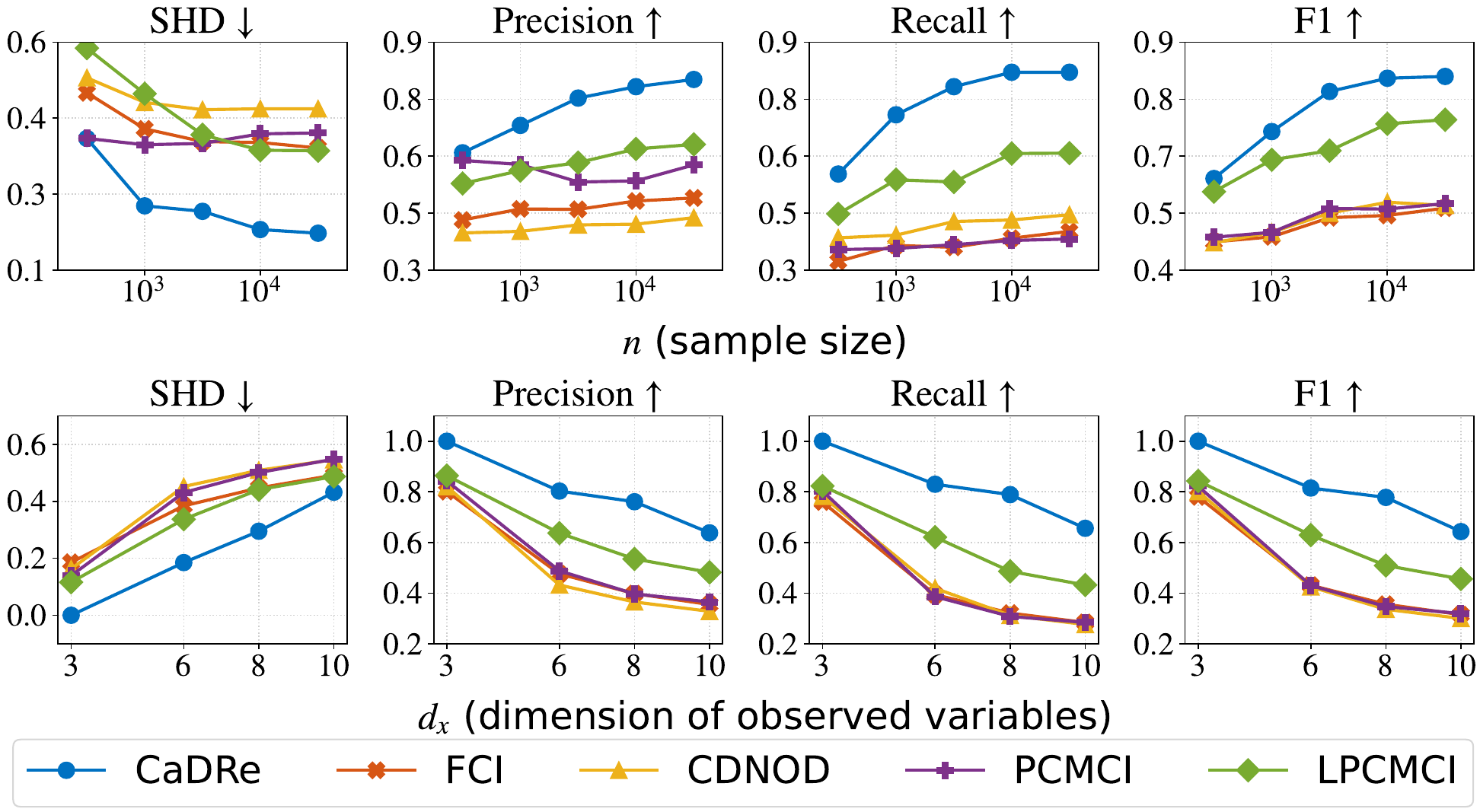}
    \vspace{-4mm}
    \caption{\textbf{Comparison with Constraint-Based CD.} 
    \textbf{Top:} Results with $d_x = 6$, $d_z = 3$, varying $n = \{ 200, 1000, 5000, 10000, 20000\}$. \textbf{Bottom:} Results with the sample size  $n = 10000$ while selecting $d_x = \{3,6,8,10\}$. }
    \label{fig:cons comp}
    \vspace{-4mm}
\end{figure}
\paragraph{Structure Learning.} Considering the causal graph over observed variables, we compute $\mathbf{J}_{\hat{m}}(\hat{\mathbf{s}}_t)$ from the \texttt{decoder}, and instantly obtain the causal graph over observations $\mathbf{J}_{\hat{g}}(\hat{\mathbf{x}}_t)$ via Corollary~\ref{cor:rep-B}. Notably, the entries of $\mathbf{J}_{\hat{g}}(\hat{\mathbf{x}}_t)$ vary with other variables such as $\hat{\mathbf{z}}_t$, resulting in a DAG that could change over time. For the latent structure, we directly compute $\mathbf{J}_r(\hat{\mathbf{z}}_{t-1})$ and $\mathbf{J}_r(\hat{\mathbf{z}}_{t})$ from \texttt{z-prior} network as the time-lagged structure and instantaneous structure in latent space, respectively. To prevent redundant edges and cycles, a sparsity penalty $\mathcal{L}_s$ is imposed on each learned structure, and DAG constraints $\mathcal{L}_d$ are imposed on the causal graph over observed variables and instantaneous latent causal DAG. Specifically, the Markov network structure for latent variables is derived as $\mathcal{M}(\mathbf{J}) = (\mathbf{I} + \mathbf{J})^\top (\mathbf{I} + \mathbf{J}) - \mathbf{I}$. Formally, we define these penalties:
\begin{equation} \notag
\begin{split}
    \mathcal{L}_s =& 
    \left\| \mathcal{M}(\mathbf{J}_r(\hat{\mathbf{z}}_{t})) \right\|_1 
    + \left\| \mathcal{M}(\mathbf{J}_d(\hat{\mathbf{z}}_{t-1})) \right\|_1 
    + \left\| \mathbf{J}_{\hat{g}}(\hat{\mathbf{x}}_t) \right\|_1,\\
    \quad \mathcal{L}_d =& 
    \mathcal{D}_g(\mathbf{J}_{\hat{g}}(\hat{\mathbf{x}}_t)) 
    + \mathcal{D}_g(\mathbf{J}_r(\hat{\mathbf{z}}_{t})),
\end{split}
\end{equation}
where $\mathcal{D}_g(A) = \operatorname{tr}\left[(I + \frac{1}{d} A \circ A)^d\right] - d$ is the DAG constraint from \citet{yu2019dag}, with $A$ being an $d$-dimensionality matrix, and $|| \cdot ||_1$ denotes the matrix $l_1$ norm. In summary, the overall loss function of CaDRe integrates ELBO and penalties for structural constraints as follows:
\begin{equation}
\small
\begin{split}
    \mathcal{L}_{ALL}=\mathcal{L}_{ELBO} + \alpha \mathcal{L}_{s} + \beta \mathcal{L}_d,
\end{split}
\end{equation}
with setting $\alpha=1.0 \times 10^{-4}$ and $\beta=5.0 \times 10^{-5}$ are hyperparameters to yield the best performance.

\begin{table*}[t]
    \centering
    \caption{\textbf{Comparison with Temporal CRL.} We set the dimensions as \( d_z = 3 \) and \( d_x = 10 \), considering: (1) \textit{Independent}: \( z_{t,i} \) and \( z_{t,j} \) are conditionally independent given \( \mathbf{z}_{t-1} \); (2) \textit{Sparse}: \( z_{t,i} \) and \( z_{t,j} \) are dependent given \( \mathbf{z}_{t-1} \), but the latent Markov network and time-lagged latent structure are sparse; (3) \textit{Dense}: No sparsity restrictions on latent causal graph. \textbf{Bold} numbers indicate the best.}
    \resizebox{\textwidth}{!}{ %
        \begin{tabular}{c | c | G ccccccccc}
            \toprule
            \multicolumn{1}{c}{\textbf{Setting}} &
            \multicolumn{1}{c}{\textbf{Metric}} &
            \multicolumn{1}{c}{\textbf{CaDRe}} &
            \multicolumn{1}{c}{\textbf{iCITRIS}} &
            \multicolumn{1}{c}{\textbf{G-CaRL}} &
            \multicolumn{1}{c}{\textbf{CaRiNG}} &
            \multicolumn{1}{c}{\textbf{TDRL}} &
            \multicolumn{1}{c}{\textbf{LEAP}} &
            \multicolumn{1}{c}{\textbf{SlowVAE}} &
            \multicolumn{1}{c}{\textbf{PCL}} &
            \multicolumn{1}{c}{\textbf{i-VAE}} &
            \multicolumn{1}{c}{\textbf{TCL}} \\
             \midrule
             \midrule
             \multirow{2}{*}{\textbf{Independent}} 
             & MCC  & \textbf{0.9811} & 0.6649 & 0.8023 & 0.8543 & \underline{0.9106} & 0.8942 & 0.4312 & 0.6507 & 0.6738 & 0.5916 \\
             & $R^2$ & \textbf{0.9626} & 0.7341 & \underline{0.9012} & 0.8355 & 0.8649 & 0.7795 & 0.4270 & 0.4528 & 0.5917 & 0.3516 \\
             \midrule
             \multirow{2}{*}{\textbf{Sparse}} 
             & MCC  & \textbf{0.9306} & 0.4531 & \underline{0.7701} & 0.4924 & 0.6628 & 0.6453 & 0.3675 & 0.5275 & 0.4561 & 0.2629 \\
             & $R^2$ & \textbf{0.9102} & \underline{0.6326} & 0.5443 & 0.2897 & 0.6953 & 0.4637 & 0.2781 & 0.1852 & 0.2119 & 0.3028 \\
             \midrule
             \multirow{2}{*}{\textbf{Dense}} 
             & MCC  & \textbf{0.6750} & 0.3274 & \underline{0.6714} & 0.4893 & 0.3547 & 0.5842 & 0.1196 & 0.3865 & 0.2647 & 0.1324 \\
             & $R^2$ & \textbf{0.9204} & 0.6875 & \underline{0.8032} & 0.4925 & 0.7809 & 0.7723 & 0.5485 & 0.6302 & 0.1525 & 0.2060 \\
             \bottomrule
        \end{tabular}
    }
    \label{tab:comp trl}
\end{table*}

\begin{table*}[t]
\caption{\textbf{Quantitative Comparison of CD methods on CESM2.} Lower WSHD is better, higher WTPR is better. \textbf{Bold} / \underline{Underlined} numbers represent the best / second-best performance.}
\centering
\resizebox{\textwidth}{!}{
\begin{tabular}{l|Gcccccccc}
\toprule
\multicolumn{1}{c}{Metric} & 
\multicolumn{1}{c}{CaDRe} & 
\multicolumn{1}{c}{PC} & 
\multicolumn{1}{c}{FCI} & 
\multicolumn{1}{c}{CDNOD} & 
\multicolumn{1}{c}{PCMCI} & 
\multicolumn{1}{c}{LPCMCI} & 
\multicolumn{1}{c}{TCDF} & 
\multicolumn{1}{c}{TDRL} & 
\multicolumn{1}{c}{IDOL} \\
\midrule
\midrule
WSHD $\downarrow$ & \textbf{0.012} & 0.043 & 0.028 & 0.031 & 0.024 & \underline{0.019} & 0.021 & 0.084 & 0.035 \\
WTPR $\uparrow$  & \textbf{0.532} & 0.202 & 0.302 & 0.251 & 0.198 & 0.274 & \underline{0.327} & 0.246 & 0.150 \\
Latency (ms) $\downarrow$ & \underline{1.095\Std{0.203}} & 2230.72\Std{27.94}
 & 999.09\Std{16.27} & 2242.88\Std{27.14} & 3391.35\Std{76.64} & 3508.50\Std{123.36} & 2.327\Std{0.723} & \textbf{0.974\Std{0.126}} & 1.536\Std{0.251} \\
\bottomrule
\end{tabular}
}
\label{tab:graph-wind}
\vspace{-4mm}
\end{table*}

\begin{table*}[t]
\caption{\textbf{Results on Temperature Forecasting.} Lower MSE/MAE is better. \textbf{Bold} numbers represent the best performance among the models, while \underline{underlined} numbers denote the second-best. \Rev{The column ``Length'' denotes the prediction horizon (in time steps); the input length is fixed at 96 for all entries.}{C7}}
\resizebox{\textwidth}{!}{
\begin{tabular}{l|l|>{\columncolor{gray!25}}c>{\columncolor{gray!25}}ccccccccccccccccc}
\toprule
Dataset & Length & \multicolumn{2}{c}{\cellcolor{gray!25}CaDRe} & \multicolumn{2}{c}{TDRL} & \multicolumn{2}{c}{CARD} & \multicolumn{2}{c}{FITS} & \multicolumn{2}{c}{MICN} & \multicolumn{2}{c}{iTransformer} & \multicolumn{2}{c}{TimesNet} & \multicolumn{2}{c}{Autoformer} & \multicolumn{2}{c}{Timer-XL} \\
 &  & MSE $\downarrow$ & MAE $\downarrow$ & MSE $\downarrow$ & MAE $\downarrow$ & MSE $\downarrow$ & MAE $\downarrow$ & MSE $\downarrow$ & MAE $\downarrow$ & MSE $\downarrow$ & MAE $\downarrow$ & MSE $\downarrow$ & MAE $\downarrow$ & MSE $\downarrow$ & MAE $\downarrow$ & MSE $\downarrow$ & MAE $\downarrow$ & MSE $\downarrow$ & MAE $\downarrow$ \\
\midrule
CESM2 & 96 & \underline{0.410} & \underline{0.483} & 0.439 & 0.507 & \textbf{0.409} & 0.484 & 0.439 & 0.508 & 0.417 & 0.486 & 0.422 & 0.491 & 0.415 & 0.486 & 0.959 & 0.735 & 0.433 & \textbf{0.425} \\
CESM2 & 192 & \textbf{0.412} & \textbf{0.487} & 0.440 & 0.508 & 0.422 & \underline{0.493} & 0.447 & 0.515 & 1.559 & 0.984 & 0.425 & 0.495 & \underline{0.417} & 0.497 & 1.574 & 0.972 & 0.454 & 0.524 \\
CESM2 & 336 & \textbf{0.413} & \textbf{0.485} & 0.441 & 0.505 & \underline{0.421} & 0.497 & 0.482 & 0.536 & 2.091 & 1.173 & 0.426 & \underline{0.494} & 0.423 & 0.499 & 1.845 & 1.078 & 0.527 & 0.565 \\
Weather & 96 & \textbf{0.157} & \textbf{0.203} & 0.442 & 0.511 & 0.423 & 0.497 & 0.172 & 0.221 & 0.199 & 0.256 & \underline{0.168} & \underline{0.214} & 0.180 & 0.231 & 0.225 & 0.259 & 0.367 & 0.252 \\
Weather & 192 & \underline{0.207} & \underline{0.248} & 0.492 & 0.545 & 0.482 & 0.544 & 0.216 & 0.260 & 0.238 & 0.298 & \textbf{0.193} & \textbf{0.241} & 0.212 & 0.265 & 0.354 & 0.348 & 0.434 & 0.298 \\
Weather & 336 & \textbf{0.270} & \textbf{0.314} & 0.536 & 0.612 & 0.525 & 0.596 & 0.386 & 0.439 & \underline{0.316} & 0.496 & 0.426 & 0.494 & 0.423 & 0.499 & 0.354 & \underline{0.348} & 0.527 & 0.565 \\
ERSST & 96 & \textbf{0.145} & 0.268 & 0.187 & 0.268 & 0.197 & 0.273 & 0.539 & 0.297 & 0.726 & 0.765 & 0.247 & \underline{0.264} & 0.432 & 0.508 & 0.953 & 0.272 & \underline{0.163} & \textbf{0.259} \\
ERSST & 192 & \textbf{0.208} & 0.307 & 0.214 & \textbf{0.293} & 0.233 & 0.375 & 0.226 & 0.752 & 1.263 & 0.892 & 0.251 & 0.535 & 0.452 & 0.585 & 1.024 & 0.908 & \underline{0.210} & \underline{0.294} \\
ERSST & 336 & \textbf{0.305} & 0.361 & 0.462 & 0.388 & 0.487 & 0.484 & 0.439 & 0.535 & 1.173 & 1.172 & \underline{0.305} & 0.659 & 0.581 & 0.607 & 1.387 & 1.353 & 0.352 & \textbf{0.337} \\
\bottomrule
\end{tabular}}
\label{tab:comp_fore}
\end{table*}

\subsection{On Synthetic Climate Data} \label{sec:sim exp}
\textbf{Baselines.} For CD, we compare CaDRe with several constraint-based methods suited for nonparametric settings. Specifically, we include PC~\citep{spirtes2001causation}, FCI~\citep{spirtes1991algorithm}, and CD-NOD~\citep{huang2020causal}, which handle latent confounders, time-series methods including PCMCI~\citep{runge2019detecting} and LPCMCI~\citep{gerhardus2020high}, which account for instantaneous and lagged effects with latent confounding. In CRL, we benchmark against CaRiNG~\citep{chen2024caring}, TDRL~\citep{yao2022temporally}, LEAP~\citep{yao2021learning}, SlowVAE~\citep{klindt2020towards}, PCL~\citep{hyvarinen2017nonlinear}, i-VAE~\citep{khemakhem2020variational}, TCL~\citep{hyvarinen2016unsupervised}, and models that can handle instantaneous effects, including iCITRIS~\citep{lippe2022causal} and G-CaRL~\citep{morioka2023causal}. We use SHD, TPR, and precision as metrics for assessing structure learning, and MCC for evaluating the recovery of latent representations.

\textbf{Comparison with Constraint-Based CD.} 
Figure~\ref{fig:cons comp} shows that CaDRe outperforms all baselines across different sample sizes and variable dimensions, while FCI performs poorly when latent confounders are dependent, often leading to low recall. CD-NOD relies on pseudo-causal sufficiency, assuming that latent variables are functions of surrogate variables, which does not hold in general latent settings. PCMCI ignores latent dynamics altogether, while LPCMCI assumes no causal relations among latent confounders, limiting its applicability in complex systems. These comparisons highlight the effectiveness of CaDRe in addressing the limitations of these constraint-based methods.

\textbf{Comparison with Temporal CRL.} As presented in Table~\ref{tab:comp trl}, the MCC and $R^2$ results for the \textit{independent} and \textit{sparse} settings demonstrate that our model achieves component-wise identifiability (\Cref{thm:zijian 1}). In contrast, other considered methods fail to recover latent variables, as they cannot properly address cases where the observed variables are causally-related. For the \textit{dense} setting, our approach can recover the latent space (Theorem~\ref{thm:blk idn}) with the highest $R^2$, while other methods exhibit significant degradation because they are not specifically tailored to handle scenarios involving general noise in the generating function. These outcomes are consistent with our theoretical analysis.


\section{Related Work}\label{sec:rel}
Understanding the causal structure of climate systems is fundamental to scientific reasoning~\citep{runge2019inferring}. The central difficulty arises when attempting to extract causal insight from purely observational data. While Galileo inferred the law of falling bodies through experimental intervention~\citep{gendler1998galileo}, we cannot expose the underlying causal architecture of large-scale, complex dynamical systems through direct manipulation. As depicted in Figure~\ref{fig:climate-system}, factors beyond geographical measurements, \eg, pressure and precipitation~\citep{chen1995effects}, are not directly observed. These variables evolve jointly and stochastically, spanning both \textit{instantaneous} and \textit{time-lagged} causal effects~\citep{lucarini2014mathematical, rolnick2022tackling}. They govern observable quantities such as temperature, which not only reflect latent dynamics but also interact across regions through emergent weather phenomena, such as wind circulation systems.

Identifying these underlying hidden variables and relations is the central objective of Causal Representation Learning (CRL)~\citep{scholkopf2021toward}. Recent advances in identifiability theory and practical algorithm design fall under the broader framework of nonlinear ICA. These approaches rely on auxiliary variables~\citep{hyvarinen2016unsupervised, hyvarinen2017nonlinear, hyvarinen2023nonlinear, yao2022temporally}, sparsity~\citep{lachapelle2022disentanglement, zheng2022identifiability, zheng2023generalizing, lachapelle2024nonparametric}, or restricted generative functions~\citep{gresele2021independent}. For example, Brouillard et al.~\citep{brouillard2024causal} assume that each observed variable has a single-node parent, enabling interpreting climate systems via the latent causal graph. These methods universally assume a noise-free invertible generation from unobserved $\zz_t$ to observed $\xx_t$, in order to directly recover latent space, \ie, $\xx_t = g(\zz_t) \Rightarrow \zz_t = g^{-1}(\xx_t)$. However, climatic measurements involve both observational dependencies and stochastic noise $\ss_t$, \ie, $\xx_t = g(\xx_t, \zz_t, \ss_t)$. Only a few studies consider noise-contaminated generation~\citep{khemakhem2020variational, gassiat2020identifiability, halva2021disentangling}, but they are restricted to \textit{linear additive noise}, let alone the existence of \textit{nonparametric} causal dependencies among observations, the hallmarks of real-world climate systems. 

In contrast to CRL, Causal Discovery (CD)~\citep{pearl2009causality, spirtes2001causation} places emphasis on causal relations among observed variables. Classical CD approaches span a range of parametric assumptions, such as linear non-Gaussianity~\citep{shimizu2006linear}, nonlinear additive structures~\citep{hoyer2008nonlinear, lachapelle2019gradient}, and post-nonlinear models~\citep{zhang2012identifiability}. To better address complex systems, nonparametric methods~\citep{huang2020causal, zheng2023generalized, zhang2012kernel, reizinger2023jacobian, monti2020causal} have been developed but neglect latent confounding in climate data~\citep{lucarini2014mathematical}. The nonparametric latent CD method, Fast Causal Inference (FCI) algorithm~\citep{spirtes1991algorithm}, offers identifiability under faithfulness, yet its reliance on conditional independence leads to weakly informative results, \eg, partial ancestral graph and non-identifiable latent variables. Time-series extensions of the aforementioned methods~\citep{huang2019causal, runge2019detecting, runge2020discovering, gerhardus2020high} inherit these limitations, even when leveraging richer temporal dependencies. In summary, restricted functional forms and underappreciation of hidden processes limit CD applications in climate analysis.
\section{Experiment}
We evaluate CaDRe on synthetic and real-world data, focusing on identifiability of latent dynamics and observation-level causal structure, as well as forecasting and interpretability. Additional experimental results are reported in Appendix~\ref{sec:exp det}.

\begin{figure*}[t]
    \centering
    \includegraphics[width=0.9\linewidth]{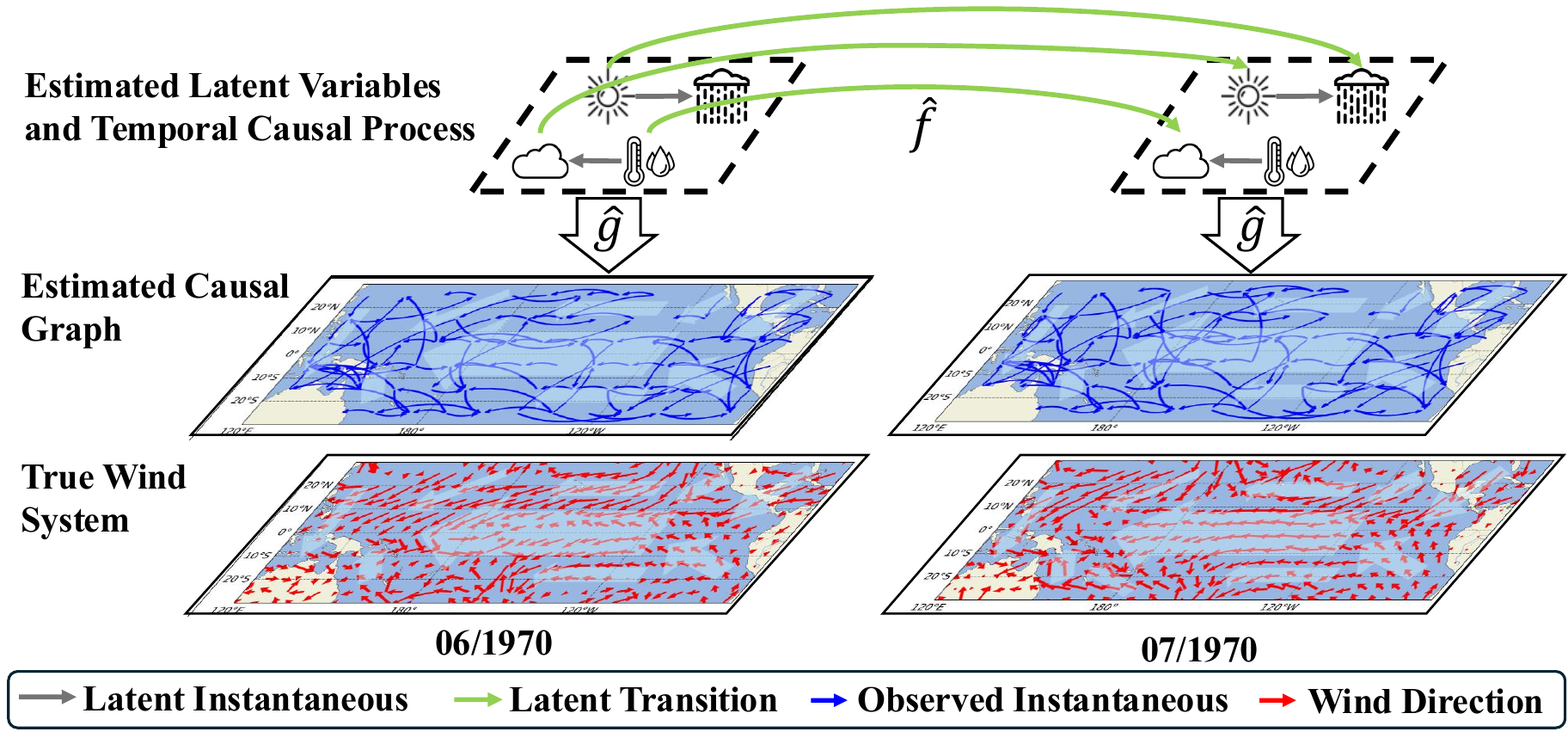}
    \caption{Estimated latent variables, latent causal process, and causal graph over the observed climate grids from CESM2 with the wind field from~\citep{rasp2020weatherbench}, where longer red arrows indicate stronger winds. We also draw the overall wind trend in each map to show the consistency. Latent causal structures are consistent across time steps. Unrotated version of visualization is shown in Appendix~\ref{fig:full_vis}.}
    \label{fig:climate-vis}
    \vspace{-3mm}
\end{figure*}
\subsection{On Real-World Climate Data}
\paragraph{Baselines.} We consider the following state-of-the-art time series forecasting models including Autoformer \citep{wu2021autoformer}, TimesNet~\citep{wu2022timesnet}, Timer-XL~\citep{liu2024timer}, TimeMixer~\citep{wang2024timemixer}, TimeXer~\citep{wang2024timexer}, xLSTM-Mixer~\citep{kraus2024xlstm}, and MICN~\citep{wang2022micn}. Moreover, we consider several latest methods for time series analysis like CARD~\citep{wang2023card}, FITS~\citep{xu2024fits}, and iTransformer~\citep{liu2023itransformer}. Finally, we consider the temporal CRL methods including TDRL~\citep{yao2022temporally} and IDOL~\citep{li2025on} to compare the causal structure learning. We publish the average performance over 3 random seeds. 

\paragraph{Scientific Discovery using the Estimated Causal Graphs.} As the ground-truth latent variables and causal graphs are inaccessible in real climate data, we adopt the contemporaneous wind field~\citep{rasp2020weatherbench} as a surrogate. As shown in Figure~\ref{fig:climate-vis}, the causal graphs recovered by CaDRe are \Rev{consistent with physical wind patterns}{C8}, providing empirical support for the plausibility of its outputs. Additionally, large-scale physical patterns produced from CaDRe \Rev{are consistent with several known phenomena}{C9}, \eg, westward flows in equatorial oceans and a southwestward propagation near Central America, while revealing structurally complex zones along coastal boundaries. These dense, irregular edges may reflect coupled land–atmosphere dynamics or anthropogenic influences~\citep{vautard2019human, boe2014land}.

\paragraph{Quantitative Results on CD.} To quantitatively evaluate CaDRe on real-world CD, we introduce two metrics incorporating wind direction priors: Wind-SHD (WSHD) and Wind-TPR (WTPR). 
We further compare against neural-based methods, including TCDF~\citep{nauta2019causal}, as well as TDRL and IDOL, where latent variables are treated as observed. 
As shown in Table~\ref{tab:graph-wind}, the causal graphs recovered by CaDRe are \Rev{most closely aligned with physical wind patterns relative to the baselines, providing the strongest empirical agreement with this surrogate}{C10} on real-world climate data. Moreover, it achieves low latency since it produces causal graphs through one-step generation, demonstrating its efficiency in realistic high-dimensional settings. 

\paragraph{Physical Interpretability of Latent Space.} We endow the learned latent variables with physical interpretability by aligning them with ground-truth physical variables from \citet{rasp2020weatherbench} and quantifying component-wise information recovery using the Subset Mean Correlation Coefficient (SMCC). We visualize the alignment results using icons in Figure~\ref{fig:climate-vis}. Our analysis suggests that, during the mid-1970s, precipitation \Rev{is associated with other climate variables}{C11} such as solar radiation and cloud cover, while humidity \Rev{covaries with cloud cover with a delayed rather than instantaneous pattern}{C12}. These observations are consistent with established scientific evidence~\citep{betts2014climate}.

\paragraph{Weather Prediction.} We evaluate our method on three real-world datasets for weather forecasting. As summarized in Table~\ref{tab:comp_fore}, our approach outperforms existing time-series forecasting models, due to existing models assuming non-contaminated generation and being unable to handle causally-related observations, restricting their usability in real-world climate data.


\section{Conclusion, Limitation, and Future Work}
This work advances the causal understanding of climate dynamics through a unified framework that connects latent driving forces with the direct causal interactions among observed variables. We establish nonparametric identifiability guarantees for jointly recovering the latent variables, the latent causal process, and the causal graph over observations, and instantiate these guarantees in a state-space variational autoencoder. Simulation experiments substantiate the theoretical results across a range of dimensionalities and sparsity regimes, and the real-world climate analyses recover interpretable causal graphs that align with established physical knowledge of wind systems and land--atmosphere coupling, offering a step toward causally principled climate modeling that complements the prevailing correlation-based deep learning pipelines. \textbf{Limitation and future work.} The framework has so far been evaluated only on climate data, and the recovered observed graph captures contemporaneous edges rather than lagged ones. A natural next step is to extend it to other scientific domains where similar latent driving forces coexist with observable couplings, and to incorporate lagged causal relations among observed variables together with sparse transition and generation processes.

\section*{Impact Statement}
This paper proposes learning latent climate dynamics and inferring causal relations among observed variables from observational time-series data, improving interpretability and supporting scientific analysis beyond correlation-based methods. Although the present work originated in the climate setting, its core identifiability machinery, particularly our \Cref{thm:blk idn} on linear-operator-based latent-space identifiability that extends \citet{hu2008instrumental}, has been adopted by several follow-up studies since the first arXiv release in January 2025: \citet{shen2025controllable} applies it to controllable video generation in Theorem~3.2; \citet{li2026towards} extends it to hierarchical latent priors in Theorem~1; \citet{li2026online} generalizes it to streaming forecasting in Theorem~2, with a time-lagged variant in the appendix; \citet{li2025personax} transfers the linear-operator technique to a multi-view setting in Theorem~1; \citet{feng2025ada} integrates the latent-recovery procedure into a latent-aware diffusion model for decision-making; and \citet{zheng2025nonparametric} adopts the same machinery for nonparametric factor analysis under sparsity constraints in Theorem~2.
\section*{Acknowledgements}
The authors would like to thank the anonymous reviewers for helpful comments and suggestions during the reviewing process. The authors would like to acknowledge the support from NSF Award No.~2229881, AI Institute for Societal Decision Making (AI-SDM), the National Institutes of Health (NIH) under Contract R01HL159805, and grants from Quris AI, Florin Court Capital, MBZUAI-WIS Joint Program, and the Al Deira Causal Education project. Moreover, this research was supported by NSF DMS-2428058.


\bibliography{reference}
\bibliographystyle{icml2026}

\clearpage
\appendix
\onecolumn

\Rev{%
\begingroup
\setlength{\parindent}{0pt}
\renewcommand{\arraystretch}{1.2}

\begin{center}
\rule{\linewidth}{0.8pt}\\[2pt]
{\Large\bfseries Appendix}\\[2pt]
\rule{\linewidth}{0.4pt}
\end{center}
\vspace{-2mm}

\textbf{A\quad Theorem Proofs}\dotfill\textbf{\pageref{sec:proof}}\\[1pt]
\hspace*{1.5em}A.1\quad Notation List\dotfill\pageref{sec:proof-notation}\\
\hspace*{1.5em}A.2\quad Proof of \Cref{thm:blk idn} (Identifiability of Latent Space)\dotfill\pageref{prf:blk idn}\\
\hspace*{1.5em}A.3\quad Component-Wise Identifiability of Latent Variables\dotfill\pageref{prf:z-idt}\\
\hspace*{1.5em}A.4\quad Proof of \Cref{thm:ext ica} (SEM~$\Leftrightarrow$~Nonlinear ICA)\dotfill\pageref{prf:sem to ica}\\
\hspace*{1.5em}A.5\quad Proof of \Cref{thm:func-equ} (Functional Equivalence)\dotfill\pageref{prf:func-equ}\\
\hspace*{1.5em}A.6\quad Proof of \Cref{cor:DAG-inv}\dotfill\pageref{prf:DAG-inv}\\
\hspace*{1.5em}A.7\quad Proof of \Cref{thm:ident B} (Identifiability of Observed Causal Graph)\dotfill\pageref{prf:ide B}\\
\hspace*{1.5em}A.8\quad Proof of \Cref{thm:sub inj} (Injective DAG Operator)\dotfill\pageref{prf:inj B}\\
\hspace*{1.5em}A.9\quad Comparison with Existing Methods\dotfill\pageref{app:compare-existing}\\[4pt]

\textbf{B\quad Extended Related Work}\dotfill\textbf{\pageref{sec:ext rel}}\\[1pt]
\hspace*{1.5em}B.1\quad Climate Analysis\dotfill\pageref{app:rw-climate}\\
\hspace*{1.5em}B.2\quad Causal Representation Learning\dotfill\pageref{app:rw-crl}\\
\hspace*{1.5em}B.3\quad Causal Discovery\dotfill\pageref{app:rw-cd}\\
\hspace*{1.5em}B.4\quad Time-Series Forecasting\dotfill\pageref{app:rw-tsf}\\[4pt]

\textbf{C\quad Experiment Details}\dotfill\textbf{\pageref{sec:exp det}}\\[1pt]
\hspace*{1.5em}C.1\quad On Simulation Dataset\dotfill\pageref{app:simulation}\\
\hspace*{1.5em}C.2\quad On Real-World Datasets\dotfill\pageref{exp:real data}\\[4pt]

\textbf{D\quad Extended Experiment Results}\dotfill\textbf{\pageref{app:ext_exp_res}}\\[1pt]
\hspace*{1.5em}D.1\quad Experiment Results of Simulated Datasets\dotfill\pageref{app:ext-sim}\\
\hspace*{1.5em}D.2\quad Experiment Results of Real-World Datasets\dotfill\pageref{app:ext-real}\\[4pt]

\textbf{E\quad More Discussions}\dotfill\textbf{\pageref{app:more-disc}}\\[1pt]
\hspace*{1.5em}E.1\quad Role of Forecasting and Causal Identification\dotfill\pageref{app:e1-role}\\
\hspace*{1.5em}E.2\quad Testing Assumptions in Climate Time Series\dotfill\pageref{app:e2-test}\\
\hspace*{1.5em}E.3\quad Identifiability of Latent Space in $n$-order Markov Process\dotfill\pageref{app:time-lag-iden}\\
\hspace*{1.5em}E.4\quad Allowing Time-Lagged Causal Relationships in Observations\dotfill\pageref{sec:obs lag}

\vspace{3pt}
\rule{\linewidth}{0.4pt}
\endgroup
\vspace{3mm}
}{C15}

\clearpage
\appendix
  \textit{\large Supplement to}\\ \ \\
      {\Large \bf ``Learning General Causal Structures with Hidden Dynamic Process for Climate Analysis''}\
\newcommand{\beginsupplement}{%
\setcounter{tocdepth}{2} 
	\renewcommand{\thetable}{A\arabic{table}}%

        \setcounter{equation}{0}
	\renewcommand{\theequation}{A\arabic{equation}}%

	\renewcommand{\thefigure}{A\arabic{figure}}%

	\setcounter{section}{0}\renewcommand{\theHsection}{A\arabic{section}}%

    \setcounter{theorem}{0}
    \renewcommand{\thetheorem}{A\arabic{theorem}}%

    \setcounter{corollary}{0}
    \renewcommand{\thecorollary}{A\arabic{corollary}}%


}


\section{Theorem Proofs} \label{sec:proof}
\newlist{notelist}{itemize}{1}
\setlist[notelist]{leftmargin=0pt, itemindent=0pt, labelsep=0.5em, itemsep=2pt, topsep=2pt, label=--}

\subsection{Notation List}\label{sec:proof-notation}
This section collects the notations used in the theorem proofs for clarity and consistency.
\begin{table}[htp!]
\caption{List of notations, explanations, and corresponding values.}
\centering
\resizebox{\textwidth}{!}{
\begin{tabular}{cll}
\toprule
$\textbf{Index}$ & $\textbf{Explanation}$ & \textbf{Support} \\
\midrule
$d_x$ & number of observed variables & $d_x \in \mathbb{N}^+$\\
$d_z$ & number of latent variables & $d_z \in \mathbb{N}^+$ and $d_z \leq d_x$ \\
$t$ & time index & $t \in \mathbb{N}^+$ and $t \geq 3$ \\
$\mathcal{I}$ & index set of observed variables & $\mathcal{I} = \{1,2,\ldots,d_x \}$ \\
$\mathcal{J}$ & index set of latent variables & $\mathcal{J} = \{1,2,\ldots,d_z \}$ \\
\midrule
$\textbf{Variable}$ & &  \\
\midrule
$\mathcal{X}_t$ & support of observed variables in time-index $t$ & $\mathcal{X}_t \subseteq \mathbb{R}^{d_x}$ \\
$\mathcal{Z}_t$ & support of latent variables & $\mathcal{Z}_t \subseteq \mathbb{R}^{d_z}$ \\
$\xx_t$ & observed variables in time-index $t$ & $\xx_t \in \mathcal{X}_t$\\
$\zz_t$ & latent variables in time-index $t$ & $\zz_t \in \mathcal{Z}_t$ \\
$\mathbf{s}_t$ & dependent noise of observations in time-index $t$ & $\mathbf{s}_t \in \mathbb{R}^{d_x}$ \\
$\boldsymbol{\epsilon}_{\xx_t}$ & independent noise for generating $\mathbf{s}_t$ in time-index $t$ & $\boldsymbol{\epsilon}_{\xx_t} \sim p_{\epsilon_{x}}$ \\
$\boldsymbol{\epsilon}_{\zz_t}$ & independent noise of latent variables in time-index $t$ & $\boldsymbol{\epsilon}_{\zz_t} \sim p_{\epsilon_{z}}$ \\
$\zz_{t \setminus [i,j]}$ & latent variables except for $z_{t,i}$ and $z_{t,j}$ in time-index $t$ & / \\
\midrule
$\textbf{Function}$ & & \\
\midrule
$p_{a \mid b}(\cdot \mid b)$ & density function of $a$ given $b$ & / \\
$p_{a,b \mid c}(a, \cdot \mid c)$ & joint density function of $(a,b)$ given $a$ and $c$ & / \\
$\mathbf{pa}(\cdot)$ & variable's parents & / \\
$\mathbf{pa}_{O}(\cdot)$ & variable's parents in observed space & / \\
$\mathbf{pa}_{L}(\cdot)$ & variable's parents in latent space & / \\
$g(\cdot)$ & generating function of SEM from $(\zz_t, \mathbf{s}_t, \xx_t)$ to $\xx_t$ & $\mathbb{R}^{d_z+2d_x} \rightarrow \mathbb{R}^{d_x}$ \\
$m(\cdot)$ & mixing function of ICA from $(\zz_t, \mathbf{s}_t)$ to $\xx_t$ & $\mathbb{R}^{d_z+d_x} \rightarrow \mathbb{R}^{d_x}$ \\
$h_z(\cdot)$ & invertible transformation from $\zz_t$ to $\hat{\zz}_t$ & $\mathbb{R}^{d_z} \rightarrow \mathbb{R}^{d_z}$ \\
$\pi(\cdot)$ & permutation function & $\mathbb{R}^{d_x} \rightarrow \mathbb{R}^{d_x}$ \\
$\text{supp}(\cdot)$ & support matrix of Jacobian matrix & $\mathbb{R}^{d_x \times d_x} \rightarrow \{ 0,1 \}^{d_x \times d_x}$ \\
\midrule
$\textbf{Symbol}$ & & \\
\midrule
$A \rightarrow B$ & $A$ causes $B$ directly & / \\
$A \dashrightarrow B$ & $A$ causes $B$ indirectly & / \\
$\mathbf{J}_g(\xx_t)$ $ $ & Jacobian matrix representing observed causal DAG & $\mathbf{J}_g(\xx_t) \in \mathbb{R}^{d_x \times d_x}$ \\
$\mathbf{J}_g(\xx_t, \mathbf{s}_t)$ $ $ & Jacobian matrix representing mixing structure from $(\xx_t, \mathbf{s}_t)$ to $\xx_t$ & $\mathbf{J}_g(\xx_t, \mathbf{s}_t) \in \mathbb{R}^{d_x \times d_x}$ \\
$\mathbf{J}_{m}(\mathbf{s}_t)$ & Jacobian matrix representing mixing structure from $\mathbf{s}_t$ to $\xx_t$ & $\mathbf{J}_{m}(\mathbf{s}_t) \in \mathbb{R}^{d_x \times d_x}$ \\
$\mathbf{J}_r(\zz_{t-1})$ & Jacobian matrix representing latent time-lagged structure & $\mathbf{J}_r(\zz_{t-1}) \in \mathbb{R}^{d_z \times d_z}$ \\
$\mathbf{J}_r(\zz_t)$ & Jacobian matrix representing instantaneous latent causal graph & $\mathbf{J}_r(\zz_t) \in \mathbb{R}^{d_z \times d_z}$ \\
\bottomrule


\end{tabular}
}
\end{table}

\subsection{Proof of \Cref{thm:blk idn}} \label{prf:blk idn}
\Rev{Our proof builds on the eigendecomposition technique of \citet{hu2008instrumental} (Theorem~1, instantiated in the 3-measurement setting) with two adaptations. First, Lemma~\ref{thm:sub inj} replaces their Assumption~3, which directly assumes operator injectivity, with injectivity that is preserved under the observed nonlinear DAG. Second, Assumption~\hyperref[asp:smooth]{A4} on the existence of a differentiable summary functional $F$ replaces their Assumption~5, which presumes the measurement function is \emph{known}. This second change lifts the result to value-level identification when the generative function is unknown, the regime relevant to representation learning.}{C1}
We first introduce another operator to represent the point-wise distributional multiplication. To maintain generality, we denote two variables as \( a \) and \( b \), with respective support sets \( \mathcal{A} \) and \( \mathcal{B} \).
\begin{definition}(Diagonal Operator) \label{def:diagonal operator}
Consider two random variable $a$ and $b$, density functions $p_a$ and $p_b$ are defined on some support $\mathcal{A}$ and $\mathcal{B}$, respectively. The diagonal operator $D_{b \mid a}$ maps the density function $p_a$ to another density function $D_{b \mid a} \circ p_a$ defined by the pointwise multiplication of the function $p_{b \mid a}$ at a fixed point $b$:
\begin{equation}
    p_{b \mid a}(b \mid \cdot)p_a = D_{b \mid a} \circ p_a, \text{where } D_{b \mid a} = p_{b \mid a}(b \mid \cdot).
\end{equation}
\end{definition}

\begin{proof}
$\xx_{t-1}, \xx_t, \xx_{t+1}$ are conditional independent given $\zz_t$, which implies two equations:
\begin{equation} \label{equ:cond ind}
        p(\xx_{t-1} \mid \xx_t, \zz_t) =  p(\xx_{t-1} \mid \zz_t), \quad 
        p(\xx_{t+1} \mid \xx_t, \xx_{t-1}, \zz_t) =  p(\xx_{t+1} \mid \zz_t).
\end{equation}
We can obtain $p(\xx_{t+1}, \xx_t  \mid  \xx_{t-1})$ directly from the observations, $p(\xx_{t-1})$ and $p(\xx_{t+1}, \xx_t, \xx_{t-1})$, and then the transformation in density function are established by
\begin{align} \label{equ:trans func}
    p(\xx_{t+1}, \xx_t  \mid  \xx_{t-1}) = \underbrace{\int_{\mathcal{Z}_t} p(\xx_{t+1}, \xx_t, \zz_t  \mid  \xx_{t-1})d\zz_t}_{\text{integration over }\mathcal{Z}_t} 
    &= \underbrace{\int_{\mathcal{Z}_t} p(\xx_{t+1} \mid \xx_t, \zz_t, \xx_{t-1})p(\xx_t, \zz_t  \mid  \xx_{t-1})d\zz_t}_{\text{factorization of joint conditional probability}} \nonumber \\ 
    =  \underbrace{\int_{\mathcal{Z}_t} p(\xx_{t+1}  \mid  \zz_t) p(\xx_t, \zz_t \mid \xx_{t-1}) d\zz_t}_{\text{by }p(\xx_{t+1} \mid \xx_t, \xx_{t-1}, \zz_t) =  p(\xx_{t+1} \mid \zz_t)}
    &= \underbrace{\int_{\mathcal{Z}_t} p(\xx_{t+1}  \mid  \zz_t) p(\xx_t  \mid  \zz_t) p(\zz_t \mid \xx_{t-1}) d\zz_t}_{\text{by }p(\xx_{t-1} \mid \xx_t, \zz_t) =  p(\xx_{t-1} \mid \zz_t)}.
\end{align}
Then we show how to transform the \Cref{equ:trans func} to the form of spectral decomposition: 
\begin{align}
    \implies && \int_{\mathcal{X}_{t-1}} p(\xx_{t+1}, \xx_t \mid  \xx_{t-1}) p(\xx_{t-1}) d\xx_{t-1} &= \nonumber \\
    && \int_{\mathcal{X}_{t-1}} \int_{\mathcal{Z}_t} p(\xx_{t+1} \mid  \zz_t) p(\xx_t  \mid  \zz_t) & p(\zz_t \mid \xx_{t-1}) p(\xx_{t-1}) d\zz_t d\xx_{t-1} ~\label{eq:3prf_0} \\
    \implies && [L_{\xx_t; \xx_{t+1}  \mid  \xx_{t-1}}p](\xx_{t+1}) &= [L_{{\xx_{t+1} \mid \zz_t}} D_{{\xx_t  \mid  \zz_t}} L_{\zz_t \mid  \xx_{t-1}}p](\xx_{t+1}), \label{eq:3prf_2} \\
    \implies && L_{ \xx_t; \xx_{t+1} \mid  \xx_{t-1}} &= L_{{\xx_{t+1} \mid \zz_t}} D_{\xx_t  \mid  \zz_t} L_{\zz_t \mid  \xx_{t-1}} \label{eq:3prf_3}  \\
    \implies && \int_{\xx_t \in \mathcal{X}_t} L_{ \xx_t; \xx_{t+1} \mid  \xx_{t-1}} \, d\xx_t &= \int_{\xx_t \in \mathcal{X}_t} L_{{\xx_{t+1} \mid \zz_t}} D_{\xx_t  \mid  \zz_t} L_{\zz_t \mid  \xx_{t-1}} \, d\xx_t \label{eq:3prf_4} \\
    \implies && L_{\xx_{t+1} \mid \xx_{t-1}} &= L_{\xx_{t+1} \mid \zz_t}L_{\zz_t \mid \xx_{t-1}} \label{eq:3prf_5} \\
    \implies && L^{-1}_{\xx_{t+1} \mid \zz_t}L_{\xx_{t+1} \mid \xx_{t-1}} &= L_{\zz_t \mid \xx_{t-1}} \label{eq:3prf_6} \\
    \implies && L_{ \xx_t; \xx_{t+1} \mid  \xx_{t-1}} &= L_{{\xx_{t+1} \mid \zz_t}} D_{\xx_t  \mid  \zz_t} L^{-1}_{\xx_{t+1} \mid \zz_t}L_{\xx_{t+1} \mid \xx_{t-1}} \label{eq:3prf_7} \\
     \implies && L_{\xx_t; \xx_{t+1}  \mid  \xx_{t-1}}L^{-1}_{\xx_{t+1} \mid \xx_{t-1}} &= L_{\xx_{t+1} \mid \zz_t}D_{\xx_t \mid \zz_t}L^{-1}_{\xx_{t+1} \mid \zz_t}. \label{eq:3prf_8} \\
     \implies && L_{\xx_{t+1} \mid \zz_t} D_{\xx_t \mid \zz_t} L^{-1}_{\xx_{t+1} \mid \zz_t} &= (CL_{\xx_{t+1} \mid \zz_t} P) (P^{-1} D_{\xx_t \mid \zz_t} P) (P^{-1} L^{-1}_{\xx_{t+1} \mid \zz_t} C^{-1}) \label{eq:3prf_8.5} \\
     \implies && L_{\xx_{t+1} \mid \zz_t} = C L_{\xx_{t+1} \mid \hat{\zz}_t} &P, \quad 
    D_{\xx_t \mid \zz_t} = P^{-1} D_{\xx_t \mid \hat{\zz}_t} P \label{eq:3prf_9}
\end{align}
where 
\begin{itemize}[leftmargin=1em]
    \item in ~\Cref{eq:3prf_0}, we add the integration over $\mathcal{X}_{t-1}$ in both sides of ~\Cref{equ:trans func}. s
    \item in ~\Cref{eq:3prf_2}, we replace the probability with operators by using \Cref{equ:lin op def} and \Cref{def:diagonal operator}. Specifically, we have: $L_{\xx_t; \xx_{t+1} \mid  \xx_{t-1}} = \int_{\mathcal{X}_{t-1}}p_{\xx_{t+1}}(\xx_t, \cdot \mid \xx_{t-1})p(\xx_{t-1})d{\xx_{t-1}}$. 
    \item in ~\Cref{eq:3prf_6}, the operator $L_{\xx_{t+1} \mid \zz_t}$ is injective by Assumption~\ref{asp:linear operator}
    \item in ~\Cref{eq:3prf_7}, the $L_{{\zz_t \mid \xx_{t-1}}}$ in ~\Cref{eq:3prf_3} is substituted by ~\Cref{eq:3prf_6}:
    \item in ~\Cref{eq:3prf_8}, if $L_{\xx_{t-1} \mid \xx_{t+1}}$ is injective, then $L^{-1}_{\xx_{t+1} \mid \xx_{t-1}}$ exists and is densely defined over $\mathcal{F}(\mathcal{X}_{t+1})$. 
    \item in ~\Cref{eq:3prf_9}, Assumption~\ref{asp:bounded density} ensures that \( L_{\xx_t; \xx_{t+1}  \mid  \xx_{t-1}}L^{-1}_{\xx_{t+1} \mid \xx_{t-1}} \) is bounded; by the uniqueness of spectral decomposition (see e.g., ~\citep{conway1994course} Ch. VII and \citep{dunford1988linear} Theorem XV 4.5), $L_{\xx_{t+1} \mid \zz_t}D_{\xx_t \mid \zz_t}L^{-1}_{\xx_{t+1} \mid \zz_t}$ admits a unique spectral decomposition in which the eigenvalues, \ie, $D_{\xx_t \mid \zz_t}$, which are precisely the entries of $\{ p_{\xx_t \mid \zz_t}(\xx_t \mid \zz_t) \}$, and eigenfunctions, \ie, $D_{\xx_t \mid \zz_t}$, which columns are $\{ p_{\xx_{t+1} \mid \zz_t}(\cdot \mid \zz_t) \}$, up to standard indeterminacies. \(C\) is an nonzero scalar rescaling eigenvalues, and \(P\) is a operator permuting the eigenvalues and eigenfunctions. 
\end{itemize}
We obtain a unique spectral decomposition in ~\Cref{eq:3prf_9} with permutation and scaling indeterminacies. In the following, we will show how these indeterminacies can be resolved---if not, what informative results can still be inferred.

First, considering the arbitrary scaling $C$, since the normalizing condition 
\begin{equation}
    \int_{\mathcal{X}_{t+1}} p_{\xx_{t+1} \mid \hat{\zz}_t} \, d\xx_{t+1} = 1
\end{equation}
must hold for every $\hzz_t$, one only solution of $\int_{\mathcal{X}_{t+1}} Cp_{\xx_{t+1} \mid \zz_t} \, d\xx_{t+1} = 1$ is to set $C=1$. 
 
Second, regarding the permutation indeterminacy, we start from $D_{\xx_t \mid \zz_t} = P^{-1} D_{\xx_t \mid \hat{\zz}_t} P$. The operator, \( D_{\xx_t \mid \zz_t} \), corresponding to the set $\{ p_{\xx_t \mid \zz_t}(\xx_t \mid \zz_t) \}$ for fixed $\xx_t$ and all $\zz_t$, admits a unique solution ($P$ only change the entry position):
\begin{equation}
\{ p_{\xx_t \mid \zz_t}(\xx_t \mid \zz_t) \} = \{ p_{\xx_t \mid \hzz_t}(\xx_t \mid \hzz_t) \}, \quad \text{ for all } \zz_t, \hzz_t     
\end{equation}
Due to the set is unorder, the only way to match the R.H.S. with the L.H.S. in a consistent order is to exchange the conditioning variables, that is, 
\begin{align}
    &&  \{ p_{\xx_t \mid \zz_t}(\xx_t \mid \zz_t^{(1)}), p_{\xx_t \mid \zz_t}(\xx_t \mid \zz_t^{(2)}), \ldots \}
    &= \{ p_{\xx_t \mid \hzz_t}(\xx_t \mid \hzz_t^{(1)}), p_{\xx_t \mid \hzz_t}(\xx_t \mid \hzz_t^{(2)}), \ldots \} \label{eq:lik-match} \\
    \implies && [ p_{\xx_t \mid \zz_t}(\xx_t \mid \zz_t^{(\pi(1))}), p_{\xx_t \mid \zz_t}(\xx_t \mid \zz_t^{(\pi(2))}), \ldots ] &= [ p_{\xx_t \mid \hzz_t}(\xx_t \mid \hzz_t^{(\pi(1))}), p_{\xx_t \mid \hzz_t}(\xx_t \mid \hzz_t^{(\pi(2))}), \ldots ]\label{eq:per-sam} 
\end{align}
where superscript $(\cdot)$ denotes the index of a conditioning variable, and $\pi$ is reindexing the conditioning variables. \Rev{Strictly speaking, $\pi$ acts on the (uncountable) support $\mathcal{Z}_t$ rather than on integer indices; the integer subscript notation in Eqs.~\ref{eq:lik-match}--\ref{eq:per-sam} is a convenient shorthand for an arbitrary enumeration of points in $\mathcal{Z}_t$. The mathematically rigorous statement is that the relabeling is a measurable bijection $h_z:\mathcal{Z}_t \to \mathcal{Z}_t$ on the continuous support, as formalized in Eq.~\ref{eq:supp-match} below.}{C3} We use a relabeling map \(h\) to represent its corresponding value mapping:
\begin{equation}
    p_{\xx_t \mid \zz_t}(\xx_t \mid h(\zz_t)) = p_{\xx_t \mid \hzz_t}(\xx_t \mid \hzz_t), \quad \text{ for all } \zz_t, \hzz_t. \label{eq:supp-match}
\end{equation}
By Assumption~\ref{asp:nonredundant}, different \(\zz_t\) corresponds to different \(p_{\xx_t \mid \zz_t}(\xx_t \mid \zz_t)\), there is no repeated element in $\{ p_{\xx_t \mid \zz_t}(\xx_t \mid \zz_t) \}$ (and $\{ p_{\xx_t \mid \hat{\zz}_t}(\xx_t \mid \hat{\zz}_t) \}$). Hence, the relabelling map $h$ is one-to-one (invertible).

Furthermore, Assumption 4 implies that \(p_{\xx_t \mid \zz_t}(\xx_t \mid h(\zz_t))\) determines a unique \( h(\zz_t) \). The same holds for the $p_{\xx_t \mid \hat{\zz}_t}(\xx_t \mid \hat{\zz}_t)$, implying that
\begin{equation}
    p_{\xx_t \mid \zz_t}(\xx_t \mid h(\zz_t)) = p_{\xx_t \mid \hzz_t}(\xx_t \mid \hzz_t) \implies \hzz_t = h(\zz_t).
\end{equation}
Next, Assumption~\ref{asp:smooth} implies that the function $h$ must be differentiable. Since the VAE is differentiable, we can learn a differentiable function $h$ that satisfies Assumption~\ref{asp:smooth}. Consider $\hzz_t$ related to $\zz_t$ via $\hzz_t = h(\zz_t)$. Then, we have
\begin{equation}
    F\left[ p_{\xx_t \mid \hzz_t}(\cdot \mid \zz_t) \right] = F\left[ p_{\xx_t \mid \zz_t}(\cdot \mid h(\zz_t)) \right] = h(\zz_t),
\end{equation}
which is equal to $\hzz_t$ only if $h$ is differentiable. 

To ensure the latent dimension $d_z$ is also identifiable, we analyze two scenarios :
\begin{enumerate}[label=\roman*., leftmargin=*]
    \item $d_{\hat{z}} > d_z$: $d_z$ latent components in $\hat{\zz}_t$ are sufficient to explain $\xx_t$, \ie,
    \begin{equation}
        p(\xx_t \mid \zz_{t,:d_{\hat{z}} - d_z}, \zz^{(1)}_{t,d_{\hat{z}} - d_z:}) = p(\xx_t \mid \zz_{t,:d_{\hat{z}} - d_z}, \zz^{(2)}_{t,d_{\hat{z}} - d_z:}),
    \end{equation}
    which contradicts the Assumption~\ref{asp:nonredundant}. 
    \item $d_{\hat{z}} < d_z$: This suggests that only $d_{\hat{z}}$ dimensions are sufficient to reconstruct $\xx_t$, leaving $d_z - d_{\hat{z}}$ components constant, which violates that there are $d_z$ latent \textit{variables}. 
\end{enumerate}
\end{proof}

\Rev{\paragraph{Relation to \citet{hu2008instrumental}.} Theorem~\ref{thm:blk idn} shares the eigendecomposition machinery in Eqs.~\ref{eq:3prf_0}--\ref{eq:3prf_9} with the instrumental-variable result of \citet{hu2008instrumental}, and the two arguments diverge at the input, the injectivity assumption, the pinning-down step, and the downstream use (Table~\ref{tab:hu-comparison}). The gap relevant to representation learning is that their Assumption~5 requires the measurement function $M$ to be a priori known, a setting natural in econometrics with calibrated instruments but unavailable when $M$ is to be learned. Replacing it with the weaker Assumption~\hyperref[asp:smooth]{A4} on a differentiable functional $F$ delivers value-level identifiability $\hat{\zz}_t = h_z(\zz_t)$, which is the prerequisite for the downstream component-wise identifiability and Jacobian-based causal discovery in Theorems~\ref{thm:func-equ}--\ref{thm:ident B}. Lemma~\ref{thm:sub inj} adds a second adaptation by establishing that operator injectivity survives the observed-space DAG.

\begin{table}[h]
\centering
\small
\caption{Identification chain of \citet{hu2008instrumental} compared with the present work.}
\label{tab:hu-comparison}
\begin{tabular}{lll}
\toprule
\textbf{Step} & \textbf{Hu \& Schennach (2008)} & \textbf{CaDRe (this work)} \\
\midrule
Input & Multi-view $(Y, X, Z)$ with calibrated instrument & Temporal $(\xx_{t-1}, \xx_t, \xx_{t+1})$ with causal DAG inside $\xx_t$ \\
Injectivity & Directly assumed (their Asm.~3) & A2 with Lemma~\ref{thm:sub inj} (preserved under observed DAG) \\
Eigendecomposition & Uniqueness via \citep{dunford1988linear}, XV.4.5 & Same technique \\
Pinning down & Asm.~5: measurement function $M$ known & A4: differentiable functional $F$ exists, $M$ unknown \\
Result & Distribution-level identifiability & Value-level identifiability $\hat{\zz}_t = h_z(\zz_t)$ \\
Downstream & Not pursued & Component-wise CRL with observed CD (Thms.~\ref{thm:func-equ}--\ref{thm:ident B}) \\
\bottomrule
\end{tabular}
\end{table}}{C2}

\paragraph{More Discussions of Assumption~\ref{asp:linear operator}} The injectivity of the operator enables us to take inverses of certain operators, which is commonly made in nonparametric identification~\citep{hu2008instrumental,carroll2010identification, hu2012nonparametric}. Intuitively, different input distributions correspond to different output distributions. In the context of the climate system, this represents the necessity of temporal variability. However, it is difficult to formalize it in terms of functions. We give some examples in terms of $p_a \Rightarrow p_b$ to make it understandable:
    \begin{example}
    $b = g(a)$, where $g$ is an invertible function.
    \end{example}
    \begin{example}
    $b = a + \epsilon$, where $p(\epsilon)$ must not vanish everywhere after the Fourier transform (Theorem 2.1 in \cite{mattner1993some}).
    \end{example}
    \begin{example}
    $b = g(a) + \epsilon$, where the same conditions from Examples 1 and 2 are required.
    \end{example}
    \begin{example}
    $b = g_1(g_2(a) + \epsilon)$, a post-nonlinear model with invertible nonlinear functions $g_1, g_2$, combining the assumptions in \textbf{Examples 1-3}.
    \end{example}
    \begin{example}
    $b = g(a, \epsilon)$, where the joint distribution $p(a, b)$ follows an exponential family.
    \end{example}
    \begin{example}
    $b = g(a, \epsilon)$, a general nonlinear formulation. Certain deviations from the nonlinear additive model (\textbf{Example 3}), e.g., polynomial perturbations, can still be tractable.
    \end{example}
\subsection{Component-Wise Identifiability of Latent Variables} \label{prf:z-idt}
\begin{theorem}(Component-Wise Identifiability of Latent Variables~\citep{li2025on}) \label{thm:zijian 1}
    Let $\mathbf{c}_t = \{\zz_{t-1}, \zz_t\}$ and $\mathcal{M}_{\mathbf{c}_t}$ be the variable set of two consecutive timestamps and the corresponding Markov network, respectively. Suppose the following assumptions hold:

    \begin{enumerate}[label=\roman*., leftmargin=*]
    \item \underline{(Smooth and Positive Density)}: The probability function of the latent variables $\mathbf{c}_t$ is smooth and positive, \ie, $p_{\mathbf{c}_t}$ is third-order differentiable and $p_{\mathbf{c}_t} > 0$ over $\mathbb{R}^{2n}$.
    
    \item \underline{(Sufficient Variability)}s: Denote $|\mathcal{M}_{\mathbf{c}_t}|$ as the number of edges in Markov network $\mathcal{M}_{\mathbf{c}_t}$. Let
    \begin{equation}
    \begin{aligned}
        w(m) &= \left(
    \frac{\partial^3 \log p(\mathbf{c}_t | \zz_{t-2})}{\partial c_{t,1}^2 \partial z_{t-2,m}}, \ldots, \frac{\partial^3 \log p(\mathbf{c}_t | \zz_{t-2})}{\partial c_{t,2n}^2 \partial z_{t-2,m}}
    \right) \oplus \\
    & \left(
    \frac{\partial^2 \log p(\mathbf{c}_t | \zz_{t-2})}{\partial c_{t,1} \partial z_{t-2,m}}, \ldots, \frac{\partial^2 \log p(\mathbf{c}_t | \zz_{t-2})}{\partial c_{t,2n} \partial z_{t-2,m}}
    \right) \oplus
    \left(
    \frac{\partial^3 \log p(\mathbf{c}_t | \zz_{t-2})}{\partial c_{t,i} \partial c_{t,j} \partial z_{t-2,m}} \right)_{(i,j) \in \mathcal{E}(\mathcal{M}_{\mathbf{c}_t})},
    \end{aligned}
    \end{equation}
    where $\oplus$ denotes the concatenation operation and $(i,j) \in \mathcal{E}(\mathcal{M}_{\mathbf{c}_t})$ denotes all pairwise indices such that $c_{t,i}, c_{t,j}$ are adjacent in $\mathcal{M}_{\mathbf{c}_t}$. For $m \in \{1, \ldots, n\}$, there exist $4n + 2|\mathcal{M}_{\mathbf{c}_t}|$ different values of $\zz_{t-2,m}$ as the $4n + 2|\mathcal{M}_{\mathbf{c}_t}|$ values of vector functions $w(m)$ are linearly independent.
    \item \underline{(Sparse Latent Process):} For any $z_{t,i}\in \rvz_t$, the intimate neighbor set of $z_{t,i}$ is an empty set, where the intimate neighbor set is defined as 
    \begin{definition}(Intimate Neighbor Set)
\label{def: Intimate}
    Consider a Markov network $\mathcal{M}_Z$ over variables set $Z$, and the intimate neighbor set of variable $z_{t,i}$ is
    \begin{equation}
    \Psi_{\mathcal{M}_{\mathbf{c}_t}}\!(c_{t,i}) = 
    \left\{
        c_{t,j} \;\middle|\;
        \begin{aligned}
            &c_{t,j} \text{ is adjacent to } c_{t,i} \text{ and also adjacent } \\
            \text{to }&\text{all other neighbors of } c_{t,i},\; 
            c_{t,j} \in \mathbf{c}_t \setminus \{c_{t,i}\}
        \end{aligned}
    \right\}
\end{equation}
\end{definition}
    \item \underline{(Transition Variability):} For any pair of adjacent latent variables $z_{t,i}, z_{t,j}$ at time step $t$, their time-delayed parents are not identical, \ie, $\mathbf{pa}(z_{t,i}) \neq \mathbf{pa}(z_{t,j})$.
\end{enumerate}
Then for any two different entries $\hat{c}_{t,k}, \hat{c}_{t,l} \in \hat{\mathbf{c}}_t$ that are \textbf{not adjacent} in the Markov network $\mathcal{M}_{\hat{\mathbf{c}}_t}$ over estimated $\hat{\mathbf{c}}_t$,
\begin{enumerate}[label=\roman*., leftmargin=*]
    \item The estimated Markov network $\mathcal{M}_{\hat{\rvc}_t}$ is isomorphic to the ground-truth Markov network $\mathcal{M}_{\mathbf{c}_t}$.
    \item There exists a permutation $\pi$ of the estimated latent variables, such that $z_{t,i}$ and $\hat{z}_{t,\pi(i)}$ is one-to-one corresponding, \ie, $z_{t,i}$ is component-wise identifiable.
    \item The causal graph of the latent causal process is identifiable.
\end{enumerate}
\end{theorem}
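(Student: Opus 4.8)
The plan is to leverage the block-level result already in Theorem~\ref{thm:blk idn}: because the estimated and ground-truth models induce the same observed law, there is an invertible differentiable map $h$ with $\hat{\mathbf{c}}_t = h(\mathbf{c}_t)$ on the $2n$-dimensional joint space of two consecutive timestamps, and likewise $\hat{\zz}_{t-2} = h_z(\zz_{t-2})$, so that conditioning on $\hat{\zz}_{t-2}$ is equivalent to conditioning on $\zz_{t-2}$. The goal is then to refine $h$ into a coordinate permutation (composed with per-coordinate reparametrizations), which restricted to the $\zz_t$ block is exactly component-wise identifiability. The workhorse is the change-of-variables identity for conditional log-densities,
\begin{equation*}
\log p(\mathbf{c}_t \mid \zz_{t-2}) = \log p(\hat{\mathbf{c}}_t \mid \zz_{t-2}) + \log \left| \det \mathbf{J}_{h} \right|,
\end{equation*}
where Assumption~(i) ensures all terms are smooth and the densities are strictly positive, so the log-densities and all needed derivatives are well defined.

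First I would translate the graphical structure into analytic constraints via the differential (pairwise) Markov characterization: for a positive smooth density, two entries are non-adjacent in the Markov network precisely when the mixed second partial of the log-density vanishes. Thus for $\hat{c}_{t,k},\hat{c}_{t,l}$ non-adjacent in $\mathcal{M}_{\hat{\mathbf{c}}_t}$ we get $\partial^2 \log p(\hat{\mathbf{c}}_t \mid \zz_{t-2})/\partial \hat{c}_{t,k}\partial \hat{c}_{t,l} = 0$, and similarly for non-adjacent pairs in the true network. Applying $\partial^2/\partial \hat{c}_{t,k}\partial \hat{c}_{t,l}$ to the identity above and expanding by the chain rule produces a sum of terms built from the first and second derivatives of $\log p(\mathbf{c}_t\mid \zz_{t-2})$, weighted by entries of $\mathbf{J}_{h^{-1}}$ and its derivatives, plus a $\log|\det|$ contribution that does not depend on $\zz_{t-2}$. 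Differentiating once more with respect to each coordinate $z_{t-2,m}$ kills the $\zz_{t-2}$-independent pieces and leaves a linear relation whose coefficient vectors are exactly the $w(m)$ appearing in Assumption~(ii). Linear independence of $w(m)$ across the prescribed $4n+2|\mathcal{M}_{\mathbf{c}_t}|$ values of $z_{t-2,m}$ then forces all weighting coefficients to vanish, which constrains the support of $\mathbf{J}_h$: informally, $h$ cannot introduce an interaction between two coordinates unless a corresponding edge is present in both Markov networks.

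Next I would extract the combinatorial content of Assumptions~(iii) and~(iv). The vanishing-coefficient constraints say the supports of $\mathbf{J}_h$ and $\mathbf{J}_{h^{-1}}$ are simultaneously compatible with $\mathcal{M}_{\mathbf{c}_t}$ and $\mathcal{M}_{\hat{\mathbf{c}}_t}$; a priori this still permits $h$ to mix a clique of mutually adjacent latent variables, since such block-mixing is invisible to second-order cross-derivatives. Emptiness of every intimate neighbor set (Assumption~(iii)) is exactly what excludes this: a genuine nontrivial mixing of two coordinates would force one to be an intimate neighbor of the other, a contradiction. This upgrades $h$ to a permutation $\pi$ with invertible per-coordinate maps and, as a byproduct, makes $\mathcal{M}_{\hat{\mathbf{c}}_t}$ isomorphic to $\mathcal{M}_{\mathbf{c}_t}$, giving conclusion~(i) and, restricted to the $\zz_t$ block, conclusion~(ii). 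For conclusion~(iii), with each latent component recovered up to permutation and reparametrization the time-lagged edges $\zz_{t-1}\to\zz_t$ are read off from the recovered transition; Assumption~(iv), that adjacent instantaneous nodes have distinct time-delayed parent sets, together with the faithfulness of Assumption~\ref{con:faith dag}, then disambiguates the orientation of the instantaneous edges, so the full latent causal graph is identified.

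The step I expect to be the main obstacle is the passage from the analytic Jacobian-support constraints to an honest permutation, i.e., the argument invoking the empty intimate-neighbor condition. The derivative bookkeeping in the earlier steps is long but mechanical, whereas ruling out all admissible sparsity patterns of $\mathbf{J}_h$ other than a permutation — in particular handling clusters of mutually adjacent variables where local block-mixing does not show up in cross-derivatives — is a genuinely graph-theoretic argument that has to use Assumption~(iii) with care, and it is also where the precise meaning of ``isomorphic Markov networks'' in conclusion~(i) is nailed down.
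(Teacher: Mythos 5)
Your proposal is correct and follows essentially the same route the paper indicates: the paper only gives a brief sketch for this imported result (deferring to \citet{li2025on}), namely block-level recovery $\hat{\zz}_t=h_z(\zz_t)$ from Theorem~\ref{thm:blk idn}, the cross-derivative characterization of non-adjacency in the Markov network, sufficient variability to eliminate the nuisance terms, and the sparse-latent-process/intimate-neighbor condition to collapse $h$ to a permutation. Your write-up simply fleshes out those same ingredients (change of variables for the conditional log-density, differentiation in $z_{t-2,m}$ to isolate the $w(m)$ vectors, and the graph-theoretic step excluding clique mixing), so no substantive divergence from the paper's argument.
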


\textbf{Proof Sketch and Discussions.} Once latent space is recovered by Theorem~\ref{thm:blk idn}, \ie, \textit{(i)} \(\hat{\zz}_t = h_z(\zz_t)\) is established, leveraging two properties of latent space—namely, \textit{(ii)} the sparsity in the latent Markov network, and \textit{(iii)} $z_{t,i} \perp\!\!\!\!\perp z_{t,j} \mid \zz_{t-1}, \zz_{t / [i,j]}$ if $z_{t,i}, z_{t,j} \ (i \neq j)$ are not adjacent in Markov network, we can obtain the component-wise identifiability of latent variables under \textit{sufficient variability} assumption. In contrast to prior work on CRL with component-wise identifiability~\citep{zhang2024causal, li2025on}, which assumes multiple distributions or temporal steps while only using a single measurement for the latent space recovery, typically requiring the full invertible mapping $g$ to recover the $\hzz$ in a block-wise manner, our approach fully exploits the temporally adjacent measurements, thereby avoiding the need for the strong assumptions of invertible/deterministic $g$.

\subsection{Proof of Lemma~\ref{thm:ext ica}} \label{prf:sem to ica}
\begin{definition}(Causal Order) \label{def:cau_ord}
    $x_{t,i}$ is in the $\tau$-th causal order if only observed variables in the $(\tau - 1)$-th causal order directly influence it. We specify $\zz_t$ is in the $0$-th causal order. 
\end{definition}
For an observed variable \(x_{t,i}\), we define the set \(\mathcal{P}\) to include all variables in \(\xx_t\) involved in generating \(x_{t,i}\), initialized as \(\mathcal{P} = \mathbf{pa}_{O}(x_{t,i})\). The upper bound of the cardinality of \(\mathcal{P}\) is given by \(\mathcal{U}(|\mathcal{P}|)\), which satisfies \(\mathcal{U}(|\mathcal{P}|) = d_x - 1\) initially. Let \(\mathcal{Q}\) denote the set of latent variables, and define the separated set as \(\mathcal{S}\), where \( g_{s_i}(\mathbf{pa}_{L}(x_{t,i}), \epsilon_{x_{t,i}}) \) is denoted by \( s_{t,i} \). Initially, \(\mathcal{S} = \{ s_{t,i} \}\). We express \( x_{t,i} \) as
$
x_{t,i} = g_{i}\left( \mathcal{P}, \mathcal{S}, \mathcal{Q} \right)
$,
and traverse all \( x_{t,j} \in \xx_t \) in descending causal order \(\tau_j\), performing the following operations:

\begin{enumerate}[label=\roman*.,leftmargin=*]
    \item Remove \( x_{t,j} \) from \(\mathcal{P}\) and apply ~\Cref{equ:dgp} to obtain
    \begin{equation} \label{equ:dev sep mod 1}
        x_{t,i} = f_1\left( \mathcal{P} \setminus \{ x_{t,j} \}, \mathcal{S}, \mathcal{Q}, \mathbf{pa}_{O}(x_{t,j}), \mathbf{pa}_{L}(x_{t,j}), s_{t,j} \right).
    \end{equation}
    Then, update \(\mathcal{P} \leftarrow (\mathcal{P} \setminus \{ x_{t,j} \}) \cup \mathbf{pa}_{O}(x_{t,j})\) and \(\mathcal{Q} \leftarrow \mathcal{Q} \cup \mathbf{pa}_{L}(x_{t,j})\). By Assumption~\ref{con:faith dag}, \(x_{t,j}\) cannot reappear in the set of its ancestors, resulting in \(\mathcal{U}(|\mathcal{P}|) \leftarrow \mathcal{U}(|\mathcal{P}|) - 1\).

    \item Assumption~\ref{con:faith dag} also ensures that a variable with a lower causal order does not appear in the generation of its descendants. Hence, \(x_{t,j}\) cannot appear in the generation of its descendants, since their causal orders are larger than \(\tau_j\). Similarly, \(s_{t,j}\), which is involved in generating \(x_{t,j}\), does not appear in the generation of its descendants. Thus, \(s_{t,j} \notin \mathcal{S}\). Define the new separated set as \(\mathcal{S} \leftarrow \mathcal{S} \cup \{ s_{t,j} \}\), giving
    \begin{equation} \label{equ:dev sep mod 2}
        x_{t,i} = f_2\left( \mathcal{P}, \mathcal{S}, \mathcal{Q} \right),
    \end{equation}
    where the new cardinality is updated as \(|\mathcal{S}| \leftarrow |\mathcal{S}| + 1\).
\end{enumerate}

Given that \(\mathcal{U}(|\mathcal{P}|) \geq |\mathcal{P}|\), \(\mathcal{U}(|\mathcal{P}|)\) ensures that this iterative process can be performed until \(|\mathcal{P}| = 0\). According to the definition of data generating process, all the aforementioned functions are partially differentiable w.r.t. \(\mathbf{s}_t\) and \(\xx_t\), or they are compositions of such functions. As a result, \(\mathcal{Q} = \mathbf{an}_{\zz_t}(x_{t,i})\), and there exists a function \( g_{m_i} \) such that
\[
x_{t,i} = g_{m_i}(\mathbf{an}_{\zz_t}(x_{t,i}), \mathbf{s}_t).
\]
Moreover, we observe that $\mathbf{s}_t$ is in fact the ancestors \(\mathbf{an}_{\epsilon_{\xx_t}}(x_{t,i}) = \{ \epsilon_{x_{t,j}} \mid s_{t,j} \in \mathcal{S} \}\), which are implied in this derivation process since \(\epsilon_{x_{t,j}}\) is in one-to-one correspondence with \(s_{t,j}\) through indexing.

\subsection{Proof of Theorem~\ref{thm:func-equ}} \label{prf:func-equ}
Considering the mixing function $m$, and the functional relation $s_{t,j} \rightarrow x_{t,i} $, corresponding $[\mathbf{J}_{m}(\mathbf{s}_t)]_{i,j}$, where $i, j$ indicates the row and column index of the Jacobian matrix, respectively. 

\textbf{For the elements $i \neq j$:} If there is a directed functional relationship \( x_{t,j} \rightarrow x_{t,i} \), the corresponding element of the Jacobian matrix is \( \frac{\partial x_{t,i}}{\partial x_{t,j}} \). If the relationship is indirect: \( x_{t,j} \dashrightarrow x_{t,i} \), then for each \( x_{t,k} \in \mathbf{pa}_{O}(x_{t,i}) \), there must exist either an indirect-direct path \( x_{t,j} \dashrightarrow x_{t,k} \rightarrow x_{t,i} \) or a direct-direct path \( x_{t,j} \rightarrow x_{t,k} \rightarrow x_{t,i} \). In summary, through the chain rule, we obtain 
\begin{equation} \label{equ:func equ1}
\begin{aligned} 
    [\mathbf{J}_{m}(\mathbf{s}_t)]_{i,j} 
    &= \sum_{x_{t,k} \in \mathbf{pa}_{O}(x_{t,i})}\frac{\partial x_{t,i}}{\partial x_{t,k}} \cdot \frac{\partial x_{t,k}}{\partial s_{t,j}}. \\ 
\end{aligned}
\end{equation}
For each $x_{t,k} \notin \mathbf{pa}_{O}(x_{t,i})$, $\frac{\partial x_{t,i}}{\partial x_{t,k}} = 0$, ~\Cref{equ:func equ1} could be rewritten as 
\begin{equation}
\begin{aligned}
    [\mathbf{J}_{m}(\mathbf{s}_t)]_{i,j} &= \sum_{x_{t,k} \in \mathbf{pa}_{O}(x_{t,i})}\frac{\partial x_{t,i}}{\partial x_{t,k}} \cdot \frac{\partial x_{t,k}}{\partial s_{t,j}} + \sum_{x_{t,k} \notin \mathbf{pa}_{O}(x_{t,i})}\frac{\partial x_{t,i}}{\partial x_{t,k}} \cdot \frac{\partial x_{t,k}}{\partial s_{t,j}}\\
    &= \sum^{d_x}_{k=1}\frac{\partial x_{t,i}}{\partial x_{t,k}} \cdot \frac{\partial x_{t,k}}{\partial s_{t,j}} = \sum^{d_x}_{k=1} [\mathbf{J}_{g}(\xx_t)]_{i,k} \cdot [\mathbf{J}_{m}(\mathbf{s}_t)]_{k,j}.
\end{aligned}
\end{equation}
\textbf{For the elements $i = j$:} For each $x_{t,k} \in \mathbf{pa}_{O}(x_{t,i})$, DAG structure ensures that $x_{t,i}$ does not appear in the set of ancestors of itself. Consequently, due to the one-to-one correspondence between $s_{t,k}$ and $x_{t,i}$, we also have that $\frac{\partial x_{t,k}}{\partial s_{t,i}} = 0$. Thus, we obtain
\begin{equation}
\begin{aligned}
     [\mathbf{J}_{m}(\mathbf{s}_t)]_{i,i} &= \frac{\partial x_{t,i}}{\partial s_{t,i}} + 0 = \frac{\partial x_{t,i}}{\partial s_{t,i}} + \sum^{d_x}_{k=1} [\mathbf{J}_{g}(\xx_t)]_{i,k} \cdot [\mathbf{J}_{m}(\mathbf{s}_t)]_{k,i}.
\end{aligned}
\end{equation}
Since for $k = i$, it holds that $[\mathbf{J}_{g}(\xx_t)]_{i,k} = 0$, and for $k \neq i$, we have $[\mathbf{J}_{m}(\mathbf{s}_t)]_{k,i} = 0$. 

Defining $\mathbf{D}_{m}(\mathbf{s}_t) = \text{diag}(\frac{\partial x_{t,1}}{\partial s_{t,1}}, \ldots, \frac{\partial x_{t,d_x}}{\partial s_{t,d_x}})$, we can summarize the result as
\begin{equation} \label{equ:Jacobians}
    \mathbf{J}_{g}(\xx_t) \mathbf{J}_{m}(\mathbf{s}_t) = \mathbf{J}_{m}(\mathbf{s}_t) - \mathbf{D}_{m}(\mathbf{s}_t).
\end{equation}

\subsection{Proof of Corollary~\ref{cor:DAG-inv}} \label{prf:DAG-inv}
\Cref{equ:Jacobians} states that
\begin{equation}
    (\mathbf{I}_{d_x} - \mathbf{J}_{g}(\xx_t))\mathbf{J}_{m}(\mathbf{s}_t) = \mathbf{D}_{m}(\mathbf{s}_t).
\end{equation}

From the DAG structure specified in Condition~\ref{con:faith dag} and the functional faithfulness assumption in Assumption~\ref{asp:func faith}, the Jacobian matrix \(\mathbf{J}_{g}(\xx_t)\) can be permuted into a lower triangular form via identical row and column permutations. Thus, the matrix \(\mathbf{I}_{d_x} - \mathbf{J}_{g}(\xx_t)\) is invertible for all \(\xx_t \in \mathcal{X}_t\).

Since \(\mathbf{D}_{m}(\mathbf{s}_t)\) is obtained via multiplication with \((\mathbf{I}_{d_x} - \mathbf{J}_{g}(\xx_t))\), it follows that  
\begin{equation}
    (\mathbf{I}_{d_x} - \mathbf{J}_{g}(\xx_t))^{-1}\mathbf{D}_{m}(\mathbf{s}_t)
\end{equation}  
is well-defined and invertible. This, in turn, implies that \(\mathbf{J}_{m}(\mathbf{s}_t)\) is invertible.

Furthermore, we establish that  
\begin{equation}
    \text{supp}\left(\mathbf{I}_{d_x} - \mathbf{J}_{g}(\xx_t)\right) = \text{supp}\left(\mathbf{J}_{g}(\xx_t, \mathbf{s}_t)\right)
\end{equation}
since the diagonal entries of \(\mathbf{J}_{g}(\xx_t, \mathbf{s}_t)\) are nonzero. Given that \(\mathbf{J}_{g}(\xx_t, \mathbf{s}_t)\) inherits the lower triangular structure after permutation, it must also be invertible.

\subsection{Proof of \Cref{thm:ident B}} \label{prf:ide B}
We present some useful definitions and lemmas in our proof.
\begin{definition}(Ordered Component-wise Identifiability) \label{def:order ident}
    Variables $\mathbf{s}_t \in \mathbb{R}^{d_x}$ and $\hat{\mathbf{s}}_t \in \mathbb{R}^{d_x}$ are identified component-wise if $\hat{s}_{t,i} = h_{s_i}(s_{t, \pi(i)})$ with invertible function $h_{s_i}$ and $\pi(i) = i$. 
\end{definition}

\begin{lemma}[Lemma 1 in LiNGAM~\citep{shimizu2006linear}] \label{lem:shimizu_lemma1} 
Assume $\mathbf{M}$ is lower triangular and all diagonal elements are non-zero. A permutation of rows and columns of $\mathbf{M}$ has only non-zero entries in the diagonal if and only if the row and column permutations are equal.
\end{lemma}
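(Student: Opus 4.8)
The plan is to treat the two permutations explicitly and reduce the claim to a short counting argument. Write the row permutation as $\rho$ and the column permutation as $\sigma$, so that for an $n\times n$ lower-triangular $\mathbf{M}$ the rearranged matrix $\mathbf{M}'$ has entries $M'_{ij} = M_{\rho(i),\sigma(j)}$ and its diagonal is $M'_{ii} = M_{\rho(i),\sigma(i)}$. With this bookkeeping in place the statement becomes: every $M_{\rho(i),\sigma(i)}$ is non-zero if and only if $\rho = \sigma$.

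The easy implication is the ``if'' direction. If $\rho = \sigma$, then $M'_{ii} = M_{\rho(i),\rho(i)}$ is a diagonal entry of $\mathbf{M}$, which is non-zero by hypothesis; hence $\mathbf{M}'$ has a fully non-zero diagonal. Triangularity is not needed here.

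For the ``only if'' direction I would exploit lower triangularity through the single implication $M_{ab}\neq 0 \Rightarrow a \ge b$ (entries strictly above the diagonal vanish). Assuming each diagonal entry $M_{\rho(i),\sigma(i)}$ is non-zero therefore forces $\rho(i) \ge \sigma(i)$ for every $i$. The crux is then a conservation argument: because $\rho$ and $\sigma$ are both bijections of $\{1,\dots,n\}$, their totals coincide, $\sum_i \rho(i) = \sum_i \sigma(i) = n(n+1)/2$. Summing the pointwise inequalities $\rho(i)\ge\sigma(i)$ yields $\sum_i \rho(i) \ge \sum_i \sigma(i)$ with equal sides, so the inequality must be tight in each coordinate, i.e. $\rho(i) = \sigma(i)$ for all $i$, which is exactly $\rho = \sigma$.

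I expect the only real subtlety to be fixing the permutation conventions consistently (which index the row permutation sends where, and correspondingly whether the relevant comparison is $\rho(i) \ge \sigma(i)$ or its inverse form), since the statement is symmetric under relabelling but the intermediate inequalities are not. Once the convention is pinned down, the pigeonhole-style step -- pointwise $\ge$ together with equal sums forcing coordinatewise equality -- is the mechanical heart of the argument and is robust to the choice. An alternative, equivalent route would phrase everything via permutation matrices $P,Q$ and the factorization $P\mathbf{M}Q$, reducing to the same counting claim; I would prefer the explicit index form above because the triangularity hypothesis enters most transparently there.
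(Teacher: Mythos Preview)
Your argument is correct and complete. The ``if'' direction is immediate, and the ``only if'' direction via the pointwise inequality $\rho(i)\ge\sigma(i)$ (forced by lower triangularity) combined with the equal-sum constraint $\sum_i\rho(i)=\sum_i\sigma(i)$ is a clean and standard way to conclude $\rho=\sigma$.

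Note that the paper does not actually supply its own proof of this lemma: it is stated as ``Lemma~1 in LiNGAM'' with a citation to \cite{shimizu2006linear} and invoked as a black box in the proof of Theorem~\ref{thm:ident B}. So there is no paper-side argument to compare against; your proof is essentially the one given in the original LiNGAM paper, and the counting/sum-preservation step you highlight is exactly the mechanism used there.
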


\begin{lemma}[Proposition in \cite{lin1997factorizing}] \label{lem:Lin}
    Suppose that $\hat{s}_{t,i}$ and $\hat{s}_{t,j}$ are conditionally independent given $\hat{\zz}_t$. Then, for all $\hat{\zz}_t$,
    \[
        \frac{\partial^2 \log p(\hat{\mathbf{s}}_t \mid \hat{\zz}_t)}{\partial \hat{s}_{t,i} \partial \hat{s}_{t,j}} = 0.
    \]
\end{lemma}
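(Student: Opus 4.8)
The plan is to translate the conditional-independence hypothesis into a factorization (equivalently, an additive decomposition of the log-density) that structurally separates the coordinates $\hat{s}_{t,i}$ and $\hat{s}_{t,j}$, and then to observe that a mixed partial derivative of a sum in which no single summand depends on both variables must vanish term by term. Concretely, in the ICA representation the sources $\hat{s}_{t,1},\dots,\hat{s}_{t,d_x}$ are mutually independent conditional on $\hzz_t$, so the conditional density factorizes as
\[
p(\hat{\mathbf{s}}_t \mid \hzz_t) = \prod_{k=1}^{d_x} p(\hat{s}_{t,k}\mid\hzz_t).
\]
I would invoke the regularity already assumed for the densities (positivity and twice differentiability, as in A1), so that taking logarithms and differentiating under this factorization is legitimate.

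First I would take logarithms of the factorization to obtain the additive form $\log p(\hat{\mathbf{s}}_t\mid\hzz_t) = \sum_{k=1}^{d_x}\log p(\hat{s}_{t,k}\mid\hzz_t)$. Next, differentiating with respect to $\hat{s}_{t,i}$ annihilates every summand with $k\neq i$, since $\log p(\hat{s}_{t,k}\mid\hzz_t)$ does not depend on $\hat{s}_{t,i}$; only the $k=i$ term survives, giving
\[
\frac{\partial \log p(\hat{\mathbf{s}}_t\mid\hzz_t)}{\partial \hat{s}_{t,i}} = \frac{\partial \log p(\hat{s}_{t,i}\mid\hzz_t)}{\partial \hat{s}_{t,i}}.
\]
Finally, differentiating this remaining expression with respect to $\hat{s}_{t,j}$ for $j\neq i$ yields zero, because the right-hand side depends only on $\hat{s}_{t,i}$ and $\hzz_t$ and is therefore constant in $\hat{s}_{t,j}$. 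This establishes the claimed identity for all $\hzz_t$.

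The only point requiring care — the main (mild) obstacle — is the precise reading of ``conditionally independent given $\hzz_t$.'' The mixed partial on the left involves the \emph{full} joint $p(\hat{\mathbf{s}}_t\mid\hzz_t)$, so a merely bivariate factorization $p(\hat{s}_{t,i},\hat{s}_{t,j}\mid\hzz_t) = p(\hat{s}_{t,i}\mid\hzz_t)\,p(\hat{s}_{t,j}\mid\hzz_t)$ would not suffice in general: residual coupling through the remaining coordinates could leave the cross-derivative nonzero. The statement is saved by the ICA structure used in the proof of Theorem~\ref{thm:ident B}, where the sources are \emph{mutually} conditionally independent given $\hzz_t$, so the full product factorization above genuinely separates the coordinates. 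Equivalently, one may argue from the local Markov property: if $\hat{s}_{t,i}$ and $\hat{s}_{t,j}$ are conditionally independent given $\hzz_t$ together with the remaining coordinates, then $\log p(\hat{\mathbf{s}}_t\mid\hzz_t)$ splits into a term free of $\hat{s}_{t,j}$ plus a term free of $\hat{s}_{t,i}$, and the same two-step differentiation drives the cross-derivative to zero.
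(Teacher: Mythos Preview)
Your argument is correct and is exactly the standard one: under mutual conditional independence the log-density decomposes additively, and the mixed partial vanishes term by term. The paper does not supply its own proof of this lemma; it simply cites it as a known result from \cite{lin1997factorizing} and invokes it directly in the proof of Theorem~\ref{thm:ident B}. Your observation about the gap between the pairwise hypothesis as literally stated and the full-joint conclusion is well taken, and your resolution via the mutual independence built into the ICA model (which the paper also assumes, writing ``for any $(i,j,t)$ \dots\ $s_{t,i}\perp\!\!\!\!\perp s_{t,j}\mid\zz_t$'') is the right reading.
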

\begin{proof}
    Let $(\hat{\zz}_t, \hat{\mathbf{s}}_t, \hat{g}_{m})$ be the estimations of $(\zz_t, \mathbf{s}_t, g_{m})$. By \Cref{thm:ext ica},
\begin{equation} \label{equ:inverse x, s, z}
    \xx_t = g_{m}(\zz_t, \mathbf{s}_t); \quad 
    \hat{\xx}_t = \hat{g}_m(\hat{\zz}_t, \hat{\mathbf{s}}_t) 
\end{equation}
Suppose we reconstruct observations well: $\xx_t = \hat{\xx}_t$. Combined with \Cref{thm:blk idn},
\begin{equation} \label{equ:identfiable eigenfunction}
        p(\xx_t \mid \hat{\zz}_t) = p(\xx_t \mid h_z(\zz_t)) = p(\xx_t \mid \zz_t) \implies p(g_{m}(\mathbf{s}_t, \zz_t) \mid \zz_t) = p(\hat{g}_{m}(\hat{\mathbf{s}}_t, \hat{\zz}_t) \mid \hat{\zz}_t).
\end{equation}
\Cref{cor:DAG-inv} has shown that $\mathbf{J}_{m}(\mathbf{s}_t)$ and $\mathbf{J}_{\hat{m}}(\hat{\mathbf{s}}_t)$ are invertible matrices, by the definition of partial Jacobian matrix: $
    [\mathbf{J}_{m}(\mathbf{s}_t)]_{i,j} = \frac{\partial x_{t,i}}{\partial s_{t,j}} = \frac{\partial g_{m_i}(\mathbf{s}_t, \zz_t)}{\partial s_{t,j}}$,
\begin{equation}
    \frac{1}{|\mathbf{J}_{m}(\mathbf{s}_t)|}p(\mathbf{s}_t \mid \zz_t) = \frac{1}{|\mathbf{J}_{\hat{m}}(\hat{\mathbf{s}}_t)|}p(\hat{\mathbf{s}}_t \mid \zz_t).
\end{equation}
We define $h_s \coloneqq m^{-1} \circ \hat{g}_m$ for any fixed $\zz_t$ and $\hzz_t$, hence, $|\mathbf{J}_{h_s}(\hat{\mathbf{s}}_t)| = \frac{|\mathbf{J}_{\hat{m}}(\hat{\mathbf{s}}_t)|}{|\mathbf{J}_{m}(\mathbf{s}_t)|}$ and $\hat{\mathbf{s}}_t = h_s(\mathbf{s}_t)$. Therefore, we have
\begin{equation}
    p(\hat{\mathbf{s}}_t \mid \zz_t) = \frac{1}{|\mathbf{J}_{h_s}(\hat{\mathbf{s}}_t)|}p(\mathbf{s}_t \mid \zz_t) \implies
    \log p(\hat{\mathbf{s}}_t \mid \hat{\zz}_t) = \log p(\mathbf{s}_t \mid \zz_t) - \log |\mathbf{J}_{h_s}(\hat{\mathbf{s}}_t)|.
\end{equation}
The second-order partial derivative of $\log p(\hat{\mathbf{s}}_t \mid \hat{\zz}_t)$ w.r.t. $(\hat{s}_{t,i}, \hat{s}_{t,j})$ is
\begin{equation}
    \resizebox{\textwidth}{!}{
        $\begin{aligned}
            \frac{\partial \log p(\hat{\mathbf{s}}_t \mid \hat{\zz}_t)}{\partial \hat{s}_{t,i}} 
            &= \sum_{k=1}^n \frac{\partial \mathbf{A}_{t,k}}{\partial s_{t,k}} \cdot \frac{\partial s_{t,k}}{\partial \hat{s}_{t,i}} - \frac{\partial \log |\mathbf{J}_{h_s}(\hat{\mathbf{s}}_t)|}{\partial \hat{s}_{t,i}} 
            = \sum_{k=1}^n \frac{\partial \mathbf{A}_{t,k}}{\partial s_{t,k}} \cdot [\mathbf{J}_{h_s}(\hat{\mathbf{s}}_t)]_{k,i} - \frac{\partial \log |\mathbf{J}_{h_s}(\hat{\mathbf{s}}_t)|}{\partial \hat{s}_{t,i}}, \\
            \frac{\partial^2 \log p(\hat{\mathbf{s}}_t \mid \hat{\zz}_t)}{\partial \hat{s}_{t,i} \partial \hat{s}_{t,j}} 
            &= \sum_{k=1}^n \left( \frac{\partial^2 \mathbf{A}_{t,k}}{\partial s_{t,k}^2} \cdot [\mathbf{J}_{h_s}(\hat{\mathbf{s}}_t)]_{k,i} \cdot [\mathbf{J}_{h_s}(\hat{\mathbf{s}}_t)]_{k,j} + \frac{\partial \mathbf{A}_{t,k}}{\partial s_{t,k}} \cdot \frac{\partial [\mathbf{J}_{h_s}(\hat{\mathbf{s}}_t)]_{k,i}}{\partial \hat{s}_{t,j}} \right) - \frac{\partial^2 \log |\mathbf{J}_{h_s}(\hat{\mathbf{s}}_t)|}{\partial \hat{s}_{t,i} \partial \hat{s}_{t,j}}.
        \end{aligned}$
    }
\end{equation}
Since for any $(i, j, t) \in \mathcal{J} \times \mathcal{J} \times \mathcal{T}$, we have $s_{t,i} \perp\!\!\!\!\perp s_{t,j} \mid \zz_t$, Lemma~\ref{lem:Lin} tells us $
    \frac{\partial^2 \log p(\hat{\mathbf{s}}_t \mid \hat{\zz}_t)}{\partial \hat{s}_{t,i} \partial \hat{s}_{t,j}} = 0
$. Therefore, its partial derivative w.r.t. \(z_{t,l}\) ($l \in \mathcal{J}$) is always $0$:
\begin{equation}
    \frac{\partial^3 \log p(\hat{\mathbf{s}}_t \mid \hat{\zz}_t)}{\partial \hat{s}_{t,i} \partial \hat{s}_{t,j} \partial z_{t,l}} = \sum_{k=1}^n \left( \frac{\partial^3 \mathbf{A}_{t,k}}{\partial s_{t,k}^2 \partial z_{t,l}} \cdot [\mathbf{J}_{h_s}(\hat{\mathbf{s}}_t)]_{k,i} \cdot [\mathbf{J}_{h_s}(\hat{\mathbf{s}}_t)]_{k,j} + \frac{\partial^2 \mathbf{A}_{t,k}}{\partial s_{t,k} \partial z_{t,l}} \cdot \frac{\partial [\mathbf{J}_{h_s}(\hat{\mathbf{s}}_t)]_{k,i}}{\partial \hat{s}_{t,j}} \right) \equiv 0,
\end{equation}
since entries of \(\mathbf{J}_{h_s}(\hat{\mathbf{s}}_t)\) do not depend on \(z_{t,l}\). By Assumption~\ref{asp:lin ind}, maintaining this equality requires \([\mathbf{J}_{h_s}(\hat{\mathbf{s}}_t)]_{k,i} \cdot [\mathbf{J}_{h_s}(\hat{\mathbf{s}}_t)]_{k,j} = 0\) for \( i \neq j \), which implies \(\mathbf{J}_{h_s}(\hat{\mathbf{s}}_t)\) is a monomial matrix.
\paragraph{Eliminate the Permutation Indeterminacy.} We leverage the following properties:
\begin{enumerate}[leftmargin=*]
    \item The inverse of a lower triangular matrix remains a lower triangular matrix.
    \item A matrix representing a DAG can always be permuted into a lower-triangular form using appropriate row and column permutations.
    \item Corollary~\ref{cor:rep-B} states that:
\end{enumerate}
\begin{equation}
    \mathbf{J}_{g^L}(\xx_t) = \mathbf{I}_{d_x} - \mathbf{D}_{m^{L}}(\mathbf{s}_t)\mathbf{J}_{m^L}^{-1}(\mathbf{s}_t); \quad \mathbf{J}_{g}(\xx_t) = \mathbf{I}_{d_x} - \mathbf{D}_{m}(\mathbf{s}_t)\mathbf{J}_{m}^{-1}(\mathbf{s}_t)
\end{equation}
where \(\mathbf{J}_{g^L}(\xx_t)\) and \(\mathbf{J}_{m^L}(\mathbf{s}_t)\) are (strictly) lower triangular matrices obtained by permuting \(\mathbf{J}_{g}(\xx_t)\) and \(\mathbf{J}_{m}(\mathbf{s}_t)\), respectively. \(\mathbf{D}_{m^{L}}(\mathbf{s}_t)\) is the diagonal matrix extracted from \(\mathbf{J}_{m^L}(\mathbf{s}_t)\). Consequently, we can express the relationship between \(\mathbf{J}_{m}(\mathbf{s}_t)\) and \(\mathbf{J}_{m^L}(\mathbf{s}_t)\) as follows:
\begin{equation}
    \mathbf{J}_{g^L}(\xx_t) = \mathbf{P}_{d_x} \mathbf{J}_{g}(\xx_t) \mathbf{P}_{d_x}^{\top} \implies \mathbf{J}_{m}(\mathbf{s}_t) = \mathbf{P}_{d_x} \mathbf{J}_{m^L}(\mathbf{s}_t)\mathbf{D}_{m^{L}}^{-1}(\mathbf{s}_t)\mathbf{P}_{d_x}^{\top}\mathbf{D}_{m}(\mathbf{s}_t),
\end{equation}
where $\mathbf{P}_{d_x}$ is the Jacobian matrix of a permutation function on the $d_x$-dimensional vector. Consequently, by $\mathbf{J}_{m}(\mathbf{s}_t) = \mathbf{J}_{\hat{m}}(\hat{\mathbf{s}}_t) \mathbf{J}_{h_s}(\mathbf{s}_t)$, we obtain 
\begin{equation}
    \mathbf{J}_{\hat{m}}(\hat{\mathbf{s}}_t) = \mathbf{P}_{d_x} \mathbf{J}_{m^L}(\mathbf{s}_t)\mathbf{D}^{-1}_{m^{L}}(\mathbf{s}_t)\mathbf{P}_{d_x}^{\top}\mathbf{D}_{m}(\mathbf{s}_t)\mathbf{J}^{-1}_{h_s}(\mathbf{s}_t), 
\end{equation}
Using \Cref{lem:shimizu_lemma1}, we obtain \(\mathbf{P}_{d_x}\mathbf{D}^{-1}_{m^{L}}(\mathbf{s}_t)\mathbf{P}_{d_x}^{\top}\mathbf{D}_{m}(\mathbf{s}_t)\mathbf{J}_{h_s}(\hat{\mathbf{s}}_t) = \mathbf{I}_{d_x}\), which implies \(\mathbf{J}^{-1}_{h_s}(\mathbf{s}_t) = \mathbf{D}^{-1}_{m}(\mathbf{s}_t)\mathbf{D}_{m^{L}}(\mathbf{s}_t)\), a diagonal matrix. Consequently, \(\mathbf{J}_{\hat{m}}(\hat{\mathbf{s}}_t)\) and \(\mathbf{J}_{m}(\mathbf{s}_t)\) have the same support, meaning \(\mathbf{J}_{\hat{g}}(\hat{\xx}_t)\) and \(\mathbf{J}_{g}(\xx_t)\) share the same support as well, according to ~\Cref{cor:rep-B}. Thus, by Assumption~\ref{asp:func faith}, causal structures over observed variables are identifiable.
\end{proof}
\paragraph{Discussion on Assumptions.} To enhance understanding of our theoretical results, we provide some explanations of the assumptions, their connections to real-world scenarios, as well as the potential boundaries of theoretical results.
\begin{enumerate}[label=\roman*.,leftmargin=1.2em]
    \item \textbf{Generation Variability.} Sufficient changes on generation~\ref{asp:lin ind} is widely used in identifiable nonlinear ICA/causal representation learning~\citep{hyvarinen2023nonlinear, lachapelle2022disentanglement, khemakhem2020variational, zhang2024causal, yao2022temporally}. In practical climate science, it has been demonstrated that, within a given region, human activities (\(s_{t,i}\)) are strongly impacted by certain high-level climate latent variables $\zz_t$~\citep{abbass2022review}, following a process with sufficient changes~\citep{lucarini2014mathematical}.
    \item \textbf{Functional faithfulness. } 
    \label{dis:func faith}
    Functional faithfulness corresponds to the \textit{edge minimality} \cite{zhang2013comparison, lemeire2013replacing, peters2017elements} for the Jacobian matrix $\mathbf{J}_{g}(\xx_t)$ representing the nonlinear SEM $\xx_t = g(\xx_t, \zz_t, \boldsymbol{\epsilon}_{\xx_t})$, where $\frac{\partial x_{t,j}}{\partial x_{t,i}} = 0$ implies no causal edge, and $\frac{\partial x_{t,j}}{\partial x_{t,i}} \neq 0$ indicates causal relation $x_{t,i} \rightarrow x_{t,j}$. This assumption is fundamental to ensuring that the Jacobian matrix reflects the true causal graph. If our functional faithfulness is violated, the results can be misleading, but in theory (classical) faithfulness~\cite{spirtes2001causation} is generally possible as discussed in \cite{lemeire2013replacing} (2.3 Minimality). As a weaker version of it, edge minimality holds the same property. If needed, violations of faithfulness can be testable except in the triangle faithfulness situation \cite{zhang2013comparison}. As opposed to classical faithfulness~\cite{spirtes2001causation}, for example, this is not an assumption about the underlying world, but a convention to avoid redundant descriptions.
    
\end{enumerate}
 

\subsection{Proof of Lemma~\ref{thm:sub inj}} \label{prf:inj B}
(We delay the section of this proof since it relies on previous results.) The injectivity of a operator is formally characterized by the completeness of the conditional density function \( p(a \mid b) \) used in the operator, as defined below.
\begin{definition}[Completeness] \label{def:completeness}
    A family of conditional density functions \( p_{A \mid B} \) is said to be complete if the only solution to $
        \int_A p(a) p_{a \mid b}(a \mid b) \, da = 0, \ \forall b \in \mathcal{B}$ is \( p(a) = 0 \).
\end{definition}
Since the transformation from \(\mathbf{s}_t\) to \(\xx_t\) is invertible and deterministic, given a $\dot{\mathbf{s}}_t \in \mathcal{S}_t$, the probability density function for \(\xx_t\) can be expressed as:
$
p(\xx_t) = 
\begin{cases}
\frac{1}{|\mathbf{J}_{m}(\mathbf{s}_t)|} p(\mathbf{s}_t), & \xx_t = m(\mathbf{s}_t) \\
0, & \xx_t \neq m(\mathbf{s}_t)
\end{cases}
$. Hence, the conditional probability can be represented using the Dirac delta function:
\[
p(\xx_t \mid \mathbf{s}_t) = \delta(\xx_t - m(\mathbf{s}_t))  \implies p(\xx_t) = L_{\xx_t \mid \SS_t} \circ p(\ss_t) = \int_{\mathcal{S}_t} \delta(\xx_t - m(\mathbf{s}_t)) p(\mathbf{s}_t) \, d\mathbf{s}_t.
\]
By recalling \Cref{equ:lin op def}, we can rewrite \( p(\xx_t) \) in terms of the operator \( L_{\xx_t \mid \mathbf{s}_t} \) acting on \( p_{\mathbf{s}_t} \).
We consider \( p(\xx_t \mid \mathbf{s}_t) \) as an infinite-dimensional vector, and the operator \( L_{\xx_t \mid \mathbf{s}_t} \) as an infinite-dimensional matrix where
\[
L_{\xx_t \mid \mathbf{s}_t} = \left[\delta(\xx_t - m(\mathbf{s}_t))\right]^\top_{\xx_t \in \mathcal{X}_t}.
\]
By \Cref{cor:rep-B}, since \(\mathbf{J}_{m}(\mathbf{s}_t)\) is invertible, for any two different points \(\mathbf{s}^{(1)}_t, \mathbf{s}^{(2)}_t \in \mathcal{S}_t\) (\(\mathbf{s}^{(1)}_t \neq \mathbf{s}^{(2)}_t\)), we have \( m(\mathbf{s}^{(1)}_t) \neq m(\mathbf{s}^{(2)}_t) \). This implies that the supports of \(\delta(\xx_t - m(\mathbf{s}^{(1)}_t))\) and \(\delta(\xx_t - m(\mathbf{s}^{(2)}_t))\) are disjoint. Thus, \( \left[\delta(\xx_t - m(\mathbf{s}_t))\right]^\top_{\xx_t \in \mathcal{X}_t} \) preserves a one-to-one correspondence across the $\mathcal{X}_t$, ensuring:
\[
\text{null} \ \left[\delta(\xx_t - m(\mathbf{s}_t))\right]^\top_{\xx_t \in \mathcal{X}_t} = \{ 0^{(\infty)} \},
\]
which denotes the completeness of \( L_{\xx_t \mid \mathbf{s}_t} \) stated in Definition~\ref{def:completeness}, indicating that \( L_{\xx_t \mid \mathbf{s}_t} \) is injective.

The visualization in Figure~\ref{fig:inv v.s. inj} highlights why Assumption~\ref{asp:linear operator} is significantly less restrictive than the invertibility assumption adopted in most of the previous CRL literature~\cite{hyvarinen2016unsupervised, hyvarinen2017nonlinear, hyvarinen2019nonlinear,klindt2020towards,zhang2024causal,li2025on} in a zero-measure manner.

\begin{figure}
    \centering
    \includegraphics[width=0.8\linewidth]{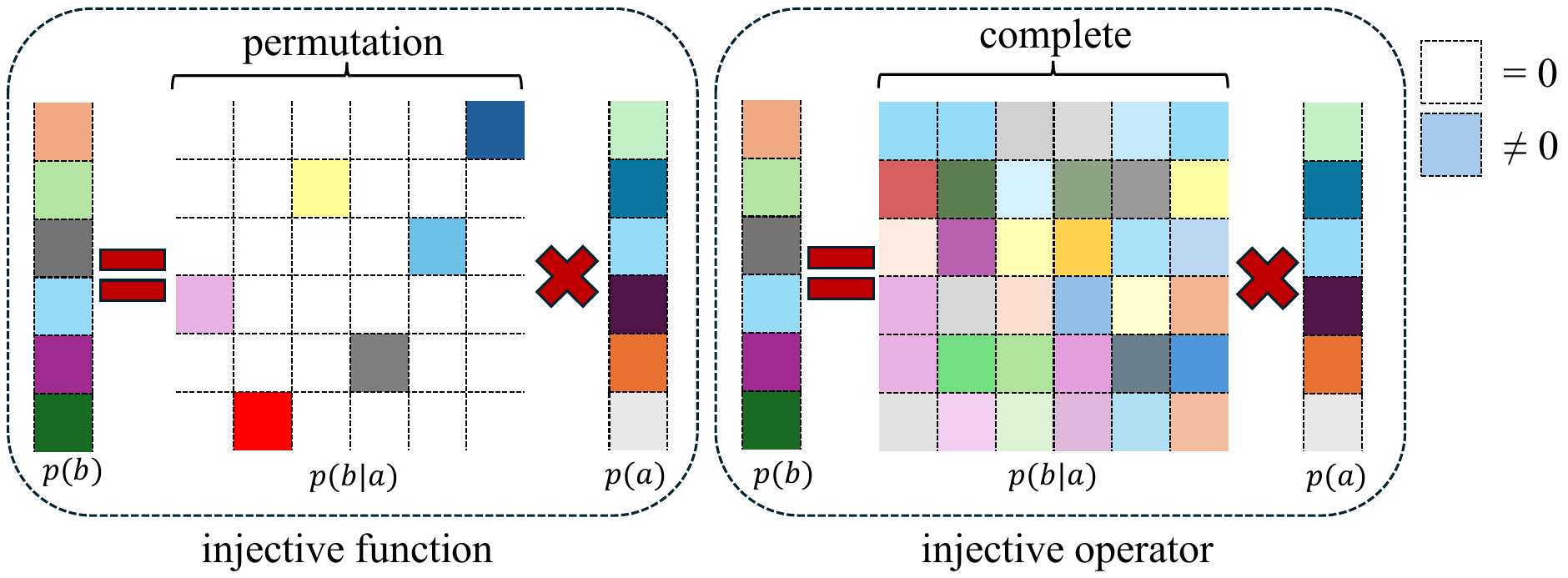}
    \caption{\textbf{Invertible Function v.s. Injective Operator.} \textit{(Left)} Consider two variables \(a\) and \(b\) connected by the function \(b = g(a)\), where \(g\) is invertible. \textit{(Right)} Alternatively, their relationship can be expressed as \(p(b) = L_{b \mid a} \circ p(a)\), where \(L_{b \mid a}\) is an injective operator. The grid represents \(p(b \mid a)\), with color indicating non-zero values and white representing zero. Intuitively, in the discrete case, a full-rank matrix corresponds to this relationship. 
    }
    \label{fig:inv v.s. inj}
\end{figure}

\subsection{Comparison with Existing Methods} \label{app:compare-existing}
Our method targets the joint identification of latent causal graphs and causal structures over observed variables in time series. This is essential for domains such as climate science, where latent processes govern observed dynamics. In contrast, IDOL~\cite{li2025on} focuses solely on recovering latent variables and assumes a deterministic mixing function without any causal relations among observed variables. As a result, it cannot recover the causal graph on observations and fails in contexts like climate systems, where both latent and observational structures are crucial.

Prior works~\cite{monti2020causal,reizinger2023jacobian} use nonlinear ICA to recover causal relations among observed variables, assuming known domain variables and non-i.i.d. data. However, (1) CaDRe does not require predefined domain variables and instead leverages contextual information to infer latent variables as conditional priors; (2) ICA-based methods are not robust under latent confounding, as spurious correlations may obscure true causal links; (3) they do not identify the latent variables or their underlying dynamics; (4) they require invertibility of $g$ and $m$, an assumption we relax in \Cref{cor:DAG-inv}.

The FCI algorithm~\cite{spirtes1991algorithm} allows for latent confounding and uses conditional independence tests to infer causal relations among observed variables. However, it cannot recover the latent variables themselves or their causal influence on observations. Furthermore, the causal structure is expressed as a Partial Ancestral Graph (PAG), which represents an equivalence class and may contain ambiguous or uncertain edges. Such ambiguity is particularly problematic for applications like climate analysis, which demand interpretable and stable causal structures.

\section{Extended Related Work} \label{sec:ext rel}
\subsection{Climate Analysis} \label{app:rw-climate}
Climate analysis is learning to address the complex, nonlinear, and high-dimensional nature of Earth system dynamics. A prominent line of work focuses on using neural networks for weather and climate forecasting, including data-driven models such as GraphCast~\citep{lam2023learning}, which demonstrate remarkable predictive performance by modeling spatiotemporal dependencies. However, these methods often lack interpretability and fail to reveal the underlying causal mechanisms driving climate variability.
To address this issue, recent research has integrated causal discovery into climate science. For instance, \cite{runge2019inferring} introduces causal inference frameworks tailored to climate time series, incorporating techniques such as PCMCI to infer lagged and contemporaneous dependencies. Other approaches employ structural causal models (SCMs) for identifying interactions between climate variables under interventions~\citep{reichstein2019deep}. Beyond shallow models, efforts have emerged to disentangle latent variables in high-dimensional climate data using variational autoencoders~\citep{klushyn2021latent}. While effective, most of these methods do not guarantee identifiability or robust generalization across regimes.

More recently, hybrid models that couple dynamical systems theory with deep learning have shown promise in capturing climate processes with greater fidelity. Examples include integrating physics-based constraints into latent state-space models~\citep{prabhat2018climatenet} and learning interpretable representations for climate variability modes such as the Madden-Julian Oscillation~\citep{toms2020physically}. These works highlight the growing interest in combining structure learning, causal inference, and deep latent modeling to move beyond black-box predictions towards actionable scientific understanding.
\subsection{Causal Representation Learning} \label{app:rw-crl}
Achieving causal representations for time series data~\citep{rajendran2024learning} often relies on nonlinear ICA to recover latent variables with identifiability guarantees~\citep{yao2023multi, scholkopf2021toward}. Classical ICA methods assume a linear mixing function between latent and observed variables~\citep{comon1994independent}. To move beyond this linearity assumption, recent advances in nonlinear ICA have established identifiability under various alternative assumptions, including the use of auxiliary variables or structural sparsity~\citep{zheng2022identifiability, hyvarinen1999nonlinear, hyvarinen2023nonlinear}. One prominent line of work introduces auxiliary variables to facilitate identifiability. For instance, \citet{khemakhem2020variational} achieves identifiability by assuming latent sources follow an exponential family distribution and incorporating side information such as domain, time, or class labels~\citep{hyvarinen2016unsupervised, hyvarinen2017nonlinear, hyvarinen2019nonlinear}. To relax the exponential family requirement, \citet{kong2023identification} establishes component-wise identifiability using $2n+1$ auxiliary variables for $n$ latent components. 

Another direction pursues identifiability in a fully unsupervised setting by leveraging structural sparsity. \citet{lachapelle2022disentanglement} proposes a sparsity-based inductive bias to disentangle latent causal factors, demonstrating identifiability in multi-task learning and related settings. They further extend these results to establish identifiability up to a consistency class~\citep{lachapelle2024nonparametric}, allowing partial disentanglement. Complementarily, \citep{zheng2022identifiability} and~\citep{zhang2024causal, li2025on} exploit sparse latent structures under distributional shifts to obtain identifiability results without relying on auxiliary information.
\subsection{Causal Discovery} \label{app:rw-cd}
Existing causal discovery methods in climate analysis primarily build on extensions of PCMCI~\citep{runge2019detecting}, which effectively captures time-lagged and instantaneous linear dependencies, and its nonlinear variant~\citep{runge2020discovering}. However, both approaches assume fully observed systems and neglect latent variables, limiting their applicability to complex climate dynamics. Recent causal representation learning methods motivated by climate science attempt to address this gap: \citet{brouillard2024causal} imposes strong identifiability assumptions via single-node structures, while \citet{yao2024marrying} adopts an ODE-based model to study climate-zone classification, though these methods often overlook dependencies among observed variables. Beyond climate, a class of nonlinear causal discovery methods leverages Jacobian information for identifiability and acyclicity~\citep{lachapelle2019gradient, rolland2022score}, including applications to structural equation models~\citep{atanackovic2024dyngfn}, Markov structures~\citep{zheng2023generalized}, independent mechanisms~\citep{gresele2021independent}, and non-i.i.d. settings~\citep{reizinger2023jacobian}. While \citet{dong2023versatile} proposes a general framework that accounts for hidden variables by using rank conditions on the observed covariance matrix, their model is restricted to linear relationships and cannot recover nonlinear latent dynamics in time-series data. In contrast, our method, CaDRe, recovers latent causal structures under nonlinear dependencies, though it currently does not support cases where observed variables act as causes of latent ones—a limitation we leave to future work. Considering the nonlinear CD based on continuous optimization, we additionally provide the Table~\ref{tag:nonl cd comp} for comparison. 

\subsection{Time-Series Forecasting} \label{app:rw-tsf}

Time series forecasting aims to predict the future state of the signal based on its sequential history. Early time series forecasting approaches explore the reliable statistical information from the sequence~\citep{box2015time}. Recently, it has seen rapid progress with deep learning methods that leverage various neural architectures, such as MLP~\citep{lin2024sparsetsf,oreshkin2019n,wang2024lightweight}; RNN~\citep{hochreiter1997long,lai2018modeling,salinas2020deepar};CNN~\citep{bai2018empirical,wang2022micn,wu2022timesnet}; and GNN~\citep{cai2024forecastgrapher,cai2024msgnet}.  Transformer-based methods~\citep{zhou2021informer,wu2021autoformer,nie2022time} further advance long-range forecasting through attention mechanisms. Mamba-based work~\citep{zeng2024cmamba,wang2025mamba,gu2024mamba} further extended the ability of long-term forecasting by reducing the $O^2$ computations of the Transformer. 
However, most existing methods neglect instantaneous dependencies among variables, limiting their ability to fully capture the joint dynamics of multivariate time series.

\begin{table}[htp!]
\centering
\caption{Comparison of different methods based on their properties in function type ($f$), data, Jacobian ($J$), capability of performing CD and CRL, and whether they achieve identifiability.}
\resizebox{\textwidth}{!}{ %
\begin{tabular}{lcccccccc}
\toprule
\textbf{Method} & $f$ & \textbf{Data} & $J$ & \textbf{CD} & \textbf{CRL} & \textbf{Identifiability}  \\ 
\midrule
LiNGAM~\citep{shimizu2006linear}   & Linear            & Non-Gaussian  & $J_{f^{-1}}$           & \CheckmarkBold & \XSolidBrush     & \CheckmarkBold\\
GraN-DAG~\citep{lachapelle2019gradient} & Additive         & Gaussian      & $J_{f^{-1}}$            & \CheckmarkBold & \XSolidBrush     & \XSolidBrush    \\
IMA~\citep{gresele2021independent}    & IMA              & All           & $J_f$                & \XSolidBrush       & \XSolidBrush     & \CheckmarkBold\\
G-SCM~\citep{zheng2023generalized}      & Sparse           & All           & $J_f$                & \XSolidBrush       & \XSolidBrush     & \CheckmarkBold\\
Score-Based FCMs~\citep{rolland2022score}    & Additive         & Gaussian      & $J_{\nabla_x \log p(x)}$  & \CheckmarkBold & \XSolidBrush     & \XSolidBrush    \\
DynGFN~\citep{atanackovic2024dyngfn} & Cyclic (ODE)    & All           & $J_f$                & \CheckmarkBold & \XSolidBrush     & \XSolidBrush    \\
JacobianCD~\citep{reizinger2023jacobian}  & All              & Assums. 2, F. 1 & $J_{f^{-1}}$         & \CheckmarkBold & \XSolidBrush     & \CheckmarkBold\\
CausalScore~\citep{liu2024causal}           & Mixed & Gaussian & $J_{\nabla_x \log p(x)}$ & \CheckmarkBold & \XSolidBrush     & Partial \\
\rowcolor{gray!25} CaDRe (Ours) & All & All & $J_{f^{-1}}$ & \CheckmarkBold & \CheckmarkBold & \CheckmarkBold \\
\bottomrule
\end{tabular}
}
\label{tag:nonl cd comp}
\end{table}

\section{Experiment Details} \label{sec:exp det}
This section documents the experimental protocol used to assess CaDRe on both simulated and real-world data. We begin by specifying the data-generating processes, evaluation metrics, and baseline implementations so that results can be reproduced with the same assumptions. We then describe the initialization of the observational causal graph, masking strategies informed by spatial priors, and the computation of Jacobians used for graph extraction. Finally, we summarize model architectures and training settings, followed by extended quantitative results and ablations.

\subsection{On Simulation Dataset} \label{app:simulation}
We validate identifiability under controlled settings by varying dimension, sparsity, and temporal order.
\subsubsection{Simulation Details} \label{app:data-sim}
\paragraph{Data Simulation.} 
We generate time series data with latent variables \(\zz_t \in \mathbb{R}^{d_z}\) and observed variables \(\xx_t \in \mathbb{R}^{d_x}\), where \(d_z \leq d_x\). The latent dynamics follow a leaky non-linear autoregressive model:
\begin{equation}
    \zz_t = \sigma\left( \sum_{\ell=1}^{L} \mathbf{W}^{(\ell)} \zz_{t - \ell} \right) + \boldsymbol{\epsilon}^z_t, \quad \boldsymbol{\epsilon}^z_t \sim \mathcal{N}(0, \sigma_z^2 \mathbf{I}),
\end{equation}
where \(\sigma(\cdot)\) is leaky ReLU, and \(\mathbf{W}^{(\ell)}\) are lag-\(\ell\) transition matrices modulated by class-specific parameters. Instantaneous causal relations among \(\xx_t\) are defined by an Erdős-Rényi DAG \(\mathbf{B} \in \{0,1\}^{d_x \times d_x}\), with time-varying edge weights:
\begin{equation}
    \mathbf{B}_t = \alpha(t) \cdot \mathbf{B}, \quad \alpha(t) = a_1 \cos\left( \frac{2\pi t}{T} \right) + a_2.
\end{equation}

The observed variable \(\xx_t\) is first generated by a multilayer mixing of \((\zz_t, \ss_t)\), followed by additive and autoregressive noise:
\begin{equation}
    \xx_t = f_\text{mix}(\zz_t, \ss_t) + \mathbf{s}_t, \quad \mathbf{s}_t = \boldsymbol{\epsilon}^x_t + f_{\text{dep}}(xx_{t-1}),
\end{equation}
where \(\boldsymbol{\epsilon}^x_t \sim \mathcal{U}[0, \sigma_x]\). Then, causal effects among observed variables are injected based on \(\mathbf{B}_t\) in topological order:
\begin{equation}
    x_{t,i} \leftarrow x_{t,i} + \sum_{j \in \text{pa}(i)} B_{t,j,i} \cdot x_{t,j},
\end{equation}
where \(\mathbf{pa}(i) = \{j \mid B_{t,j,i} \neq 0\}\) are the causal parents of variable \(i\) under \(\mathbf{B}_t\).

\paragraph{Various Datasets.} We generate simulated time-series data using the fixed latent causal process described in \Cref{equ:dgp} and illustrated in Figure~\ref{fig:climate-system}. To comprehensively evaluate our theoretical results, we construct synthetic datasets with varying observed dimensionalities, including $d_x = 3, 6, 8, 10, 100^*\footnote{$^*$ indicates the use of a masking scheme simulated from geographical information (see Appendix~\ref{exp:ind bis m})}$ and latent dimensionalities $d_z = 2, 3, 4$, specified for each experiment. Additionally, we simulate different levels of structural sparsity in the latent process under three regimes: \textit{Independent}, \textit{Sparse}, and \textit{Dense}. For evaluation, we use SHD, TPR, Precision, and Recall for causal structure recovery, and MCC and $R^2$ for assessing latent representation identifiability. As defined in ~\Cref{equ:dgp}, under the \textit{Independent} setting for the latent temporal process and dependent noise variable $\mathbf{s}_t$, we use the generation process from~\citep{yao2022temporally}, meaning there are no instantaneous dependencies within the $\zz_t$. For \textit{Sparse} and \textit{Dense} settings, we gradually increase the graph degree after removing diagonals. Each independent noise is sampled from normal distributions.

\paragraph{Implementation Details of CRL Baselines.}
We employed publicly available implementations for TDRL, CaRiNG, and iCRITIS, which cover most of the baselines used in our experiments. For G-CaRL, whose official code was not released, we re-implemented the method based on the descriptions in the original paper. Furthermore, because the original iCRITIS framework was tailored for image-based inputs, we adapted it to our setting by replacing its encoder and decoder with a VAE architecture, using the same hyperparameters as in CaDRe.

\paragraph{Mask by Inductive Bias.} \label{exp:ind bis m}
Continuous optimization faces challenges like local minima~\citep{ng2022masked, maddison2016concrete}, making it difficult to scale to higher dimensions. However, incorporating prior knowledge on the low probability of certain dependencies~\citep{spirtes2001causation, runge2019detecting} enables us to compute a mask. To validate this approach using physical laws as observed DAG initialization ~\ref{sec:init B} in climate data, we mask $75\%$ of the lower triangular elements in a simulation with $d_x = 100$, a ratio much lower than in real-world applications. 

\paragraph{Comparison with Constraint-Based Methods.} \label{exp:cons comp}
We compare our method against a series of constraint-based causal discovery algorithms, which rely on Conditional Independence (CI) tests without assuming a specific form for the SEMs. These approaches are nonparametric and model-agnostic, but they typically return equivalence classes of graphs rather than fully identifiable structures. For instance, FCI outputs Partial Ancestral Graphs (PAGs), while CD-NOD returns equivalence classes reflecting causal ambiguity under the observed CI constraints. For a fair comparison, we adopt near-optimal configurations of the most representative constraint-based methods. Specifically, we use the \texttt{Causal-learn} package~\citep{zheng2024causal} to implement FCI and CD-NOD, and the \texttt{Tigramite} library~\citep{runge2019detecting} for PCMCI and LPCMCI. Each method is run under recommended hyperparameter settings as reported in their respective documentation or prior studies, ensuring a reliable and balanced comparison.
\begin{enumerate}[label=\roman*., leftmargin=*]
    \item \textbf{FCI}: We use Fisher’s Z conditional independence test. For the obtained PAG, we enumerate all possible adjacency matrices and select the one closest to the ground truth by minimizing the SHD.
    \item \textbf{CD-NOD}: We concatenate the time indices $[1,2,\ldots,T]$ of the simulated data into the observed variables and only consider the edges that exclude the time index. We use kernel-based CI test since it demonstrates superior performance here. We consider all obtained equivalence classes and select the result that minimizes SHD relative to the ground truth.
    \item \textbf{PCMCI}: We use partial correlation as the metric of the conditional independence test. We enforce no time-lagged relationships in PCMCI and run it to focus exclusively on contemporaneous (instantaneous) causal relationships. In the \texttt{Tigramite} library, this can be achieved by setting the maximum time lag $\tau_{\text{max}}$ to zero. This effectively disables the search for lagged causal dependencies. We select contemporary relationships as the ultimate result.
    \item \textbf{LPCMCI}: Similarly to PCMCI, we use partial correlation as the metric of CI test, and select the contemporary relationships as the obtained causal graph.
\end{enumerate}

\subsubsection{Evaluation Metrics.} \label{app:metric}
We evaluate the recovery of latent variables and causal structures using the following metrics:
\begin{enumerate}[label=\roman*., leftmargin=*]
    \item \textbf{Latent Space Recovery.} Following the identifiability result in Theorem~\ref{thm:blk idn}, we measure the alignment between the estimated latent variables $\hat{\mathbf{z}}_t$ and the true latent variables $\mathbf{z}_t$ using the coefficient of determination $R^2$, where $R^2 = 1$ indicates perfect alignment. A nonlinear mapping is estimated using kernel regression with a Gaussian kernel.
    
    \item \textbf{Latent Component Recovery.} To evaluate component-wise identifiability as discussed in Theorem~\ref{thm:zijian 1}, we use the Spearman Mean Correlation Coefficient (MCC), which assesses the monotonic relationship between estimated and true latent components.
    
    \item \textbf{Latent Causal Structure.} For evaluating the recovery of latent causal graphs, both instantaneous and time-lagged, we compute the Structural Hamming Distance (SHD) between the learned and true adjacency matrices. Given the permutation indeterminacy of latent variables, we align the estimated latent causal structures $\mathbf{J}_r(\hat{\mathbf{z}}_{t})$ and $\mathbf{J}_r(\hat{\mathbf{z}}_{t-1})$ with the ground truth by applying consistent permutations.
    
    \item \textbf{Observational Source Recovery.} As a surrogate for evaluating the causal graph over observed variables, we use MCC~\citep{khemakhem2020variational} to assess the recovery of $\mathbf{s}_t$. Unlike latent variables, this metric does not allow permutations and reflects the identifiability condition stated in Theorem~\ref{thm:ident B}.
    
    \item \textbf{Causal Structure Accuracy.} The recovered latent and causal DAGs over observed variables are also evaluated using SHD, normalized by the total number of possible edges to facilitate comparison across different graph sizes.
    
    \item \textbf{Graph-Level Metrics.} In addition to SHD, we report true positive rate (TPR), precision, and F1 score to benchmark our method against constraint-based approaches in causal graph recovery.
\item \textbf{Wind-Induced Causal Graph (a Surrogate of Ground-Truth).}  
    Since real-world climate datasets lack ground-truth causal structures among observations, we define a wind-induced causal graph derived from physical wind directions as a surrogate. For each grid point $i$, the dataset includes position, wind vector, and displaced position, from which the adjacency matrix $B^{\mathrm{ref}}$ is constructed.

    \item \textbf{Causal Discovery Metrics: WSHD and WTPR.}  
    We evaluate the estimated causal graph $B$ against $B^{\mathrm{ref}}$ using two metrics:
    \begin{itemize}
        \item \textbf{WSHD:} Structural Hamming Distance between $B$ and $B^{\mathrm{ref}}$,  
        $ \mathrm{WSHD}(B, B^{\mathrm{ref}}) = \sum_{i \ne j} \mathbf{1}( B_{i,j} \ne B^{\mathrm{ref}}_{i,j} ). $

        \item \textbf{WTPR:} Recall of wind-consistent causal edges,  
        $ \mathrm{WTPR}(B, B^{\mathrm{ref}}) = 
        \frac{\sum_{i \ne j} \mathbf{1}( B_{i,j} = 1 \land B^{\mathrm{ref}}_{i,j} = 1 )}
        {\sum_{i \ne j} \mathbf{1}( B^{\mathrm{ref}}_{i,j} = 1 )}. $  
        The numerator counts correctly predicted causal edges, while the denominator counts edges inferred from wind direction.
    \end{itemize}

    \item \textbf{Metric for Latent Causal Representation.}  
    To assess the alignment between estimated latent variables $\hat{z}_t$ and known climate variables $v_t$ (e.g., precipitation, ocean currents), we use the Subset Mean Correlation Coefficient (SMCC):  
    $ \mathrm{SMCC} = \max_{\hat{v}_t \subset \hat{z}_t} \frac{1}{d-1} \sum_{j=1}^{d-1} \rho(v_t, \hat{v}_t^{(j)}), $  
    where $\rho(v_t, \hat{v}_t^{(j)})$ is the Pearson correlation coefficient. This measures how well a subset of the latent space captures known physical signals. 

\end{enumerate}

\subsection{On Real-World Datasets} \label{exp:real data}
We test CaDRe on climate and weather benchmarks where ground-truth graphs are unavailable.
\subsubsection{Dataset Description} \label{app:data-des} 
We include all the datasets used in our experiments below: 
\begin{enumerate}[leftmargin=*]
    \item Weather \footnote{https://www.bgc-jena.mpg.de/wetter/} dataset offers 10‑minute summaries from an automated rooftop station at the Max Planck Institute for Biogeochemistry in Jena, Germany. 
    \item CESM2 Pacific SST dataset employs monthly Sea Surface Temperature (SST) data generated from a $500$-year pre-$2020$ control run of the CESM2 climate model. The dataset is restricted to oceanic regions, excluding all land areas, and retains its native gridded structure to preserve spatial correlations. It encompasses $6000$ temporal steps, representing monthly SST values over the designated period. Spatially, the dataset comprises a grid with 186 latitude points and $151$ longitude points, resulting in $28086$ spatial variables, including $3337$ land points where SST is undefined, and $24749$ valid SST observations. To accommodate computational constraints, a downsampled version of the data, reduced to 84 grid points ($6 \times 14$), is utilized in our experiment.
    \item WeatherBench~\citep{rasp2020weatherbench} is a benchmark dataset specifically tailored for data-driven weather forecasting. We specifically selected wind direction data for visualization comparisons within the same period, maintaining the original 350,640 timestamps. 
    \item ERSST \footnote{https://www.ncei.noaa.gov/products/extended-reconstructed-sst} dataset is from NOAA GlobalTemp (NOAA/NCEI) official website, we use the NOAA Global Temperature Anomaly Dataset (1880–2025), which includes 2052 monthly steps and 16,020 spatial grid points per step. For time-series forecasting, we use a downscaled version with 100 dimensions, obtained by averaging over block regions. 
\end{enumerate}

\subsubsection{Implementation Details} \label{app:real-imp-details}
\paragraph{Initialization of Causal Graph over Observed Variables.} \label{sec:init B}
To improve the stability of continuous optimization and avoid poor local minima in estimating the causal structure matrix $\hat{\mathcal{G}}_{\xx_t}$, we incorporate a prior based on the Spatial Autoregressive (SAR) model. The SAR model, widely applied in geography, economics, and environmental science, captures spatial dependencies through the formulation:
\[
\mathbf{X} = \mathbf{Z}\beta + \lambda \mathbf{W}\mathbf{X} + \mathbf{E},
\]
where $\mathbf{W}$ is the spatial weights matrix, $\beta$ is a regression coefficient, and $\mathbf{E}$ is a noise term. To simplify the model and isolate the spatial interaction component, we set $\beta = 0$ and $\lambda = 1$, resulting in the canonical SAR model:
\[
\mathbf{X} = \mathbf{W}\mathbf{X} + \mathbf{E}.
\]
The core assumption is that instantaneous interactions between regions are unlikely if they are separated by a substantial spatial distance. Therefore, we initialize $\mathbf{W}$ using a binary spatial adjacency matrix $\mathcal{M}_{\text{loc}}$, defined as
\[
[\mathcal{M}_{\text{loc}}]_{i,j} = \mathbb{I}\left( \| s_i - s_j \|_2 \leq 50 \right),
\]
where $s_i$ and $s_j$ denote the spatial coordinates of regions $i$ and $j$, respectively. This constraint enforces that only regions within a Euclidean distance of 50 units are considered spatially adjacent.

We then estimate $\lambda$ and update $\mathbf{W}$ by fitting the linear model $\mathbf{X} = \mathbf{W}\mathbf{X} + \mathbf{E}$ via least squares. The resulting matrix $\mathbf{W}$ is used as the initialization for the causal graph over observed variables, denoted $\mathcal{M}_{\text{init}}$.

\begin{figure}[t]
\centering
\definecolor{cadreenc}{HTML}{D7E2F0}
\definecolor{cadredec}{HTML}{A4B9D0}
\definecolor{cadreprior}{HTML}{7895C2}
\definecolor{cadrebord}{HTML}{2E5984}
\begin{tikzpicture}[
  >=stealth, node distance=4mm and 6mm,
  layer/.style ={draw=cadrebord, rounded corners=2pt, minimum width=2.05cm, minimum height=0.55cm, font=\small, fill=cadreenc, align=center},
  branch/.style={draw=cadrebord, rounded corners=2pt, minimum width=2.05cm, minimum height=0.55cm, font=\small, fill=cadreenc, align=center},
  prior/.style ={draw=cadrebord, rounded corners=2pt, minimum width=2.5cm, minimum height=0.55cm, font=\small, fill=cadreprior, align=center, text=white},
  dec/.style   ={draw=cadrebord, rounded corners=2pt, minimum width=2.5cm, minimum height=0.55cm, font=\small, fill=cadredec, align=center},
  io/.style    ={font=\small\itshape},
  head/.style  ={font=\small\itshape\bfseries, text=cadrebord}
]
\node[io]   (xin1)  {$\mathbf{x}_{1:T}$};
\node[layer, below=of xin1] (z1) {Dense $d_x$};
\node[layer, below=of z1]   (z2) {Concat zero};
\node[layer, below=of z2]   (z3) {Dense $d_z$};
\node[io,    below=of z3]   (zout){$\hat{\mathbf{z}}_{1:T}$};
\node[head,  above=1pt of xin1] {z-encoder $\phi$};

\node[io,    right=2.0cm of xin1] (xin2) {$\mathbf{x}_{1:T}$};
\node[branch,below=of xin2] (s1) {Dense $d_x$};
\node[branch,below=of s1]   (s2) {Concat zero};
\node[branch,below=of s2]   (s3) {Dense $d_x$};
\node[io,    below=of s3]   (sout){$\hat{\mathbf{s}}_{1:T}$};
\node[head,  above=1pt of xin2] {s-encoder $\eta$};

\node[io,    below=8mm of $(zout)!.5!(sout)$] (decin) {$(\hat{\mathbf{z}}_{1:T},\hat{\mathbf{s}}_{1:T})$};
\node[dec,   below=of decin] (d1) {Dense $d_x$, Tanh};
\node[io,    below=of d1]    (decout) {$\hat{\mathbf{x}}_{1:T}$};
\node[head, above=1pt of decin] {decoder $\psi$};

\node[io,    right=3.0cm of xin2] (rin) {$\hat{\mathbf{z}}_{t-1},\hat{\mathbf{z}}_t$};
\node[prior, below=of rin] (r1) {Dense 128, LeakyReLU};
\node[prior, below=of r1]  (r2) {Dense 128, LeakyReLU};
\node[prior, below=of r2]  (r3) {Dense 128, LeakyReLU};
\node[prior, below=of r3]  (r4) {Dense 1};
\node[io,    below=of r4]  (rout){$\log\!\left|\det\mathbf{J}_r\right|$};
\node[head,  above=1pt of rin] {prior network $r$};

\foreach \a/\b in {xin1/z1,z1/z2,z2/z3,z3/zout,xin2/s1,s1/s2,s2/s3,s3/sout,zout/decin,sout/decin,decin/d1,d1/decout,rin/r1,r1/r2,r2/r3,r3/r4,r4/rout}{ \draw[->] (\a) -- (\b); }
\end{tikzpicture}
\caption{Architecture of CaDRe. The \emph{z-encoder} $\phi$ and \emph{s-encoder} $\eta$ map an input sequence $\mathbf{x}_{1:T}$ to the latent and noise representations $\hat{\mathbf{z}}_{1:T}$ and $\hat{\mathbf{s}}_{1:T}$; the \emph{decoder} $\psi$ reconstructs $\hat{\mathbf{x}}_{1:T}$; the modular \emph{prior network} $r$ computes the log-density used for latent structure learning. Notation: $T$ is the sequence length, $d_x$ and $d_z$ are the observed and latent dimensions, ``Concat zero'' pads with zeros to fix the input width, and LeakyReLU/Tanh denote the activation functions.}
\label{fig:architecture}
\end{figure}

\paragraph{Compute the Causal Graph over Observed Variables.}  
Using a mask gradient-based approach, we compute an initial estimate of the Jacobian $\mathbf{J}_{\hat{g}}(\mathbf{x}_t)$, which encodes local sensitivities. However, these Jacobian matrices are typically dense and difficult to interpret directly. To produce a more interpretable visualization of the causal graph over observed variables, we apply a masking operation followed by elementwise thresholding:
\begin{equation}
    \hat{\mathcal{G}}_{x_t} = \mathbb{I}\left( \left| \mathbf{J}_{\hat{g}}(\mathbf{x}_t) \odot \mathcal{M}_{\text{init}} \right| > \tau \right),
\end{equation}
where $\odot$ denotes the elementwise (Hadamard) product, and $\mathbb{I}(\cdot)$ is the indicator function that outputs 1 if the condition is true and 0 otherwise. We set the threshold to $\tau = 0.15$ to obtain a binary adjacency matrix. $\mathcal{M}_{\text{init}}$ is the initialization mask.

To compute the partial Jacobian $\mathbf{J}_{\hat{g}}(\mathbf{x}_t)$ with respect to $\mathbf{s}_t$ while keeping $\mathbf{z}_t$ fixed, we disable gradient tracking for $\mathbf{z}_t$ by setting \texttt{requires\_grad=False}, and use \texttt{autograd.functional.jacobian} in PyTorch.

\paragraph{Model Structure}
We choose MICN \citep{wang2022micn} as the encoder backbone of our model on real-world datasets. Specifically, given that the MICN extracts the hidden feature, we apply a variational inference block and then an MLP-based decoder. The four module blocks of the proposed method are illustrated in Figure~\ref{fig:architecture}. 

\section{Extended Experiment Results} \label{app:ext_exp_res}
We report scalability studies, ablations, higher-order dynamics, and additional forecasting comparisons.

\subsection{Experiment Results of Simulated Datasets} \label{app:ext-sim}
We examine how $d_x$, $d_z$, sparsity, and order affect latent recovery and structure learning.

\paragraph{Study on Dimension of Observed Variables.} Table~\ref{tab:large-cd} reports performance on CD and CRL, evaluated across $d_x \in \{3, 5, 8, 10, 20^*,50^*,80^*,100^*,200^*\}$. The results on MCC confirm the effectiveness of our methodology under identifiability conditions. The asterisk on $d_x = 100^*$ means that we impose a physical-distance prior to eliminate 75\% of the edges from the fully connected graph. The corresponding results demonstrate that our approach scales effectively to high-dimensional settings, in presence of the imposed physical prior on the learned causal graph over climate grids.

\begin{table*}[htp!]
    \centering
    \caption{\textbf{Results under Varying Observational Dimensionality ($d_x$).} Each setting is repeated with 5 random seeds. For evaluation, the best-converged result per seed is selected to avoid local minima.}
    \resizebox{\textwidth}{!}{ %
        \begin{tabular}{c|c|cccc|cccc}
            \toprule
            $d_z$ & $d_x$ & \textbf{SHD} ($\mathbf{J}_{\hat{g}}(\hat{\mathbf{x}}_t)$) $\downarrow$ & \textbf{TPR}$\uparrow$ & \textbf{Precision}$\uparrow$ & \textbf{MCC}($\mathbf{s}_t$)$\uparrow$ & \textbf{MCC}($\mathbf{z}_t$)$\uparrow$ & \textbf{SHD} ($\mathbf{J}_r(\hat{\mathbf{z}}_{t})$)$\downarrow$ & \textbf{SHD} ($\mathbf{J}_d(\hat{\mathbf{z}}_{t-1})$)$\downarrow$ & $R^2$$\uparrow$ \\
            \midrule
            \multirow{9}{*}{3} 
            & 3   & 0.00\Std{0.00} & 1.00\Std{0.00} & 1.00\Std{0.00} & 0.9775\Std{0.01} & 0.9721\Std{0.01} & 0.27\Std{0.05} & 0.26\Std{0.03} & 0.90\Std{0.05} \\
            & 6   & 0.18\Std{0.06} & 0.83\Std{0.03} & 0.80\Std{0.04} & 0.9583\Std{0.02} & 0.9505\Std{0.01} & 0.24\Std{0.06} & 0.33\Std{0.09} & 0.92\Std{0.01} \\
            & 8   & 0.29\Std{0.05} & 0.78\Std{0.05} & 0.76\Std{0.04} & 0.9020\Std{0.03} & 0.9601\Std{0.03} & 0.36\Std{0.11} & 0.31\Std{0.12} & 0.93\Std{0.02} \\
            & 10  & 0.43\Std{0.05} & 0.65\Std{0.08} & 0.63\Std{0.14} & 0.8504\Std{0.07} & 0.9652\Std{0.02} & 0.29\Std{0.04} & 0.40\Std{0.05} & 0.92\Std{0.02} \\
            & 20$^*$  & 0.09\Std{0.01} & 0.92\Std{0.02} & 0.89\Std{0.01} & 0.9573\Std{0.12} & 0.9742\Std{0.08} & 0.10\Std{0.01} & 0.18\Std{0.04} & 0.96\Std{0.01} \\
            & 50$^*$  & 0.13\Std{0.02} & 0.87\Std{0.17} & 0.85\Std{0.19} & 0.9318\Std{0.01} & 0.9619\Std{0.01} & 0.16\Std{0.02} & 0.22\Std{0.06} & 0.93\Std{0.02} \\
            & 80$^*$  & 0.15\Std{0.02} & 0.84\Std{0.08} & 0.83\Std{0.10} & 0.9223\Std{0.07} & 0.9550\Std{0.09} & 0.18\Std{0.02} & 0.25\Std{0.13} & 0.94\Std{0.03} \\
            & 100$^*$ & 0.17\Std{0.02} & 0.80\Std{0.05} & 0.81\Std{0.02} & 0.9131\Std{0.02} & 0.9565\Std{0.02} & 0.21\Std{0.01} & 0.29\Std{0.10} & 0.93\Std{0.03} \\
            & 200$^*$ & 0.16\Std{0.07} & 0.74\Std{0.06} & 0.72\Std{0.04} & 0.8950\Std{0.02} & 0.9603\Std{0.03} & 0.22\Std{0.02} & 0.35\Std{0.12} & 0.92\Std{0.04} \\
            \bottomrule
        \end{tabular} 
    }
    \label{tab:large-cd}
\end{table*}

\Rev{\paragraph{Study on Latent Dimensionality $d_z$.} We test CaDRe under varying latent dimensionality. Table~\ref{tab:latent_dim_exp} reports results at $d_x=10$ with $d_z \in \{3,5,7,9,12\}$, and Table~\ref{tab:dz change} complements them at the smaller scale $d_x=6$ with $d_z \in \{2,3,4\}$. Latent recovery (MCC$(\zz_t)$) and instantaneous-structure SHD remain strong for $d_z \leq 7$ and degrade gradually at $d_z=9$ and $12$, while $R^2$ for the latent space stays above $0.80$ throughout. The decline at large $d_z$ reflects a known difficulty of continuous optimization for high-dimensional latent processes~\citep{zhang2024causal, li2025on}. Inference time is essentially independent of $d_z$ because the Jacobian-based structure learning in CaDRe is a one-step inference.}{C17}

\begin{table}[h!]
\centering
\caption{Performance under varying latent dimensionality $d_z$.}
\resizebox{\textwidth}{!}{
\begin{tabular}{ccccccccccc}
\toprule
$d_z$ & $d_x$ & SHD$(J_{\hat{g}}(\hat{\xx}_t))$ & TPR & Precision & MCC$(\ss_t)$ & MCC$(\zz_t)$ & SHD$(J_r(\hat{\zz}_t))$ & SHD$(J_d(\hat{\zz}_{t-1}))$ & $R^2$ & Time (ms) \\
\midrule
3  & 10 & 0.43$\pm$0.05 & 0.65$\pm$0.08 & 0.63$\pm$0.14 & 0.85$\pm$0.07 & 0.97$\pm$0.02 & 0.29$\pm$0.04 & 0.40$\pm$0.05 & 0.92$\pm$0.02 & 0.71$\pm$0.03 \\
5  & 10 & 0.44$\pm$0.04 & 0.63$\pm$0.09 & 0.61$\pm$0.12 & 0.82$\pm$0.06 & 0.96$\pm$0.03 & 0.27$\pm$0.05 & 0.37$\pm$0.02 & 0.90$\pm$0.02 & 0.88$\pm$0.10 \\
7  & 10 & 0.46$\pm$0.06 & 0.60$\pm$0.02 & 0.59$\pm$0.01 & 0.89$\pm$0.05 & 0.96$\pm$0.04 & 0.27$\pm$0.03 & 0.30$\pm$0.11 & 0.96$\pm$0.03 & 1.05$\pm$0.20 \\
9  & 10 & 0.44$\pm$0.05 & 0.67$\pm$0.06 & 0.57$\pm$0.11 & 0.81$\pm$0.04 & 0.85$\pm$0.05 & 0.32$\pm$0.04 & 0.41$\pm$0.08 & 0.88$\pm$0.03 & 1.27$\pm$0.18 \\
12 & 10 & 0.47$\pm$0.05 & 0.61$\pm$0.07 & 0.55$\pm$0.09 & 0.70$\pm$0.06 & 0.68$\pm$0.05 & 0.50$\pm$0.13 & 0.58$\pm$0.19 & 0.80$\pm$0.03 & 1.34$\pm$0.12 \\
\bottomrule
\end{tabular}
}
\label{tab:latent_dim_exp}
\end{table}

\label{exp:dz change}
\begin{table}[htp!]
    \centering
    \caption{\textbf{Results on Different Latent Dimensions.} We run simulations with 5 random seeds, selected based on the best-converged results to avoid local minima. }
    \resizebox{\textwidth}{!}{ %
        \begin{tabular}{c|c|cccc|cccc}
        \toprule
             $d_x$ & $d_z$ & SHD ($\mathcal{G}_{x_t}$) & TPR & Precision & MCC ($\mathbf{s}_t$) & MCC ($\mathbf{z}_t$) & SHD ($\mathcal{G}_{z_t}$) & SHD ($\mathcal{M}_{lag}$) & $R^2$ \\
             \midrule
            \multirow{3}{*}{6} & 2 & 0.12 ($\pm$0.04)  & 0.86 ($\pm$0.02) & 0.85 ($\pm$0.04) & 0.9864 ($\pm$0.01) & 0.9741 ($\pm$0.03) & 0.15 ($\pm$0.03) & 0.21 ($\pm$0.05) & 0.95 ($\pm$0.01) \\
              & 3 & 0.18 ($\pm$0.06)  & 0.83 ($\pm$0.02) & 0.80 ($\pm$0.04) & 0.9583 ($\pm$0.02) & 0.9505 ($\pm$0.01) & 0.24 ($\pm$0.06) & 0.33 ($\pm$0.09) & 0.92 ($\pm$0.01) \\
              & 4 & 0.23 ($\pm$0.02) & 0.80 ($\pm$0.06) & 0.74 ($\pm$0.01) & 0.9041 ($\pm$0.02) & 0.8931 ($\pm$0.03)  & 0.33 ($\pm$0.03) & 0.48 ($\pm$0.05) & 0.91 ($\pm$0.02) \\
            \bottomrule
        \end{tabular}       
    }
    \label{tab:dz change}
\end{table}
\noindent\textbf{Correct number of Latent Variables.} 
To determine the correct number of latent factors, empirically, identifying the correct latent dimensionality becomes a model selection problem of finding the minimal yet sufficient number of latent variables. 
We treat $\hat{d}_z$ as a hyperparameter determined by predictive accuracy and reconstruction error. 
As shown in Table~\ref{tab:latent_dim_selection}, when $\hat{d}_z < d_z$, reconstruction errors are high (underfitting); as $\hat{d}_z$ approaches $d_z$, errors decrease sharply, and beyond this point, additional dimensions bring negligible improvement, confirming the identifiability of $d_z$.

\begin{table}[h!]
\centering
\caption{Reconstruction performance under different estimated latent dimensions $\hat{d}_z$ with true $d_z = 4$.}
\resizebox{0.5\textwidth}{!}{
\begin{tabular}{cccc}
\toprule
True $d_z$ & Estimated $\hat{d}_z$ & MSE $\downarrow$ & MAE $\downarrow$ \\
\midrule
4 & 2 & 0.642 $\pm$ 0.056 & 0.273 $\pm$ 0.044 \\
4 & 3 & 0.172 $\pm$ 0.032 & 0.231 $\pm$ 0.027 \\
4 & 4 & \textbf{0.115 $\pm$ 0.018} & \textbf{0.193 $\pm$ 0.021} \\
4 & 5 & 0.133 $\pm$ 0.019 & 0.198 $\pm$ 0.022 \\
4 & 6 & 0.109 $\pm$ 0.021 & 0.248 $\pm$ 0.025 \\
\bottomrule
\end{tabular}
}
\label{tab:latent_dim_selection}
\end{table}

\paragraph{Ablation Study on Conditions.} \label{app:asp abl}
\begin{table}[ht!]
    \centering
    \caption{\textbf{Ablation Study on Assumption Violation.} These results verify the necessity of our assumptions in the theoretical analysis.}
    \resizebox{\textwidth}{!}{
    \begin{tabular}{l|c|c|cccc|cccc}
        \toprule
        \textbf{Assumption} & $d_z$ & $d_x$ 
        & \textbf{SHD} ($\mathbf{J}_{\hat{g}}(\hat{\mathbf{x}}_t)$) $\downarrow$ 
        & \textbf{TPR}$\uparrow$ 
        & \textbf{Precision}$\uparrow$ 
        & \textbf{MCC}($\mathbf{s}_t$)$\uparrow$ 
        & \textbf{MCC}($\mathbf{z}_t$)$\uparrow$ 
        & \textbf{SHD} ($\mathbf{J}_r(\hat{\mathbf{z}}_{t})$)$\downarrow$ 
        & \textbf{SHD} ($\mathbf{J}_d(\hat{\mathbf{z}}_{t-1})$)$\downarrow$ 
        & $R^2$$\uparrow$ \\
        \midrule
        No Violation & 3 & 6 & 0.18\Std{0.06} & 0.83\Std{0.03} & 0.80\Std{0.04} & 0.9583\Std{0.02} & 0.9505\Std{0.02} & 0.24\Std{0.19} & 0.33\Std{0.09} & 0.92\Std{0.01} \\
        Violate A2 (Contextual Variability) & 3 & 6 & 0.26\Std{0.07} & 0.71\Std{0.05} & 0.68\Std{0.05} & 0.7563\Std{0.04} & 0.8820\Std{0.04} & 0.36\Std{0.07} & 0.41\Std{0.08} & 0.67\Std{0.03} \\
        Violate A3 (Latent Drift) & 3 & 6 & 0.31\Std{0.05} & 0.67\Std{0.13} & 0.64\Std{0.06} & 0.8645\Std{0.07} & 0.8478\Std{0.08} & 0.39\Std{0.12} & 0.46\Std{0.14} & 0.78\Std{0.21} \\
        Violate A5 (Generation Variability) & 3 & 6 & 0.35\Std{0.11} & 0.65\Std{0.12} & 0.60\Std{0.10} & 0.7052\Std{0.13} & 0.9325\Std{0.03} & 0.41\Std{0.08} & 0.47\Std{0.10} & 0.85\Std{0.02} \\
        \bottomrule
    \end{tabular}}
    \label{tab:mov asp}
\end{table}

We further conduct simulation studies to validate the theoretical identifiability guarantees under controlled settings with latent dimension $d_z = 3$ and observation dimension $d_x = 6$. To explicitly assess the necessity of key assumptions in our theory, we intentionally remove specific conditions, which are critical to the identifiability results. The following cases illustrate three distinct violations:
\begin{enumerate}[label=\roman*., leftmargin=*]
    \item \textbf{B} (Violation of Assumption~\ref{asp:linear operator}): To violate the injectivity of linear operators, we use a simple autoregressive process $\mathbf{z}_t = \mathbf{z}_{t-1} + \boldsymbol{\epsilon}_{z_t}$ with $\boldsymbol{\epsilon}_{z_t} \sim \text{Uniform}(0, 1)$, which fails the injectivity requirement for $L_{z_t \mid z_{t-1}}$ and $L_{\xx_{t-1} \mid \xx_{t+1}}$~\citep{mattner1993some}.

    \item \textbf{A} (Violation of Assumption~\ref{asp:nonredundant}): We enforce the generating function to be a linear orthogonal mapping: $\mathbf{x}_t = W \mathbf{z}_t + \mathbf{s}_t$, thereby violating the latent drift condition.
    
    \item \textbf{C} (Violation of Assumption~\ref{asp:lin ind}): We constrain the generation variability by setting $\mathbf{s}_t = q(\mathbf{z}_t) + \boldsymbol{\epsilon}_{x_t}$, where $q$ is a fixed mixing function and $\boldsymbol{\epsilon}_{x_t} \sim \mathcal{N}(0, \mathbf{I}_{d_x})$. This results in a linear Gaussian model without heteroscedasticity, undermining the necessary distributional variability, as discussed in~\citep{yao2022temporally}.
\end{enumerate}
As shown in Table~\ref{tab:mov asp}, the removal of these assumptions leads to a substantial drop in both $R^2$ and MCC for $\mathbf{s}_t$, indicating a failure to recover the latent space and the observation-level causal structure. These findings empirically substantiate the necessity of our theoretical assumptions and delineate the conditions under which identifiability breaks down.

\Rev{\paragraph{Practical Scope of Assumptions A1--A5.} The assumptions in Theorems~\ref{thm:blk idn} and~\ref{thm:ident B} characterize the variability and regularity of any time-series whose latent process drives observations, and are not specific to climate data. Table~\ref{tab:asp-scope} summarizes, for each assumption, the broader settings in which it remains natural and the qualitative consequence when it fails. The quantitative effects on $R^2$ are deferred to the ablation in Table~\ref{tab:mov asp} to avoid restating numbers.

\begin{table}[h]
\centering
\small
\caption{Domain generality and qualitative failure mode of Assumptions A1--A5. Quantitative effects appear in Table~\ref{tab:mov asp}.}
\label{tab:asp-scope}
\begin{tabular}{lll}
\toprule
\textbf{Assumption} & \textbf{Holds beyond climate when} & \textbf{Qualitative effect when violated} \\
\midrule
A1 (Bounded densities)      & Any bounded physical signal (fMRI, epidemiology)           & Operator $L$ becomes unbounded \\
A2 (Contextual variability) & Time-series exhibits regime change (brain states, regimes) & Distribution-level identifiability fails first \\
A3 (Latent drift)           & Latent process is non-stationary (drift, video, RL)        & Block-wise identifiability fails \\
A4 (Differentiability)      & Encoder is differentiable (any VAE or normalizing flow)    & Value-level identification breaks; block-wise survives \\
A5 (Generation variability) & Generation is not linear Gaussian (measure-zero failure)   & Permutation-level indeterminacy only \\
\bottomrule
\end{tabular}
\end{table}

\paragraph{Finite-Sample Optimization Gap.} Identifiability is an asymptotic property of the population. In finite samples, the learned model must additionally match the observational distribution under non-convex optimization. Three pieces of evidence already in this appendix bound this gap: SHD and TPR improve monotonically as $n$ grows from 200 to 20{,}000 and plateau near $n=10{,}000$ (Figure~\ref{fig:cons comp}); controlled assumption violations cause graceful rather than catastrophic degradation (Table~\ref{tab:mov asp}); performance is robust across $\alpha$ and $\beta$ ranges (Table~\ref{tab:hyper_sens}). The real-world datasets used in this paper operate above this stable regime.}{C5}

\paragraph{Experimental Verifications on the Higher-Order Markov Process.} To verify our framework’s capability on higher-order latent dynamics, we conduct experiments using a second-order latent Markov process. For simulating datasets, the latent process is generated using a leaky non-linear autoregressive model with $L=2$:
\[
z_t = (I - B^{-1})\left( \sigma\left(\sum_{\ell=1}^L \mathbf{W}^{(\ell)} z_{t-\ell}\right) + \epsilon_t^z \right), 
\quad \epsilon_t^z \sim \mathcal{N}(0, \sigma_z^2 I),
\]
where $\sigma(\cdot)$ is leaky ReLU, $\mathbf{W}^{(\ell)}$ are lag-$\ell$ transition matrices, and $B$ is the instantaneous latent causal adjacency matrix. Results are reported below:
\begin{table*}[t]
\caption{\textbf{Performance under Higher-Order Latent Dynamics.} The results are averaged over five random seeds. Lower SHD is better, while higher TPR, Precision, MCC, and $R^2$ are better.}
\centering
\resizebox{\textwidth}{!}{
\begin{tabular}{ccccccccccc}
\toprule
$d_z$ & $d_x$ & SHD ($J_{\hat{g}}(\hat{x}_t)$) & TPR & Precision & MCC ($s_t$) & MCC ($z_t$) & SHD ($J_r(\hat{z}_t)$) & SHD ($J_d(\hat{z}_{t-1})$) & $R^2$ \\
\midrule
3 & 3   & 0.31 \Std{0.01} & 0.91 \Std{0.02} & 0.93 \Std{0.03} & 0.9780 \Std{0.01} & 0.9825 \Std{0.01} & 0.26 \Std{0.06} & 0.30 \Std{0.04} & 0.93 \Std{0.04} \\
3 & 6   & 0.19 \Std{0.07} & 0.81 \Std{0.04} & 0.79 \Std{0.08} & 0.9560 \Std{0.03} & 0.9520 \Std{0.01} & 0.25 \Std{0.05} & 0.34 \Std{0.08} & 0.91 \Std{0.02} \\
3 & 8   & 0.27 \Std{0.06} & 0.80 \Std{0.04} & 0.70 \Std{0.05} & 0.9040 \Std{0.09} & 0.9610 \Std{0.10} & 0.34 \Std{0.09} & 0.32 \Std{0.10} & 0.93 \Std{0.03} \\
3 & 10  & 0.45 \Std{0.06} & 0.64 \Std{0.10} & 0.62 \Std{0.13} & 0.8470 \Std{0.06} & 0.9630 \Std{0.03} & 0.30 \Std{0.06} & 0.39 \Std{0.06} & 0.91 \Std{0.10} \\
3 & 100* & 0.18 \Std{0.03} & 0.81 \Std{0.04} & 0.80 \Std{0.03} & 0.9100 \Std{0.03} & 0.9570 \Std{0.02} & 0.22 \Std{0.02} & 0.28 \Std{0.09} & 0.92 \Std{0.13} \\
\bottomrule
\end{tabular}}
\label{tab:higher-order}
\end{table*}
The table shows that CaDRe maintains high CRL quality and accurate causal discovery, confirming its effectiveness under higher-order latent dynamics.
All other settings are aligned with those reported in the main paper.
This setting allows us to evaluate how model performance changes under higher-order latent dynamics.

\Rev{\paragraph{Additional Neural Causal-Discovery Baselines.} We extend the constraint-based comparison in Section~\ref{sec:sim exp} with four neural time-series causal-discovery methods: DYNOTEARS~\citep{pamfil2020dynotears}, CUTS~\citep{cheng2023cuts}, Rhino~\citep{gong2023rhino}, and Jacobian-CD~\citep{reizinger2023jacobian}. We fix $d_z=3$ and vary $d_x \in \{3,6,8,10\}$. Figure~\ref{fig:nn-baselines} reports the four standard metrics; CaDRe achieves the best score on every panel. Because the neural baselines do not model latent confounders, MCC$(s_t)$ is reported only for the methods that produce a noise estimate.

\begin{figure}[h]
\centering
\includegraphics[width=0.99\linewidth]{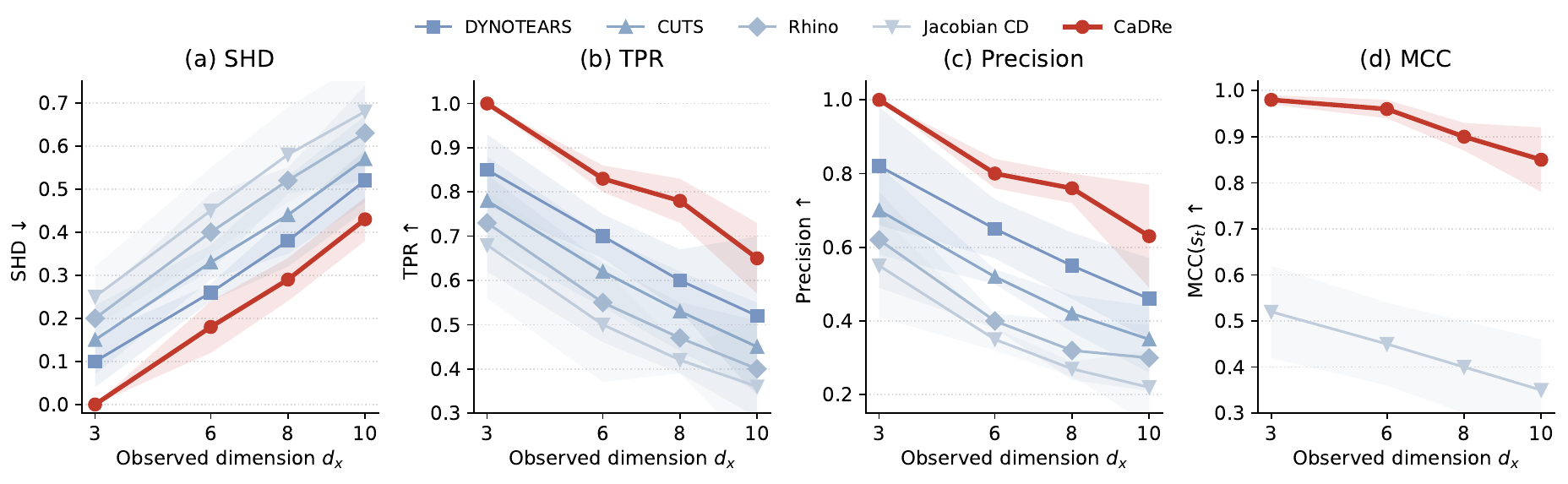}
\caption{Comparison with neural causal-discovery baselines as the observed dimension $d_x$ grows ($d_z=3$). Shaded bands denote $\pm 1$ standard deviation across five seeds. CaDRe is highlighted with a thicker red curve; lower SHD and higher TPR, Precision, and MCC$(s_t)$ are better.}
\label{fig:nn-baselines}
\end{figure}

\paragraph{Threshold-Agnostic Metrics and External Validation.} The SHD and F1 reported above depend on a binarization threshold applied to the continuous Jacobian. To remove this dependence we report AUROC and AUPRC, which CaDRe's continuous Jacobian exposes through edge ranking. We further validate on CausalRivers~\citep{stein2025causalrivers}, a benchmark with established ground-truth causal relations. Figure~\ref{fig:thr-agnostic} summarizes both comparisons: CaDRe attains the highest score on every panel, with VAR being the strongest baseline on CausalRivers (consistent with the original benchmark report).

\begin{figure}[h]
\centering
\includegraphics[width=0.95\linewidth]{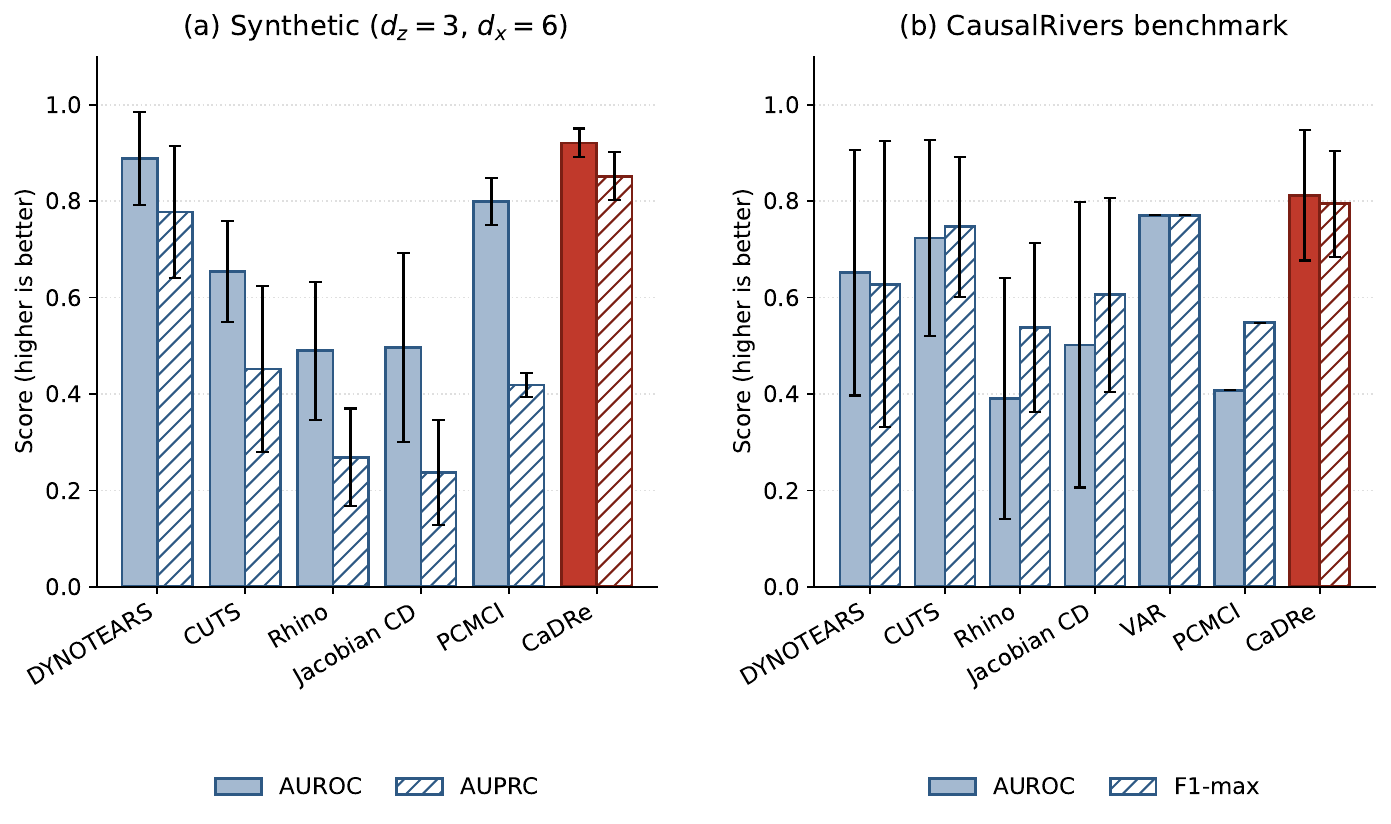}
\caption{Threshold-agnostic causal-discovery comparison. (a) AUROC and AUPRC on synthetic data with $d_z=3$, $d_x=6$. (b) AUROC and F1-max on the CausalRivers benchmark.}
\label{fig:thr-agnostic}
\end{figure}

\paragraph{Sensitivity to Time-Lag Length $L$ and Scalability to $d_x=1000$.} The framework extends to higher-order Markov dynamics by treating $(z_t, \dots, z_{t-L+1})$ as an augmented state (Appendix~\ref{app:time-lag-iden}). Performance is stable across $L \in \{1,2,3,5\}$ at $d_z=3$, $d_x=6$: $\mathrm{SHD}$ varies between $0.18$ and $0.24$, $\mathrm{F1}$ between $0.79$ and $0.82$, and $R^2$ between $0.81$ and $0.92$. The continuous-optimization design also scales: extending Table~\ref{tab:large-cd} from $d_x=200^*$ to $d_x=1000^*$ (same physical-distance prior) gives $\mathrm{SHD}=0.19\pm0.08$, $\mathrm{TPR}=0.68\pm0.08$, $\mathrm{MCC}(z_t)=0.94\pm0.04$, with inference latency $1.1\,$ms at $d_x=200$ and $3.8\,$ms at $d_x=1000$, against $3391\,$ms for PCMCI and $999\,$ms for FCI on smaller dimensions.}{C6}

\paragraph{Hyperparameter Sensitivity.} \label{app:hyper sens}
We test the hyperparameter sensitivity of CaDRe w.r.t. the sparsity and DAG penalty, as these hyperparameters have a significant influence on the performance of structure learning. In this experiment, we set $d_z = 3$ and $d_z = 6$. As shown in Table~\ref{tab:hyper_sens}, the results demonstrate robustness across different settings, although the performance of structure learning is particularly sensitive to the sparsity constraint. 

\Rev{We additionally evaluate two score-based causal-discovery baselines, BIC~\citep{schwarz1978estimating} and the Generalized Score Function~\citep{huang2018generalized}, both implemented through \texttt{causal-learn}~\citep{zheng2024causal}. Their results are reported in Table~\ref{tab:score_cd_results}; CaDRe's own scaling and the constraint-based comparisons are reported in Table~\ref{tab:large-cd} and Figure~\ref{fig:cons comp}, respectively, and we do not repeat them here.

\begin{table}[h!]
\centering
\small
\caption{Score-based causal-discovery baselines on the simulated benchmark ($d_z=3$). CaDRe and the constraint-based methods are reported in Table~\ref{tab:large-cd} and Figure~\ref{fig:cons comp}.}
\label{tab:score_cd_results}
\begin{tabular}{lccccc}
\toprule
\textbf{Method} & $d_x$ & \textbf{SHD}$\downarrow$ & \textbf{Precision}$\uparrow$ & \textbf{Recall}$\uparrow$ & \textbf{F1}$\uparrow$ \\
\midrule
BIC                          & 3  & 0.192 & 0.755 & 0.712 & 0.732 \\
                             & 6  & 0.349 & 0.602 & 0.574 & 0.588 \\
                             & 8  & 0.456 & 0.588 & 0.455 & 0.513 \\
                             & 10 & 0.493 & 0.408 & 0.406 & 0.406 \\
\midrule
Generalized Score Function   & 3  & 0.108 & 0.876 & 0.838 & 0.857 \\
                             & 6  & 0.298 & 0.692 & 0.659 & 0.675 \\
                             & 8  & 0.382 & 0.604 & 0.563 & 0.583 \\
                             & 10 & 0.402 & 0.512 & 0.442 & 0.474 \\
\bottomrule
\end{tabular}
\end{table}
}{C16}

\begin{table}[htp!]
    \centering
    \scriptsize 
        \caption{\textbf{Hyperparameter Sensitivity.} We run experiments using 5 different random seeds for data generation and estimation procedures, reporting the average performance on evaluation metrics. "/" indicates loss of explosion. Notably, an excessively large DAG penalty at the beginning of training can result in a loss explosion or the failure of convergence. }
    \begin{tabular}{ccccccc}
        \toprule
        \textbf{$\alpha$} & $1 \times 10^{-5}$ & $5 \times 10^{-5}$ & $1 \times 10^{-4}$ & $5 \times 10^{-4}$ & $1 \times 10^{-3}$ & $1 \times 10^{-2}$ \\
        \midrule
        SHD & 0.23 & 0.22 & 0.18 & 0.27 & 0.32 & 0.67 \\
        \midrule \\[-1em]
        \textbf{$\beta$} & $1 \times 10^{-5}$ & $5 \times 10^{-5}$ & $1 \times 10^{-4}$ & $5 \times 10^{-4}$ & $1 \times 10^{-3}$ & $1 \times 10^{-2}$ \\
        \midrule
        SHD & 0.37 & 0.18 & 0.20 & / & / & / \\
        \bottomrule
    \end{tabular}
    \label{tab:hyper_sens}
\end{table}

\subsection{Experiment Results of Real-World Datasets} \label{app:ext-real}
We provide comprehensive forecasting tables and comparisons to recent large-scale time-series models, as well as CaDRe generalizations to other domains.

\subsubsection{Extended Real-World Experiments}

\paragraph{Full Results of Weather Forecasting.} To provide stronger evidence of the effectiveness of our approach, we include Table~\ref{tab:comp_fore_full} as a supplementary extension of Table~\ref{tab:comp_fore}. This table incorporates three additional methods, TimeMixer~\citep{wang2024timemixer}, TimeXer~\citep{wang2024timexer}, and xLSTM-Mixer~\citep{kraus2024xlstm}, evaluated across three datasets. Our results remain competitive with these methods while maintaining the ability to learn meaningful causal knowledge.

\paragraph{Comparisons with Large-Scale Time-Series Forecasting Models.} Furthermore, we evaluate a broader set of large-scale time-series forecasting models on the Weather dataset. The models considered include iTransformer~\cite{liu2023itransformer}, PatchTST~\cite{nie2022time}, Informer~\cite{liu2024timebridge}, DLinear~\cite{zeng2023transformers}, FAN~\cite{ye2024frequency}, TimesNet~\cite{wu2022timesnet}, and N-Transformer~\cite{liu2022non}. As shown in Table~\ref{tab:weather-results}, this extended comparison demonstrates that our method remains competitive against state-of-the-art large-scale time-series forecasting approaches.

\begin{table*}[htp!]
\caption{\textbf{Extended Results on Weather Forecasting.} Lower MSE/MAE is better. \textbf{Bold} numbers represent the best performance among the models, while \underline{underlined} numbers denote the second-best.}
\resizebox{\textwidth}{!}{
\begin{tabular}{l|l|cc|cc|cc|cc|cc|cc|cc|cc}
\toprule
\multirow{2}{*}{\textbf{Dataset}} & \multirow{2}{*}{\textbf{Length}} & \multicolumn{2}{c}{\textbf{CaDRe}} & \multicolumn{2}{c}{\textbf{iTransformer}} & \multicolumn{2}{c}{\textbf{Informer}} & \multicolumn{2}{c}{\textbf{PatchTST}} & \multicolumn{2}{c}{\textbf{DLinear+FAN}} & \multicolumn{2}{c}{\textbf{TimesNet}} & \multicolumn{2}{c}{\textbf{DLinear}} & \multicolumn{2}{c}{\textbf{N-Transformer}} \\
\cmidrule(lr){3-18}
 &  & \textbf{MSE} & \textbf{MAE} & \textbf{MSE} & \textbf{MAE} & \textbf{MSE} & \textbf{MAE} & \textbf{MSE} & \textbf{MAE} & \textbf{MSE} & \textbf{MAE} & \textbf{MSE} & \textbf{MAE} & \textbf{MSE} & \textbf{MAE} & \textbf{MSE} & \textbf{MAE} \\
\midrule
\multirow{4}{*}{\textbf{Weather}} 
& 48  & \textbf{0.125} & \textbf{0.167} & \underline{0.140} & \underline{0.179} & 0.177 & 0.218 & 0.148 & 0.188 & 0.158 & 0.217 & 0.138 & 0.191 & 0.156 & 0.198 & 0.143 & 0.195 \\
& 96  & \textbf{0.157} & \textbf{0.203} & \underline{0.168} & \underline{0.214} & 0.225 & 0.259 & 0.187 & 0.226 & 0.199 & 0.256 & 0.180 & 0.231 & 0.186 & 0.229 & 0.199 & 0.246 \\
& 144 & \underline{0.180} & \textbf{0.225} & \textbf{0.172} & \underline{0.225} & 0.278 & 0.297 & 0.207 & 0.242 & 0.312 & 0.274 & 0.190 & 0.244 & 0.199 & 0.244 & 0.225 & 0.267 \\
& 192 & \underline{0.207} & \textbf{0.248} & \textbf{0.193} & \underline{0.241} & 0.354 & 0.348 & 0.234 & 0.265 & 0.238 & 0.298 & 0.212 & 0.265 & 0.217 & 0.261 & 2.960 & 0.315 \\
\bottomrule
\end{tabular}
}
\label{tab:weather-results}
\end{table*}

\paragraph{Generalize CaDRe to Other Domains.}
To assess the generalization ability of CaDRe beyond climate datasets, we further evaluate it on three standard long-term time-series forecasting benchmarks from diverse domains: ILI (public health), ECL (electricity consumption), and Traffic (road monitoring). As reported in Table~\ref{tab:generalization-results}, CaDRe consistently achieves competitive or superior forecasting accuracy across different horizons, often outperforming recent transformer- and CNN-based baselines. These results indicate that CaDRe is not restricted to climate data but extends robustly to heterogeneous time-series domains.
\begin{table}[htp!]
\caption{\textbf{Forecasting results on ILI, ECL, and Traffic datasets.} Lower MSE/MAE is better. \textbf{Bold} numbers represent the best performance among the models, while \underline{underlined} numbers denote the second-best. \Rev{Table is rendered at its natural size for readability.}{C14}}
\centering
\small
\begin{tabular}{l|l|cc|cc|cc|cc|cc}
\toprule
\multirow{2}{*}{\textbf{Dataset}} & \multirow{2}{*}{\textbf{Length}} 
& \multicolumn{2}{c}{\textbf{CaDRe}} 
& \multicolumn{2}{c}{\textbf{N-Transformer}} 
& \multicolumn{2}{c}{\textbf{Autoformer}} 
& \multicolumn{2}{c}{\textbf{MICN}} 
& \multicolumn{2}{c}{\textbf{TimesNet}} \\
\cmidrule(lr){3-12}
 & & \textbf{MSE} & \textbf{MAE} & \textbf{MSE} & \textbf{MAE} & \textbf{MSE} & \textbf{MAE} & \textbf{MSE} & \textbf{MAE} & \textbf{MSE} & \textbf{MAE} \\
\midrule
\multirow{4}{*}{\textbf{ILI}} 
& 18-6   & \textbf{1.200} & \textbf{0.691} & \underline{1.491} & \underline{0.757} & 2.637 & 1.094 & 4.847 & 1.570 & 2.406 & 0.840 \\
& 72-24  & \textbf{1.856} & \textbf{0.833} & \underline{2.270} & \underline{0.988} & 2.653 & 1.116 & 4.776 & 1.556 & 2.551 & 1.039 \\
& 144-48 & \textbf{1.796} & \textbf{0.878} & \underline{2.227} & \underline{1.018} & 2.696 & 1.139 & 4.917 & 1.584 & 2.978 & 1.123 \\
& 216-72 & \textbf{2.010} & \textbf{0.984} & \underline{2.595} & \underline{1.081} & 2.960 & 1.167 & 4.804 & 1.584 & 2.696 & 1.098 \\
\midrule
\multirow{4}{*}{\textbf{ECL}} 
& 18-6   & \textbf{0.114} & \textbf{0.216} & \underline{0.128} & \underline{0.236} & 0.136 & 0.254 & 0.250 & 0.338 & 0.134 & 0.242 \\
& 72-24  & \textbf{0.121} & \textbf{0.220} & \underline{0.134} & \underline{0.242} & 0.144 & 0.257 & 0.258 & 0.342 & 0.140 & 0.246 \\
& 144-48 & \textbf{0.124} & \textbf{0.225} & \underline{0.149} & \underline{0.256} & 0.163 & 0.275 & 0.271 & 0.353 & 0.155 & 0.260 \\
& 216-72 & \textbf{0.131} & \textbf{0.232} & \underline{0.166} & \underline{0.271} & 0.175 & 0.287 & 0.279 & 0.357 & 0.169 & 0.274 \\
\midrule
\multirow{4}{*}{\textbf{Traffic}} 
& 18-6   & \textbf{0.475} & \textbf{0.287} & 0.797 & 0.347 & 0.554 & 0.322 & \underline{0.487} & \underline{0.307} & 0.781 & 0.337 \\
& 72-24  & \textbf{0.454} & \textbf{0.276} & 0.625 & 0.319 & 0.508 & 0.318 & \underline{0.452} & \underline{0.303} & 0.608 & 0.307 \\
& 144-48 & \textbf{0.450} & \textbf{0.275} & 0.574 & 0.314 & 0.497 & 0.319 & \underline{0.412} & \underline{0.282} & 0.553 & 0.296 \\
& 216-72 & \textbf{0.473} & \textbf{0.287} & 0.593 & 0.325 & 0.524 & 0.330 & \underline{0.400} & \underline{0.278} & 0.564 & 0.303 \\
\bottomrule
\end{tabular}
\label{tab:generalization-results}
\end{table}

\paragraph{Interpretation of Latent Variables and Causal Relations in Other Domains}
\label{sec:interpretation}
To enhance interpretability, we provide explanations of the latent variables and causal relations discovered by \textsc{CaDRe} in different domains. These interpretations clarify how the model captures hidden factors and directional dependencies across datasets (Table~\ref{tab:latent_meaning}).
\begin{table}[ht]
\centering
\caption{Interpretation of latent variables and causal relations across datasets.}
\label{tab:latent_meaning}
\begin{tabular}{p{2.2cm} p{5.3cm} p{5.3cm}}
\toprule
\textbf{Dataset} & \textbf{Meaning of Latent Variables} & \textbf{Meaning of Causal Relations} \\
\midrule
\textbf{ILI (Public Health)} & 
Latent variables capture hidden epidemic dynamics such as \textit{viral transmission strength, seasonality, and social mobility trends} that are not directly observed in infection counts. &
Causal edges represent short-term influences between \textit{regional infection rates} (e.g., spatial spread or temporal cross-region effects). \\ \hline

\textbf{ECL (Electricity Consumption)} & 
Latents encode underlying energy demand drivers—\textit{temperature, population activity patterns, and industrial load cycles}—that evolve over time. &
Causal links describe dependencies between \textit{regional electricity stations or customer groups} (e.g., peak-demand propagation). \\ \hline

\textbf{Traffic (Road Monitoring)} & 
Latents represent hidden mobility factors such as \textit{congestion propagation speed, commuting intensity, and network bottleneck dynamics}. &
Causal edges correspond to directional \textit{traffic flow influences} between neighboring sensors (e.g., upstream $\rightarrow$ downstream road segments). \\
\bottomrule
\end{tabular}
\end{table}
We further visualize the causal discovery results on the \textbf{Traffic} dataset. 
The corresponding Jacobian matrix is:
\[
\mathbf{J}_g(\xx_t) = 
\begin{bmatrix}
0 & 0.2323 & 0.4551 \\
1.4345 & 0 & 0.0369 \\
1.9435 & 0.6632 & 0
\end{bmatrix},
\]
where we set a threshold $\tau = 0.6$ for causal visualization. We also provide the results on ENSO in Figure~\ref{fig:enso_causal_graph} following the similar implementations.





Since there is no physical evidence for latent variables in the traffic domain, we consider visualizing these latent components as part of future work.

\begin{figure}[t]
\centering
\begin{tikzpicture}[
  >=Latex,
  node/.style={draw, rounded corners, fill=black!3, inner sep=4pt, font=\small},
  edge/.style={-Latex, line width=0.9pt},
  strong/.style={edge},          
  weak/.style={edge, dashed}     
]

\node[node] (N4)   {Ni\~no~4};
\node[node, right=2.8cm of N4] (N34) {Ni\~no~3.4};
\node[node, right=2.8cm of N34] (N3)  {Ni\~no~3};
\node[node, right=2.8cm of N3]  (N12) {Ni\~no~1+2};

\node[node, below=1.9cm of N34] (SOI) {\textbf{SOI}};

\draw[strong] (N4)  -- (N34);
\draw[strong] (N34) -- (N3);
\draw[strong] (N3)  -- (N12);

\draw[strong] (N4)  .. controls +(0.0,-1.1) and +(-1.2,0.7) .. (SOI);
\draw[strong] (N34) .. controls +(0.0,-1.1) and +(-0.2,0.7) .. (SOI);



\end{tikzpicture}
\caption{Causal graph for ENSO teleconnections. The chain Ni\~no~4 $\to$ Ni\~no~3.4 $\to$ Ni\~no~3 $\to$ Ni\~no~1+2 with diagonal links to \textbf{SOI}.}
\label{fig:enso_causal_graph}
\end{figure}

\paragraph{Runtime and Computational Efficiency.} \label{app:compute comp} We report the computational cost of the different methods considered. The comparison considers metrics including training time, memory usage, and corresponding performance MSE in the forecasting task. Note that inference time is not included in the comparison, as our work focuses on causal structure learning through continuous optimization rather than constraint-based methods. Figure~\ref{fig:compute_comp} shows that our CaDRe method simultaneously learns the causal structure while achieving the lowest MSE, highlighting the importance of building a transparent and interpretable model. Furthermore, CaDRe exhibits similar training time and memory usage compared to mainstream time-series forecasting models in the lightweight track. 

\noindent \textbf{Unrotated visualization results on climate dataset.} We show the unrotated, full visualization of causal learning on climate dataset at Figure~\ref{fig:full_vis}.
\begin{figure}
    \centering
    \includegraphics[width=0.6\textwidth]{./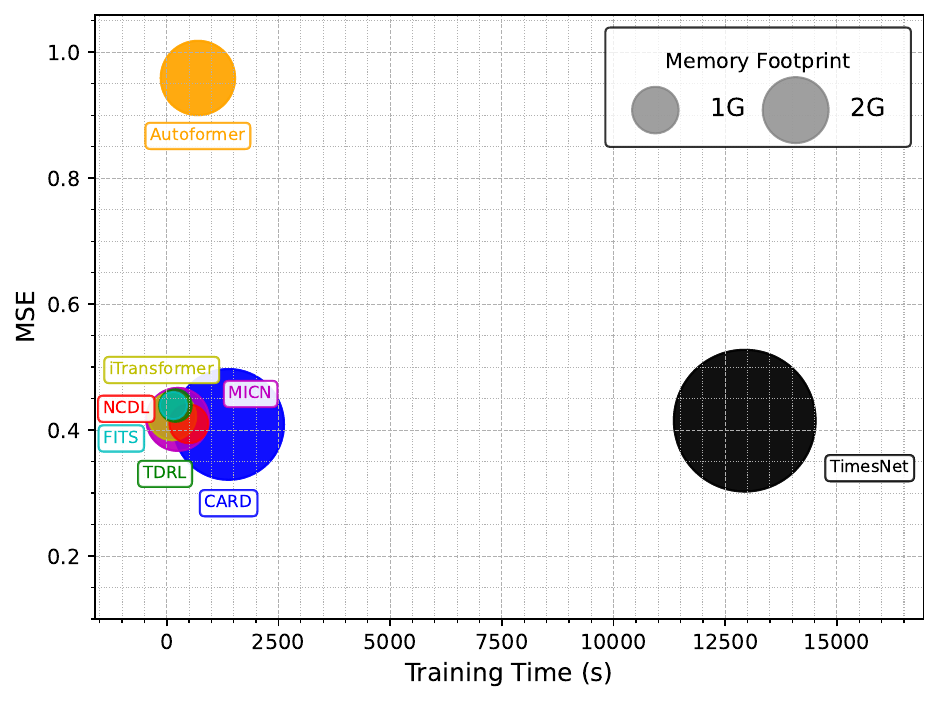}
    \caption{\textbf{Comparison of Computational Cost.} Different colors represent different methods, while the size of the circles corresponds to memory usage. The prediction length is set to 96.}
    \label{fig:compute_comp} 
\end{figure}

\begin{sidewaystable*}[t]
\caption{\textbf{Full Results on Temperature Forecasting across Different Datasets.} Lower MSE/MAE is better. 
\textbf{Bold} numbers represent the best performance among the models, 
while \underline{underlined} numbers denote the second-best.}
\resizebox{\textwidth}{!}{
\begin{tabular}{l|l|cccccccccccccccccccccccc}
\toprule
Dataset & Length & \multicolumn{2}{c}{CaDRe} & \multicolumn{2}{c}{TDRL} & \multicolumn{2}{c}{CARD} & \multicolumn{2}{c}{FITS} & \multicolumn{2}{c}{MICN} & \multicolumn{2}{c}{iTransformer} & \multicolumn{2}{c}{TimesNet} & \multicolumn{2}{c}{Autoformer} & \multicolumn{2}{c}{Timer-XL} & \multicolumn{2}{c}{TimeMixer} & \multicolumn{2}{c}{TimeXer} & \multicolumn{2}{c}{xLSTM-Mixer} \\
& & MSE & MAE & MSE & MAE & MSE & MAE & MSE & MAE & MSE & MAE & MSE & MAE & MSE & MAE & MSE & MAE & MSE & MAE & MSE & MAE & MSE & MAE & MSE & MAE \\
\midrule
CESM2 & 96 & 0.410 & 0.483 & 0.439 & 0.507 & \underline{0.409} & 0.484 & 0.439 & 0.508 & 0.417 & 0.486 & 0.422 & 0.491 & 0.415 & 0.486 & 0.959 & 0.735 & 0.433 & 0.497 & 0.412 & \textbf{0.439} & 0.485 & 0.465 & \textbf{0.367} & \underline{0.452} \\
CESM2 & 192 & \textbf{0.412} & \underline{0.487} & 0.440 & 0.508 & 0.422 & 0.493 & 0.447 & 0.515 & 1.559 & 0.984 & 0.425 & 0.495 & \underline{0.417} & 0.497 & 1.574 & 0.972 & 0.454 & 0.524 & 0.445 & \textbf{0.424} & 0.493 & 0.505 & 0.434 & 0.498 \\
CESM2 & 336 & \textbf{0.413} & 0.485 & 0.441 & 0.505 & \underline{0.421} & 0.497 & 0.482 & 0.536 & 2.091 & 1.173 & 0.426 & 0.494 & 0.423 & 0.499 & 1.845 & 1.078 & 0.527 & 0.565 & 0.541 & \textbf{0.421} & 0.499 & 0.508 & 0.448 & \underline{0.471} \\
Weather & 96 & \textbf{0.157} & \underline{0.203} & 0.442 & 0.511 & 0.423 & 0.497 & 0.172 & 0.221 & 0.199 & 0.256 & 0.168 & 0.214 & 0.180 & 0.231 & 0.225 & 0.338 & 0.167 & 0.219 & 0.163 & 0.219 & 0.163 & 0.220 & \underline{0.160} & \textbf{0.196} \\
Weather & 192 & 0.207 & 0.248 & 0.492 & 0.545 & 0.482 & 0.544 & 0.216 & 0.260 & 0.238 & 0.298 & \textbf{0.193} & \underline{0.241} & 0.212 & 0.265 & 0.304 & 0.368 & 0.236 & 0.268 & 0.214 & 0.261 & 0.226 & 0.249 & \underline{0.194} & \textbf{0.238} \\
Weather & 336 & \textbf{0.270} & 0.314 & 0.536 & 0.612 & 0.525 & 0.596 & 0.386 & 0.439 & 0.316 & 0.496 & 0.426 & 0.494 & 0.423 & 0.499 & 0.354 & 0.408 & 0.274 & \textbf{0.312} & \underline{0.270} & 0.321 & 0.275 & \underline{0.313} & 0.275 & 0.316 \\
ERSST & 96 & \textbf{0.145} & 0.268 & 0.187 & 0.268 & 0.197 & 0.273 & 0.539 & 0.297 & 0.726 & 0.765 & 0.247 & 0.264 & 0.432 & 0.508 & 0.953 & 0.272 & \underline{0.163} & \underline{0.259} & 0.172 & 0.272 & 0.365 & 0.344 & 0.345 & \textbf{0.255} \\
ERSST & 192 & \textbf{0.208} & 0.307 & 0.214 & \textbf{0.293} & 0.233 & 0.375 & 0.226 & 0.752 & 1.263 & 0.892 & 0.251 & 0.535 & 0.452 & 0.585 & 1.024 & 0.908 & \underline{0.210} & \underline{0.294} & 0.214 & 0.302 & 0.372 & 0.367 & 0.371 & 0.297 \\
ERSST & 336 & \textbf{0.305} & 0.361 & 0.462 & 0.388 & 0.487 & 0.484 & 0.439 & 0.535 & 1.173 & 1.172 & \underline{0.305} & 0.659 & 0.581 & 0.607 & 1.387 & 1.353 & 0.352 & \textbf{0.337} & 0.439 & 0.394 & 0.429 & 0.448 & 0.476 & \underline{0.357} \\
\bottomrule
\end{tabular}
}
\label{tab:comp_fore_full}
\end{sidewaystable*}

\section{More Discussions} \label{app:more-disc}
We discuss whether the assumptions are testable, and then extend identifiability to higher-order Markov cases and time-lagged effects in observations.

\subsection{Role of Forecasting and Causal Identification} \label{app:e1-role}
A clear connection exists between causal identification and forecasting performance in our framework. Better latent space recovery directly enhances predictive accuracy, as a well-approximated latent representation $\zz_t \approx h(\hat{\zz}_t)$ ensures that $p(\xx_t \mid \zz_t) = p(\xx_t \mid h(\zz_t))$, thereby minimizing reconstruction risk. Although graph-recovery metrics are not explicitly optimized for forecasting, accurate causal graphs introduce inductive biases that improve forecasting efficiency and reduce uncertainty by preserving physically meaningful dependencies in the data. This alignment between causal structure and predictive skill has been emphasized in recent studies on causally-informed modeling in climate science~\citep{runge2019inferring}. We have added this discussion to clarify the complementary roles of causal discovery and forecasting in our revised manuscript.
\begin{figure}
    \centering
    \includegraphics[width=0.9\linewidth]{./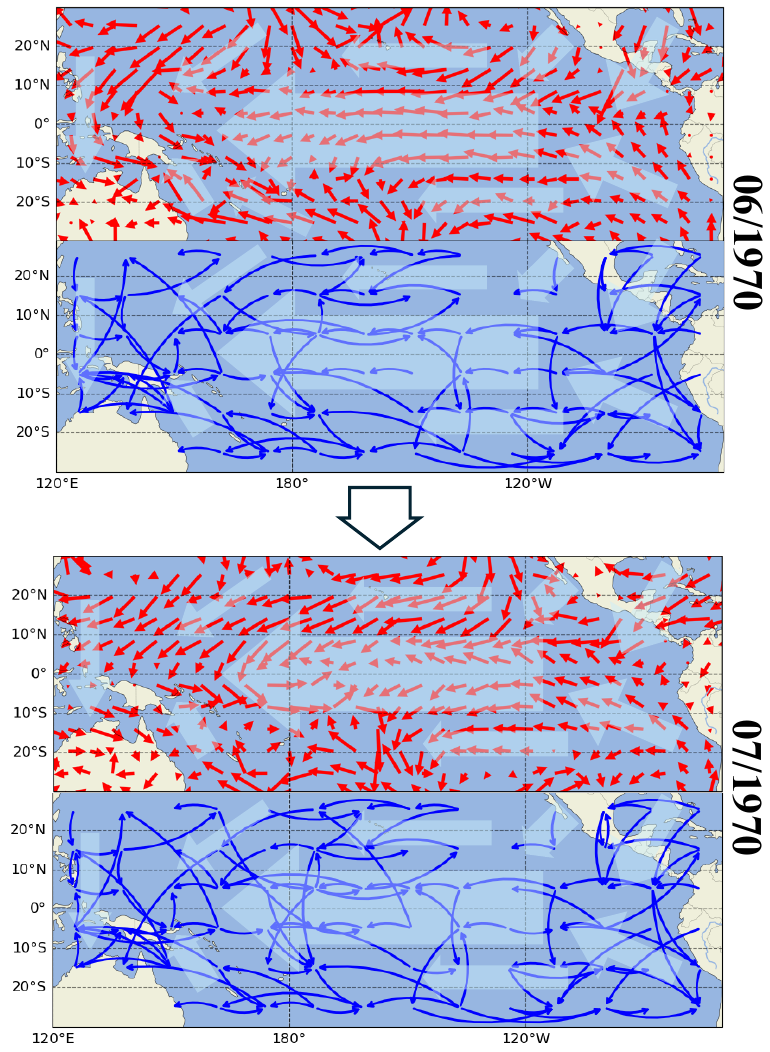}
    \caption{Unrotated visualizations on climate dataset.}
    \label{fig:full_vis}
\end{figure}
\subsection{Testing Assumptions in Climate Time Series} \label{app:e2-test}
To verify that the future climate variable $\xx_{t+1}$ exhibits distinct statistical behavior conditioned on past states $\xx_{t-1}$, we examined a multivariate climate time series $\mathbf{X} \in \mathbb{R}^{T \times d_x}$. For a selected variable, $\xx_t$, the lagged variables $\xx_{t-1}$ and $\xx_{t+1}$ were constructed and analyzed.
The range of $\xx_{t-1}$ was divided into quantile bins, and the corresponding conditional distributions of $\xx_{t+1}$ were visualized. The results show clear regime dependence: higher or lower $\xx_{t-1}$ values correspond to systematically shifted or widened $\xx_{t+1}$ distributions, confirming temporal memory in the process.

Assumptions involved with $\zz_t$ are not immediately verified, since latent variables are inherently unobserved. But these assumption can be tested indirectly: Under these assumptions, we can first estimate the model, and then we can test whether these assumptions hold true in real-world climate data. To investigate whether the inferred series $\hat \zz_t$ reflects distributional changes w.r.t. the observed climate variable $\xx_t$, we analyze the conditional behavior $P(\hat \zz_t \mid \xx_t)$. Using quantile bins of $\xx_t$, we visualize how the value of $\hat \zz_t$ shifts across regimes of $\xx_t$.
Four diagnostics are shown: (i) conditional histograms, (ii) conditional boxplots, (iii) the joint density $(\xx_t, \zz_t)$, and (iv) conditional means with 95\% confidence intervals. The panels reveal that higher values of $\xx_t$ correspond to systematically broader and shifted distributions of $\hat \zz_t$, confirming that $\hat \zz_t$ captures regime-dependent variability in the climate data.

\begin{figure}[h]
    \centering
    \begin{subfigure}[t]{0.45\textwidth}
        \centering
        \includegraphics[width=\textwidth]{./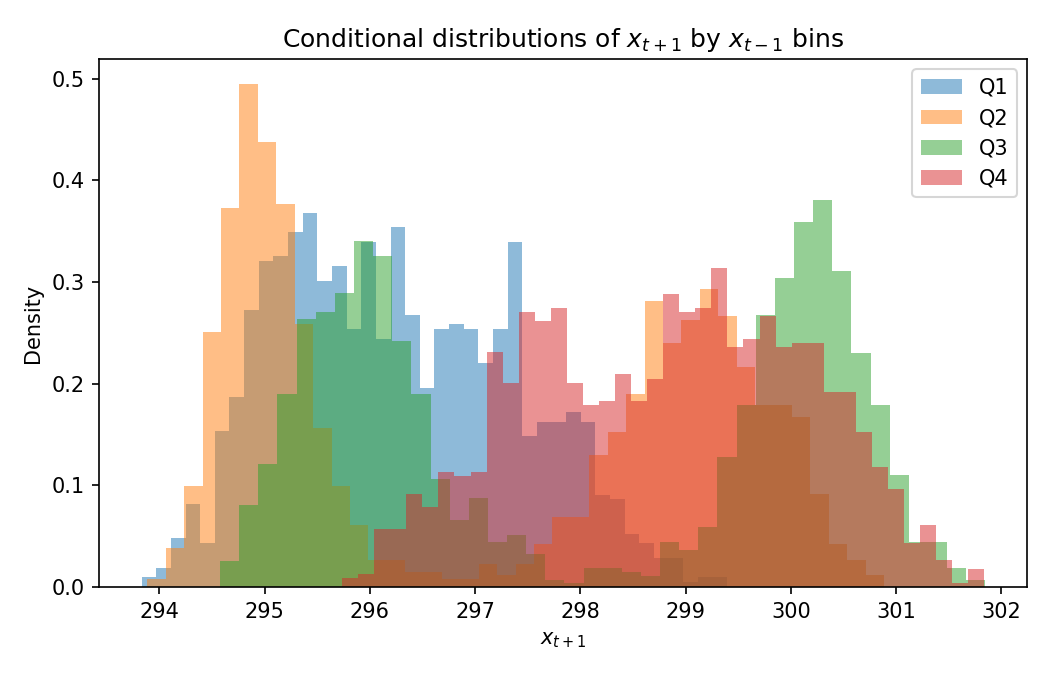}
        \caption{Conditional distributions of $\xx_{t+1}$ grouped by quantile bins of $\xx_{t-1}$. Distinct shapes indicate dependence on the previous climate state.}
    \end{subfigure}
    \hfill
    \begin{subfigure}[t]{0.45\textwidth}
        \centering
        \includegraphics[width=\textwidth]{./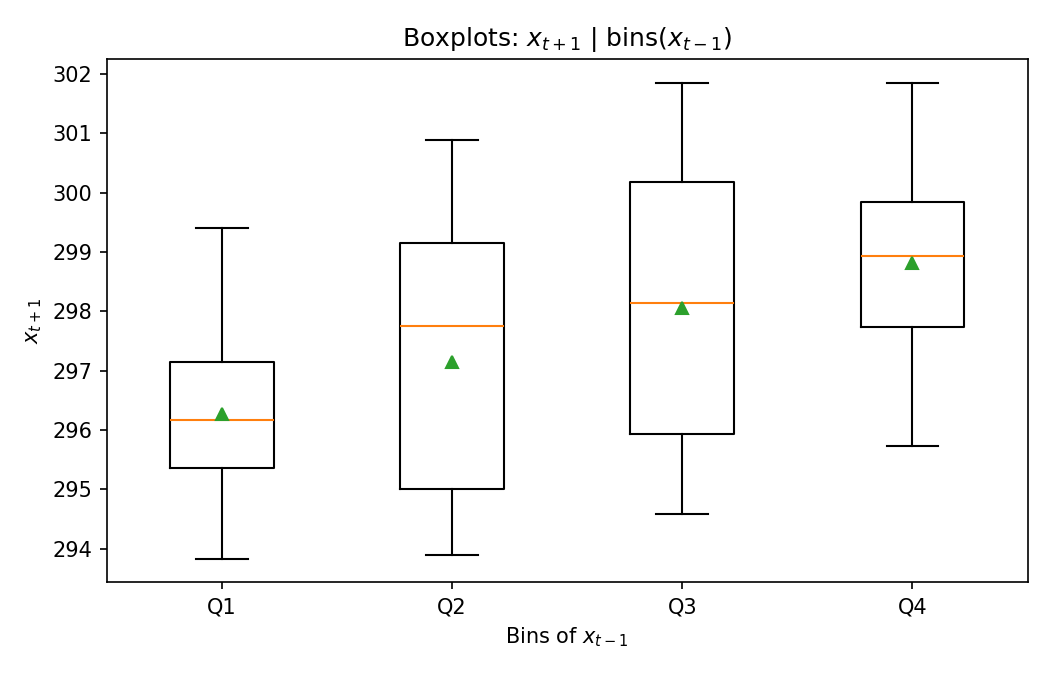}
        \caption{Boxplots of $\xx_{t+1}$ for different regimes of $\xx_{t-1}$. Median and spread shift with past state, highlighting nonlinear temporal dependence.}
    \end{subfigure}
    
    \vskip\baselineskip
    \begin{subfigure}[t]{0.45\textwidth}
        \centering
        \includegraphics[width=\textwidth]{./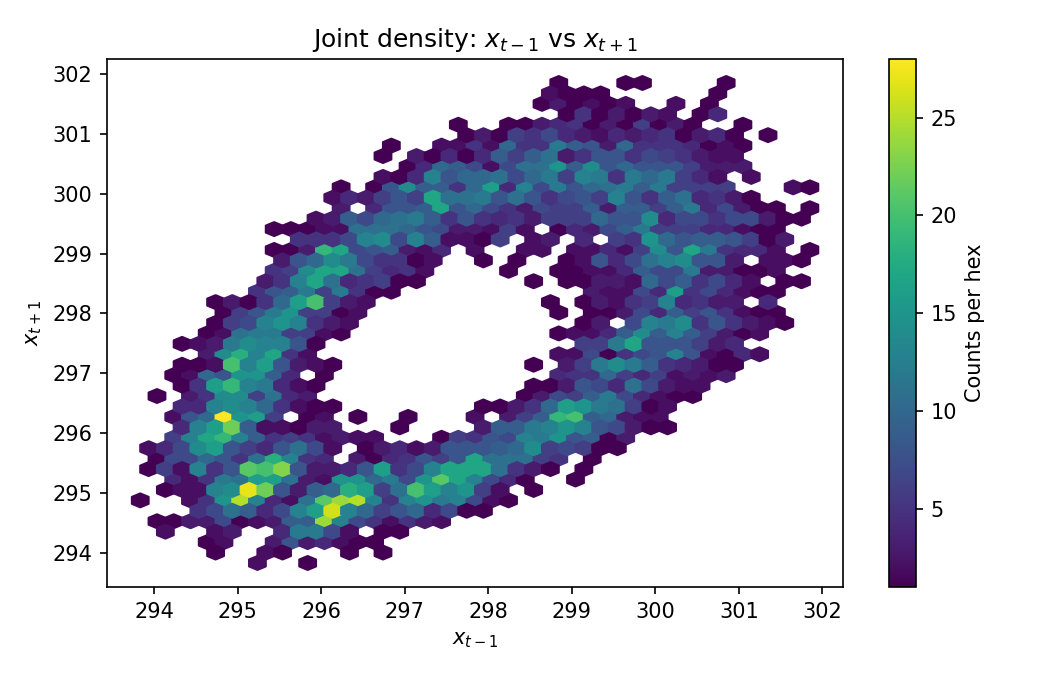}
        \caption{Joint density of $\xx_{t-1}$ and $\xx_{t+1}$. The slanted density structure confirms a directional relationship between past and future values.}
    \end{subfigure}
    \hfill
    \begin{subfigure}[t]{0.45\textwidth}
        \centering
        \includegraphics[width=\textwidth]{./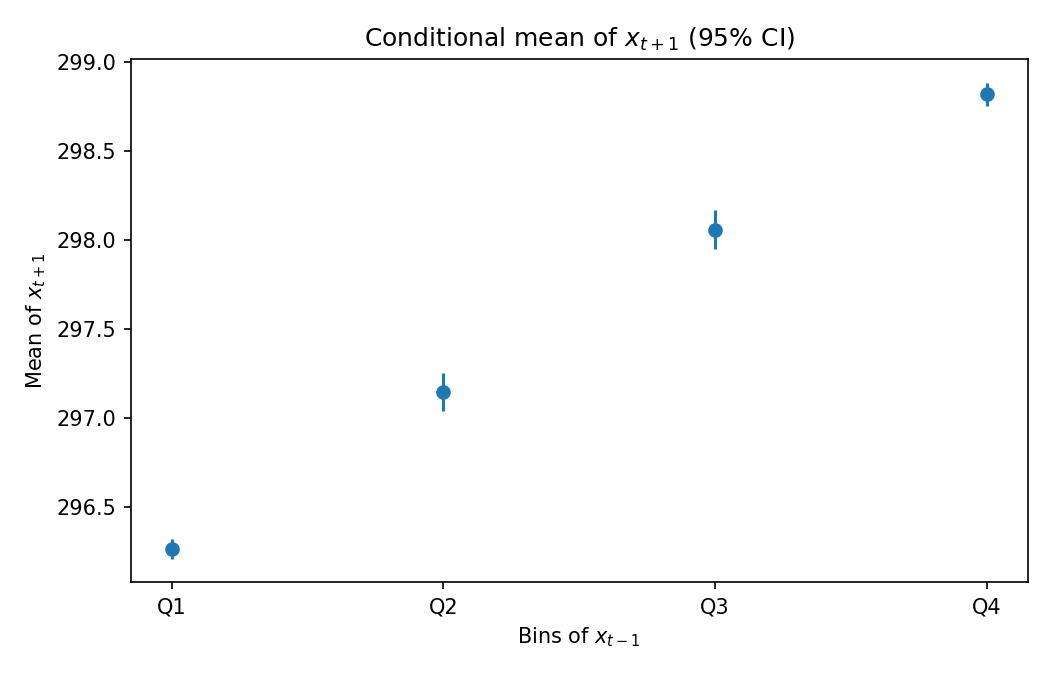}
        \caption{Conditional means of $\xx_{t+1}$ with 95\% confidence intervals for each $\xx_{t-1}$ bin. The monotonic trend evidences lagged dependence in the climate variable.}
    \end{subfigure}
    
    \caption{\textbf{Verification of Assumption~\hyperref[asp:linear operator]{A2} in Climate Data.} Each panel visualizes a distinct aspect of the distributional relationship between $\xx_{t+1}$ and $\xx_{t-1}$.}
    \label{fig:climate_dependence}
\end{figure}

\begin{figure}[h]
    \centering
    \begin{subfigure}[t]{0.45\textwidth}
        \centering
        \includegraphics[width=\textwidth]{./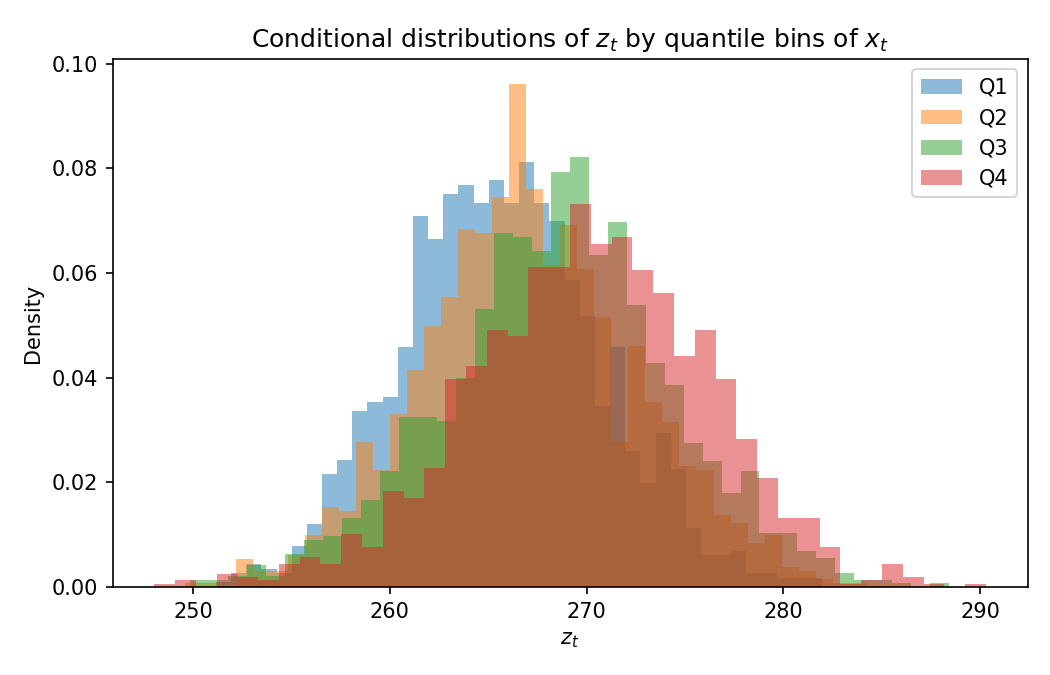}
        \caption{Conditional distributions of $\zz_t$ across quantile bins of $\xx_t$, illustrating regime-dependent spread and shift.}
        \label{fig:zt_given_xt_hist}
    \end{subfigure}
    \hfill
    \begin{subfigure}[t]{0.45\textwidth}
        \centering
        \includegraphics[width=\textwidth]{./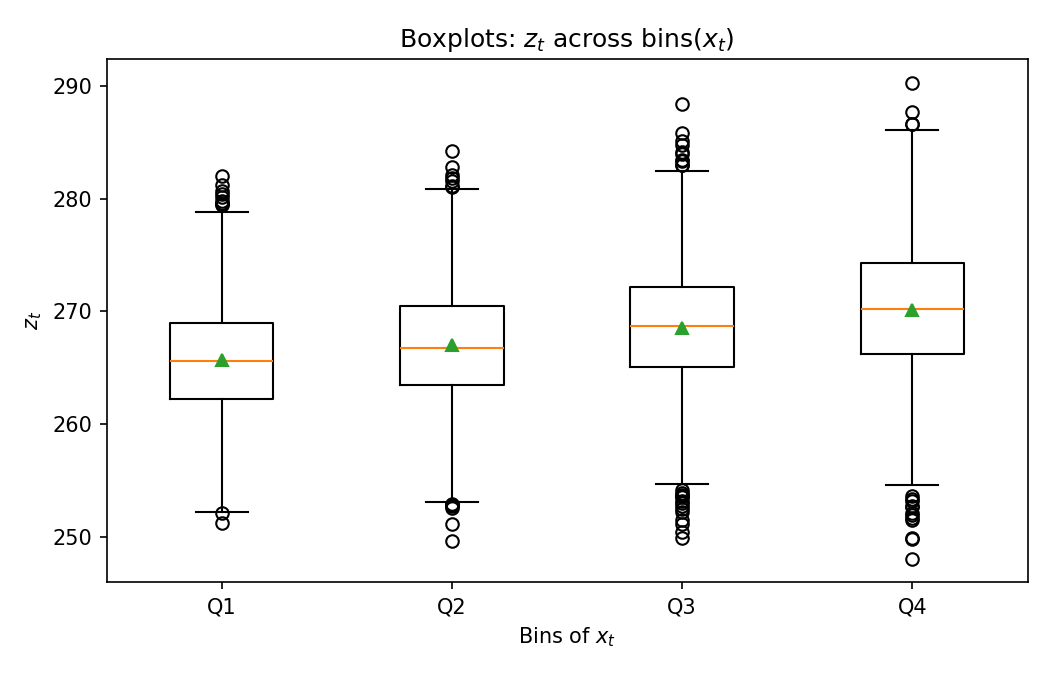}
        \caption{Boxplots of $\zz_t$ grouped by quantile bins of $\xx_t$, showing distinct medians and dispersion across regimes.}
        \label{fig:zt_given_xt_box}
    \end{subfigure}
    
    \vskip\baselineskip
    \begin{subfigure}[t]{0.45\textwidth}
        \centering
        \includegraphics[width=\textwidth]{./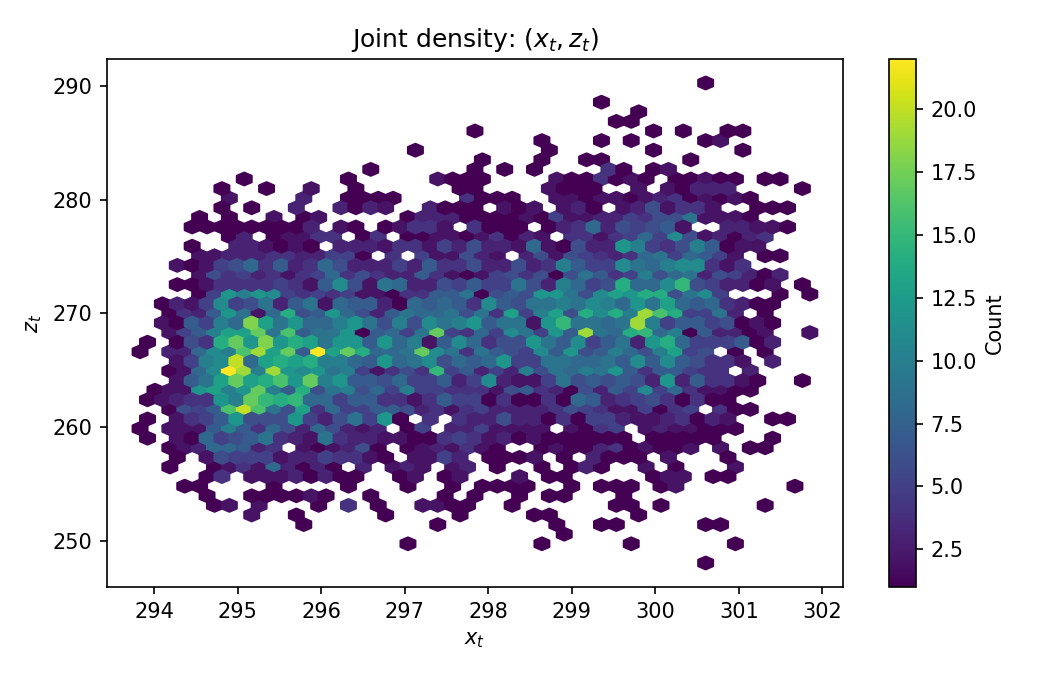}
        \caption{Joint hexbin plot of $(\xx_t, \zz_t)$, highlighting contemporaneous dependence and heteroskedasticity.}
        \label{fig:zt_given_xt_hexbin}
    \end{subfigure}
    \hfill
    \begin{subfigure}[t]{0.45\textwidth}
        \centering
        \includegraphics[width=\textwidth]{./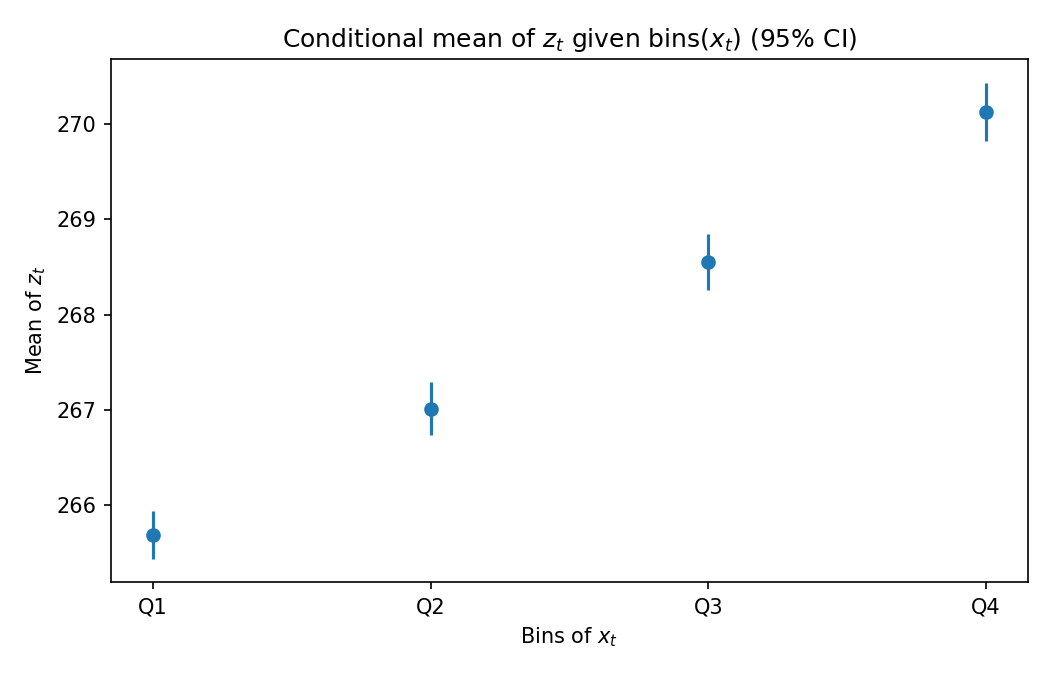}
        \caption{Conditional means of $\zz_t$ with 95\% confidence intervals across $\xx_t$ bins, showing monotonic increase and growing uncertainty.}
        \label{fig:zt_given_xt_means}
    \end{subfigure}
    \caption{\textbf{Verification of Assumption~\hyperref[asp:nonredundant]{A3} in Climate Data.} Each panel visualizes how the series $\zz_t$ varies across quantile regimes of the observed climate variable $\xx_t$.}
    \label{fig:zt_given_xt_all}
\end{figure}

\subsection{Identifiability of Latent Space in $n$-order Markov Process}\label{app:time-lag-iden}
We state conditions under which block-wise latent recovery holds for general order-$n$ dynamics as follows.
\begin{theorem}\textbf{(Identifiability of Latent Space in $n$-Order Markov Process)}\label{thm:time-lag-blk-idn}
    Suppose observed variables and hidden variables follow the data-generating process in ~\Cref{equ:dgp}, and estimated observations match the true joint distribution of $\{ \xx_{t-n}, \ldots, \xx_{t-1}, \xx_t, 
    \ldots, \xx_{t+n}, \xx_{t+n+1}, \ldots, \xx_{t+2n} \}$. The following assumptions are imposed:
\begin{itemize}[leftmargin=*, label={}]
    \item A1' (\underline{Computable Probability:}) The joint, marginal, and conditional distributions of $(\xx_t, \zz_t)$ are all bounded and continuous.
    \item A2' (\underline{Contextual Variability:}) The operators $L_{\xx_{t+n+1:t+2n} \mid \zz_{t:t+n}}$ and $L_{\xx_{t-n:t-1} \mid \xx_{t+n+1:t+2n}}$ are injective and bounded. 
    \item A3' (\underline{Latent Drift:}) For any $\zz^{(1)}_{t:t+n}, \zz^{(2)}_{t:t+n} \in \mathcal{Z}_t$ where $\zz^{(1)}_{t:t+n} \neq \zz^{(2)}_{t:t+n}$, we have $p (\xx_t|\zz^{(1)}_{t:t+n}) \neq p (\xx_t|\zz^{(2)}_{t:t+n})$. 
    \item A4' (\underline{Differentiability:}) There exists a functional $F$ such that $F\left[ p_{\xx_{t:t+n} \mid \zz_{t:t+n}}(\cdot \mid \zz_{t:t+n}) \right] = h_z(\zz_{t:t+n})$ for all $\zz_{t:t+n} \in \mathcal{Z}_{t:t+n}$, where $h_z$ is differentiable.
\end{itemize}
Then we have 
$
    \hat{\zz}_{t:t+n} = h_z(\zz_{t:t+n})
$,
where $h_z: \mathbb{R}^{d_z \times n} \rightarrow \mathbb{R}^{d_z \times n}$ is an invertible and differentiable function.
\end{theorem}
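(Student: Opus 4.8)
The plan is to reduce the order-$n$ statement to the already-proved first-order case (Theorem~\ref{thm:blk idn}) by a block/meta-variable construction. Stack the latent window into a single meta-latent $\bar{\zz} := \zz_{t:t+n}$, and partition the observations into three meta-observations $\bar{\xx}^- := \xx_{t-n:t-1}$, $\bar{\xx}^0 := \xx_{t:t+n}$, and $\bar{\xx}^+ := \xx_{t+n+1:t+2n}$. Under the $n$-order Markov structure of \Cref{equ:dgp}, the window $\bar{\zz}$ acts as a Markov blanket separating the past observation block from the future block, so the triple $(\bar{\xx}^-,\bar{\xx}^0,\bar{\xx}^+)$ together with $\bar{\zz}$ satisfies exactly the conditional-independence pattern in \Cref{equ:cond ind} under the substitution $\xx_{t-1}\mapsto\bar{\xx}^-$, $\xx_t\mapsto\bar{\xx}^0$, $\xx_{t+1}\mapsto\bar{\xx}^+$, $\zz_t\mapsto\bar{\zz}$. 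Assumptions A1'--A4' are precisely A1--A4 phrased for these meta-variables (A2' gives injectivity of $L_{\bar{\xx}^+\mid\bar{\zz}}$ and $L_{\bar{\xx}^-\mid\bar{\xx}^+}$, A3' gives latent drift for $\bar{\zz}$, A4' gives differentiability of the functional $F$ on the posterior collection), so every step of the proof of Theorem~\ref{thm:blk idn} transfers.

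\textbf{Key steps, in order.} First I would verify, using Assumption~\ref{con:faith dag} and d-separation in the induced DAG, that every directed path from a vertex in $\zz_{t-n:t-1}\cup\bar{\xx}^-$ to a vertex in $\bar{\xx}^0\cup\bar{\xx}^+$ must pass through the window $\zz_{t:t+n}$, and that conditioning on $\bar{\zz}$ blocks all such paths (they are chains, not colliders); with the symmetric argument for the future side this yields $p(\bar{\xx}^-\mid\bar{\xx}^0,\bar{\zz})=p(\bar{\xx}^-\mid\bar{\zz})$ and $p(\bar{\xx}^+\mid\bar{\xx}^0,\bar{\xx}^-,\bar{\zz})=p(\bar{\xx}^+\mid\bar{\zz})$. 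Second, I would run the operator algebra of \Cref{eq:3prf_0}--\Cref{eq:3prf_9} on the meta-variables: integrate the joint over the past window, substitute the operators $L$ and $D$ of \Cref{equ:lin op def} and Definition~\ref{def:diagonal operator}, invert $L_{\bar{\xx}^+\mid\bar{\zz}}$ and $L_{\bar{\xx}^-\mid\bar{\xx}^+}$ (both injective by A2'), and arrive at a spectral decomposition $L_{\bar{\xx}^0;\bar{\xx}^+\mid\bar{\xx}^-}L^{-1}_{\bar{\xx}^+\mid\bar{\xx}^-}=L_{\bar{\xx}^+\mid\bar{\zz}}D_{\bar{\xx}^0\mid\bar{\zz}}L^{-1}_{\bar{\xx}^+\mid\bar{\zz}}$, which A1' makes bounded and hence unique up to a scalar $C$ and a permutation $P$. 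Third, I would kill $C=1$ by the normalization of densities, use A3' to rule out repeated eigenfunctions and upgrade the relabelling to an invertible map $h_z$ with $\hzz_{t:t+n}=h_z(\zz_{t:t+n})$, and invoke A4' to force $h_z$ to be differentiable. Finally I would repeat the two-case dimension-counting argument (scenarios $d_{\hat z}>d_z$ and $d_{\hat z}<d_z$), now applied coordinatewise across the $d_z(n+1)$ entries of the stacked window, to pin down the dimension.

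\textbf{Main obstacle.} The delicate point is the d-separation bookkeeping in the first step: confirming that the chosen windows $[t-n,t-1]$, $[t,t+n]$, $[t+n+1,t+2n]$ make $\zz_{t:t+n}$ a genuine separating set. This requires care when the observation equation in \Cref{equ:dgp} draws latent parents $\mathbf{pa}_L(x_{t,i})$ from more than one past step and when observable-to-observable edges are present: one must check that, with the observation blocks placed a full lag-$n$ gap away from the temporal boundary of the latent window on each side, no latent or observed vertex outside $\zz_{t:t+n}$ is simultaneously an ancestor of a vertex in $\bar{\xx}^-$ and of a vertex in $\bar{\xx}^0\cup\bar{\xx}^+$ (and symmetrically for the future). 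Once this is settled, the rest is a mechanical transcription of the proof of Theorem~\ref{thm:blk idn} with $\zz_t$ replaced by the stacked window, introducing no new analytic difficulty.
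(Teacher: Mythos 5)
Your proposal is correct and follows essentially the same route as the paper: the paper's own justification is precisely the block/meta-variable reduction you describe, grouping the $3n{+}1$ measurements into past, middle, and future windows conditioned on the stacked latent block $\zz_{t:t+n}$ and re-running the spectral-decomposition argument of Theorem~\ref{thm:blk idn}. Your version is in fact more careful than the paper's brief sketch, since you explicitly flag the d-separation verification needed to confirm that $\zz_{t:t+n}$ separates the past and future observation blocks, a step the paper leaves implicit.
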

If an $n$-order Markov process exhibits conditional independence across different time lags, block-wise identifiability of the conditioning variables can still be achieved using 3n measurements. For instance, when the lag is 2, once block-wise identifiability of the joint variables \( [\zz_{t}, \zz_{t+1}] \) and \( [\zz_{t+1}, \zz_{t+2}] \) is established, and given the known temporal direction \( \zz_t \rightarrow \zz_{t+1} \), one can disambiguate \( \zz_t \) and \( \zz_{t+1} \) under mild variability assumptions. Subsequently, the same strategy as in Theorem~\ref{thm:zijian 1} can be applied to achieve component-wise identifiability of \( \zz_t \) and \( \zz_{t+1} \) by leveraging conditional independencies given \( \zz_{t-2} \) and \( \zz_{t-1} \).
\subsection{Allowing Time-Lagged Causal Relationships in Observations} \label{sec:obs lag}

In this section, we demonstrate that our proposed framework is compatible with the consideration of time-lagged effects, by providing \textit{potential solutions}. 

\begin{figure}[H] 
    \centering
    \tikz{
        \node[obs,minimum size=0.8cm] (x1) {$\mathbf{x}_{t-2}$};%
        \node[obs,right=1.cm of x1,minimum size=0.8cm] (x2) {$\mathbf{x}_{t-1}$};%
        \node[obs,right=1.cm of x2,minimum size=0.8cm] (x3) {$\mathbf{x}_{t}$};%
        \node[obs,right=1.cm of x3,minimum size=0.8cm] (x4) {$\mathbf{x}_{t+1}$};%
        \node[latent,above=1.cm of x2,minimum size=0.8cm] (c2) {$\mathbf{z}_{t-1}$}; %
        \node[latent,above=1.cm of x1,minimum size=0.8cm] (c1) {$\mathbf{z}_{t-2}$}; %
        \node[latent,above=1.cm of x3,minimum size=0.8cm] (c3) {$\mathbf{z}_{t}$}; %
        \node[latent,above=1.cm of x4,minimum size=0.8cm] (c4) {$\mathbf{z}_{t+1}$}; %

        \node[draw, dashed, inner xsep=5pt, inner ysep=5pt, fit=(c2)(c3), name=group, label=above:$\mathbf{z}'_t$] (zt') {};

        \node[draw, dashed, inner xsep=5pt, inner ysep=5pt, fit=(x2)(x3), name=group, label=below:$\mathbf{x}'_t$] (xt') {};
    
        \edge {c1} {x1}
        \edge {c2} {x2}
        \edge {c3} {x3}
        \edge {c4} {x4}
        \edge {c1} {c2}
        \edge {c2} {c3}
        \edge {c3} {c4}
        \edge {x1} {x2}
        \edge {x2} {x3}
        \edge {x3} {x4}
        \draw[->] (-1,0) -- (x1);
        \draw[->] (-1,1.8) -- (c1);
        \draw[->] (c4) -- (6.3,1.8);
        \draw[->] (x4) -- (6.3,0);
        \draw[->, dashed] (zt') -- (xt');
        
    }
    \caption{\textbf{4-measurement model with time-lagged effects in observed space.} $\mathbf{x}_t$ could be considered as the directed (dominating) measurement of $\mathbf{z}_t$, and $\mathbf{x}_{t-2}$, $\mathbf{x}_{t-1}$, and $\mathbf{x}_{t+1}$ provide indirect measurements of $\mathbf{z}_t$. For identifying the time-lagged causal relationships in observed space, we consider $\mathbf{z}'_t = (\mathbf{z}_{t-1}, \mathbf{z}_t)$ as the new latent variables, and $\mathbf{x}'_t = (\mathbf{x}_{t-1}, \mathbf{x}_t)$ as the new observed variables, to apply our \textit{functional equivalence} in Theorem~\ref{thm:func-equ}. }
    \label{fig:4-measurement}
\end{figure}

\subsubsection{Phase I: Identifying Latent Variables from Time-Lagged Causally-Related Observations} For the identification of latent variables, we adopt the strategy outlined in \citep{carroll2010identification,hu2012nonparametric} to construct a spectral decomposition. We extend this approach to develop a proof strategy that establishes the identifiability of the latent space, as stated in Corollary~\ref{cor:rep-B}.

We begin by defining the 4-measurement model, which includes time-series data with time-lagged effects in the observed space as a special case.
\begin{definition}[4-Measurement Model] \label{def:4-measurement}
    $\mathbf{Z} = \{ \mathbf{z}_{t-2}, \mathbf{z}_{t-1}, \mathbf{z}_t, \mathbf{z}_{t+1} \}$ represents latent variables in four continuous time steps, respectively. Similarly, $\mathbf{X} = \{ \mathbf{x}_{t-2}, \mathbf{x}_{t-1}, \mathbf{x}_t, \mathbf{x}_{t+1} \}$ are observed variables that directly measure $\mathbf{z}_{t-2}, \mathbf{z}_{t-1}, \mathbf{z}_t, \mathbf{z}_{t+1}$ using the same generating functions $g$. The model is defined by the following properties:
    \begin{itemize}
        \item The transformation within $\mathbf{z}_{t-2}, \mathbf{z}_{t-1}, \mathbf{z}_{t}, \mathbf{z}_{t+1}$ is not measure-preserving.
        \item Joint density of $\mathbf{x}_{t-2}, \mathbf{x}_{t-1}, \mathbf{x}_t, \mathbf{x}_{t+1}, \mathbf{z}_t$ is a product measure w.r.t. the Lebesgue measure on $\mathcal{X}_{t-2} \times \mathcal{X}_{t-1} \times \mathcal{X}_{t} \times \mathcal{X}_{t+1} \times \mathcal{Z}_{t}$ and a dominating measure $\mu$ is defined on $\mathcal{Z}_t$. 
        \item \textbf{Limited feedback}: $p(\mathbf{x}_t \mid \mathbf{x}_{t-1}, \mathbf{z}_t, \mathbf{z}_{t-1}) = p(\mathbf{x}_t \mid \mathbf{x}_{t-1}, \mathbf{z}_t)$.
        \item The distribution over $(\mathbf{X}, \mathbf{Z})$ is Markov and faithful to a DAG. 
    \end{itemize}
\end{definition}
Limited feedback explicitly assumes that future events do not cause past events and excludes instantaneous effects from \(\mathbf{x}_t\) to \(\mathbf{z}_t\). As illustrated in Figure~\ref{fig:4-measurement}, \(\mathbf{x}_{t-2}, \mathbf{x}_{t-1}, \mathbf{x}_t, \mathbf{x}_{t+1}\) are defined as different measurements of \(\mathbf{z}_t\), forming a temporal structure characteristic of a typical 4-measurement model. Under the data-generating process depicted in Figure~\ref{fig:4-measurement}, and based on the assumption of limited feedback, we propose the following framework:
\begin{equation} \label{equ:4-meas}
\begin{aligned}
    p(\mathbf{x}_{t-1}, \mathbf{x}_t, \mathbf{x}_{t+1}, \mathbf{x}_{t+2}) 
    &= \int_{\mathcal{Z}_t} p({\mathbf{x}_{t+1} \mid \mathbf{x}_t, \mathbf{z}_t}) p({\mathbf{x}_t | \mathbf{x}_{t-1}, \mathbf{z}_t})p(\mathbf{x}_{t-1}, \mathbf{x}_{t-2}, \mathbf{z}_t) dz_t \\ 
    &= \int_{\mathcal{Z}_t}  p({\mathbf{x}_{t+1} \mid \mathbf{x}_t, \mathbf{z}_t}) p(\mathbf{x}_t, \mathbf{x}_{t-1}, \mathbf{z}_t) p(\mathbf{x}_{t-2} \mid \mathbf{z}_t, \mathbf{x}_{t-1})dz_t.
\end{aligned}
\end{equation}

\paragraph{Discussion of achieving the identifiability of latent space.} Comparing ~\Cref{equ:4-meas} with ~\Cref{equ:trans func}, which represents the foundational result for proving the identifiability of latent space under the 3-measurement model, we extend the identification strategy from \citep{carroll2010identification, hu2012nonparametric} to the 4-measurement model. This forms the critical step in our identification process. We adopt assumptions analogous to those in \citep{carroll2010identification,hu2012nonparametric} and Theorem~\ref{thm:blk idn}, and suppose the followings:
\begin{enumerate}[label=\roman*., leftmargin=*]
    \item The joint distribution of $(\mathbf{X}, \mathbf{Z})$ and all their marginal and conditional densities are bounded and continuous. 
    \item The linear operators $L_{\xx_{t+1} \mid x_{t}, z_t}$ and $L_{x_{t-2}, \xx_{t-1}, x_{t}, \xx_{t+1}, z_{t}}$ are injective for bounded function space. 
    \item For all $\mathbf{z}_{t}, \mathbf{z}'_{t} \in \mathcal{Z}_t$ $(\mathbf{z}_{t} \neq  \mathbf{z}'_{t})$, the set $\{ \mathbf{x}_t : p (\mathbf{x}_t|\mathbf{z}_t) \neq p (\mathbf{x}_t|\mathbf{z}'_t) \}$ has positive probability. 
\end{enumerate}
hold true. Similar to the proof of our identifiability of latent space in Section~\ref{prf:blk idn}, except for the conditional independence introduced by the temporal structure, the key assumptions include an injective linear operator to enable the recovery of the density function of latent variables and distinctive eigenvalues to prevent eigenvalue degeneracy. The primary difference is the property \textit{limited feedback}, where we can adopt the strategy in \citep{carroll2010identification} to construct a unique spectral decomposition, where \((\mathbf{x}_{t-2}, \mathbf{x}_{t-1}, \mathbf{x}_{t}, \mathbf{x}_{t+1}, \mathbf{z}_{t})\) correspond to \((X, S, Z, Y, X^*)\), respectively. 

Following this, we apply the key steps of our identification process as detailed in the Appendix~\ref{prf:blk idn}. Ultimately, we can establish that the block \((\mathbf{z}_t, \mathbf{x}_t)\) is identifiable up to an invertible transformation:
\begin{equation}
    (\hat{\mathbf{z}}_t, \hat{\mathbf{x}}_t) = h_{x,z}(\mathbf{z}_t, \mathbf{x}_t).
\end{equation}
where $h_{x,z}: \mathbb{R}^{d_x + d_z} \rightarrow \mathbb{R}^{d_x + d_z}$ is an invertible function. Since the observation \(\mathbf{x}_t\) is known and suppose \(\hat{\mathbf{x}}_t = \mathbf{x}_t\), this relationship indeed represents an invertible transformation between \(\hat{\mathbf{z}}_t\) and \(\mathbf{z}_t\) as 
\begin{equation}
    \hat{\mathbf{z}}_t = h_{z}(\mathbf{z}_t).
\end{equation}
With an additional assumption of a sparse latent Markov network, we achieve component-wise identifiability of the latent variables, as stated in Theorem~\ref{thm:zijian 1} in appendix, leveraging the proof strategies of \citep{zhang2024causal, li2025on}. These results are stronger than those in \citep{carroll2010identification}.

\subsubsection{Phase II: Identifying Time-Lagged Observation Causal Graph}
\paragraph{Unified Modeling across Neighboring Time Points.} In the presence of time-lagged effects in the observed space, such as \(\mathbf{x}_{t-1} \rightarrow \mathbf{x}_{t}\), alongside the causal DAG within \(\mathbf{x}_{t}\), as depicted in Figure~\ref{fig:4-measurement}, we show that by introducing an expanded set of latent variables \(\mathbf{z}'_t = (\mathbf{z}_{t-1}, \mathbf{z}_t)\) and an expanded set of observed variables \(\mathbf{x}'_t = (\mathbf{x}_{t-1}, \mathbf{x}_t)\), the property of functional equivalence is preserved. Moreover, identifiability continues to hold, and, broadly speaking, it becomes more accessible due to the incorporation of Granger causality principles in time-series data~\citep{freeman1983granger}, if we assume that future events cannot influence or cause past events.

\begin{figure}[htp!]
    \centering
    \begin{subfigure}[b]{0.6\textwidth}
        \centering
        \begin{tikzpicture}[scale=.56, line width=0.4pt, inner sep=0.2mm, shorten >=.1pt, shorten <=.1pt]            
            \tikzstyle{obs} = [circle, draw, minimum size=1cm, inner sep=0pt, text centered]
            \tikzstyle{latent} = [circle, draw, fill=yellow!10, minimum size=1cm, inner sep=0pt, text centered]

            \node[obs] (x1) {$x_{t,1}$};
            \node[obs, right=1cm of x1] (x2) {$x_{t,2}$};
            \node[latent, above=0.55cm of x1, xshift=-0.5cm] (s1) {$s_{t,1}$};
            \node[latent, above=0.55cm of x2, xshift=0.5cm] (s2) {$s_{t,2}$};
            \node[latent, above=0.55cm of x1, xshift=0.95cm] (c) {$\mathbf{z}_t$};
    
            \draw[-latex, dashed] (c) -- (s1);
            \draw[-latex, dashed] (c) -- (s2);
            \draw[-latex] (c) -- (x1);
            \draw[-latex] (c) -- (x2);
            \draw[-latex] (s1) -- (x1);
            \draw[-latex] (s2) -- (x2);
            \draw[-latex, red, line width=0.5mm] (x2) -- (x1);
    
            \node[obs, left=4cm of x1] (x1') {$x_{t-1,1}$};
            \node[obs, right=1cm of x1'] (x2') {$x_{t-1,2}$};
            \node[latent, above=0.55cm of x1', xshift=-0.5cm] (s1') {$s_{t-1,1}$};
            \node[latent, above=0.55cm of x2', xshift=0.5cm] (s2') {$s_{t-1,2}$};
            \node[latent, above=0.55cm of x1', xshift=0.95cm] (c') {$\mathbf{z}_{t-1}$};
    
            \draw[-latex, dashed] (c') -- (s1');
            \draw[-latex, dashed] (c') -- (s2');
            \draw[-latex] (c') -- (x1');
            \draw[-latex] (c') -- (x2');
            \draw[-latex] (s1') -- (x1');
            \draw[-latex] (s2') -- (x2');
            \draw[-latex, red, line width=0.5mm] (x2') -- (x1');
            
            \draw[-latex] (c'.north east) to[out=25, in=155] (c.north west);
            \draw[-latex, red, line width=0.5mm] (x2'.north east) to[out=25, in=155] (x1.north west);

        \end{tikzpicture}
        \caption{\footnotesize Time-lagged SEM.}
        \label{fig:lag sem}
    \end{subfigure}
        \begin{subfigure}[b]{1\textwidth}
        \centering
        \begin{tikzpicture}[scale=.56, line width=0.4pt, inner sep=0.2mm, shorten >=.1pt, shorten <=.1pt]            
            \tikzstyle{obs} = [circle, draw, minimum size=1cm, inner sep=0pt, text centered]
            \tikzstyle{latent} = [circle, draw, fill=yellow!10, minimum size=1cm, inner sep=0pt, text centered]

            \node[obs] (x1) {$x_{t,1}$};
            \node[obs, right=1cm of x1] (x2) {$x_{t,2}$};
            \node[latent, above=0.55cm of x1, xshift=-0.5cm] (s1) {$s_{t,1}$};
            \node[latent, above=0.55cm of x2, xshift=0.5cm] (s2) {$s_{t,2}$};
            \node[latent, above=0.55cm of x1, xshift=0.95cm] (c) {$\mathbf{z}_t$};
    
            \draw[-latex, dashed] (c) -- (s1);
            \draw[-latex, dashed] (c) -- (s2);
            \draw[-latex] (c) -- (x1);
            \draw[-latex] (c) -- (x2);
            \draw[-latex] (s1) -- (x1);
            \draw[-latex] (s2) -- (x2);
            \draw[-latex, gray,line width=0.5mm] (s2) -- (x1);
    
            \node[obs, left=4cm of x1] (x1') {$x_{t-1,1}$};
            \node[obs, right=1cm of x1'] (x2') {$x_{t-1,2}$};
            \node[latent, above=0.55cm of x1', xshift=-0.5cm] (s1') {$s_{t-1,1}$};
            \node[latent, above=0.55cm of x2', xshift=0.5cm] (s2') {$s_{t-1,2}$};
            \node[latent, above=0.55cm of x1', xshift=0.95cm] (c') {$\mathbf{z}_{t-1}$};
    
            \draw[-latex, dashed] (c') -- (s1');
            \draw[-latex, dashed] (c') -- (s2');
            \draw[-latex] (c') -- (x1');
            \draw[-latex] (c') -- (x2');
            \draw[-latex] (s1') -- (x1');
            \draw[-latex] (s2') -- (x2');
            \draw[-latex, gray,line width=0.5mm] (s2') -- (x1');

            \draw[-latex] (c'.north east) to[out=25, in=155] (c.north west);
            \draw[-latex, gray,line width=0.5mm] (s2') -- (x1);
            \draw[-latex, gray,line width=0.5mm] (c') -- (x1);

        \end{tikzpicture}
        \caption{\footnotesize Equivalent ICA.}
        \label{fig:lag ica}
    \end{subfigure}
    \caption{Equivalent time-lagged SEM and ICA in the case with time-lagged causal relationships in observed space. The red lines in Figure~\ref{fig:lag sem} indicate that information are transmitted by the instantaneous and the time-lagged causal graphs over observed variables, while the gray lines in Figure~\ref{fig:lag ica} represent that the information transitions are equivalent to originating from contemporary $\mathbf{s}_{t}$ and previous $(\mathbf{z}_{t}, s_{t-1,2})$ within the mixing structure.}
    \label{fig:blk func equ}
    \vspace{2em}
\end{figure}

\textbf{Functional Equivalence in Presence of Time-Lagged Effects.} As shown in Figure~\ref{fig:blk func equ}, we show that, if we consider the time-lagged causal relationship in observed space, it still can be processed with the technique as in our paper proposed, through considering time-lagged causal relationships as a part of the causal graph over observed variables, by reformulating $\mathbf{z}'_t = (\mathbf{z}_{t-1}, \mathbf{z}_{t})$, $\mathbf{x}'_t = (\mathbf{x}_{t-1}, \mathbf{x}_{t})$ and $\mathbf{s}'_t = (\mathbf{s}_{t-1}, \mathbf{s}_{t})$, to apply the Corollary~\ref{cor:rep-B}. Specifically, the time-lagged effects from $\mathbf{x}_{t-2}$ can be considered as side information, which does not make difference to causal relationships from $\mathbf{x}_{t-1}$ to $\mathbf{x}_{t}$ and its corresponding ICA form.

\subsubsection{Estimation Methodology}
\paragraph{Sliding Window.} Building on the analysis above, we aggregate two adjacent time-indexed observations into a single new observation. By employing a sliding window with a step size of 1, we obtain \(T-1\) new observations along with their corresponding latent variables, thereby aligning with the estimation methodology described in Section~\ref{sec:est med}. 

\paragraph{Structure Pruning.} For structure learning, given the assumption that future climate cannot cause past climate, we can mask \(\frac{1}{4}\) of elements in the causal adjacency matrix during implementation, as depicted in Figure~\ref{fig:lag adj}. Compared with the original implementation, the masking simplifies the difficulty of optimization by reducing the degrees of freedom in the graph.

\begin{figure}[H]
    \centering
    \begin{subfigure}[b]{0.4\textwidth}
        \centering
        \includegraphics[width=0.9\textwidth]{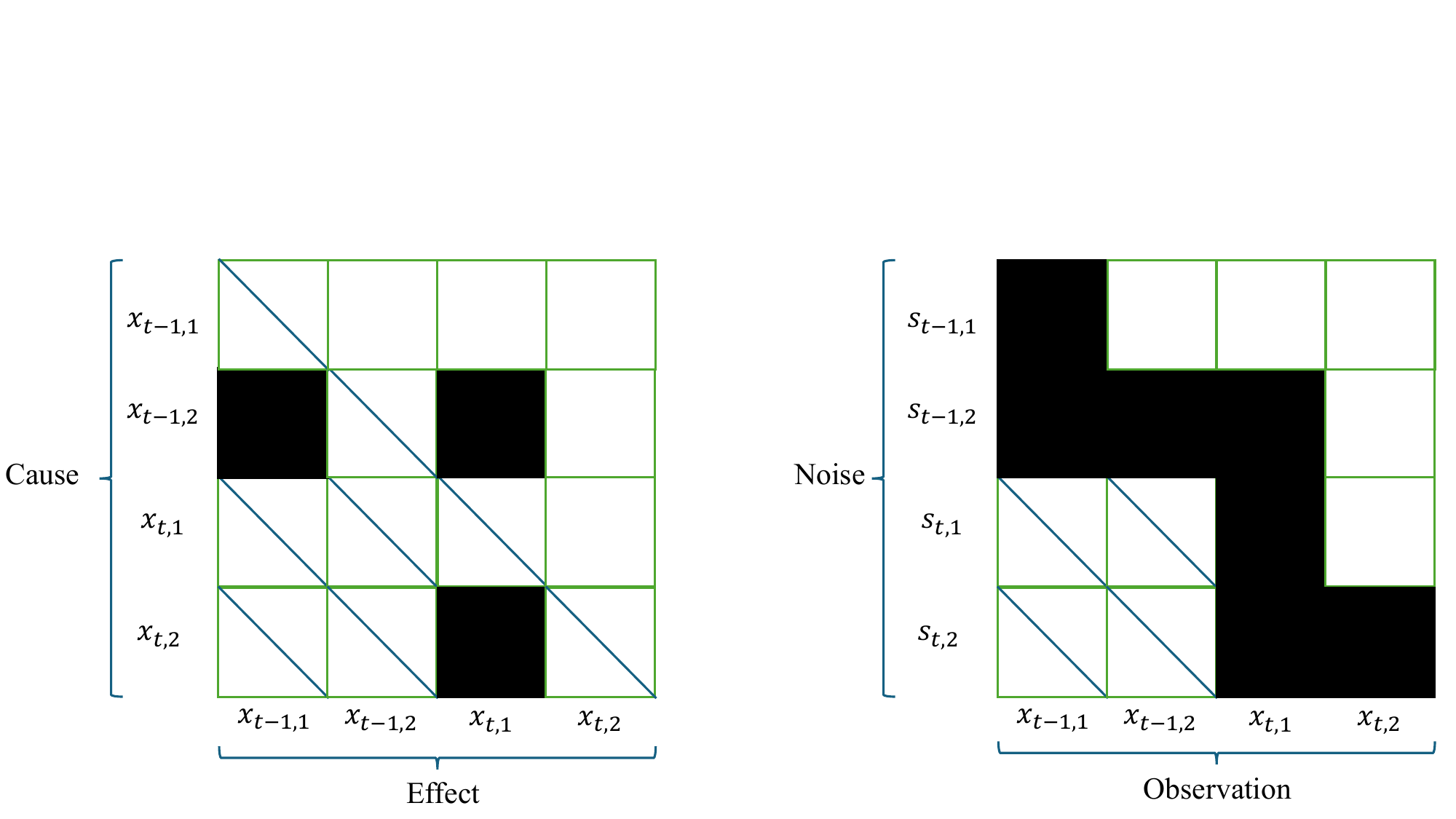}
        \caption{\footnotesize Causal adjacency matrix of SEM.}
        \label{fig:lag sem mat}
    \end{subfigure}
    \begin{subfigure}[b]{0.4\textwidth}
        \centering
        \includegraphics[width=0.9\textwidth]{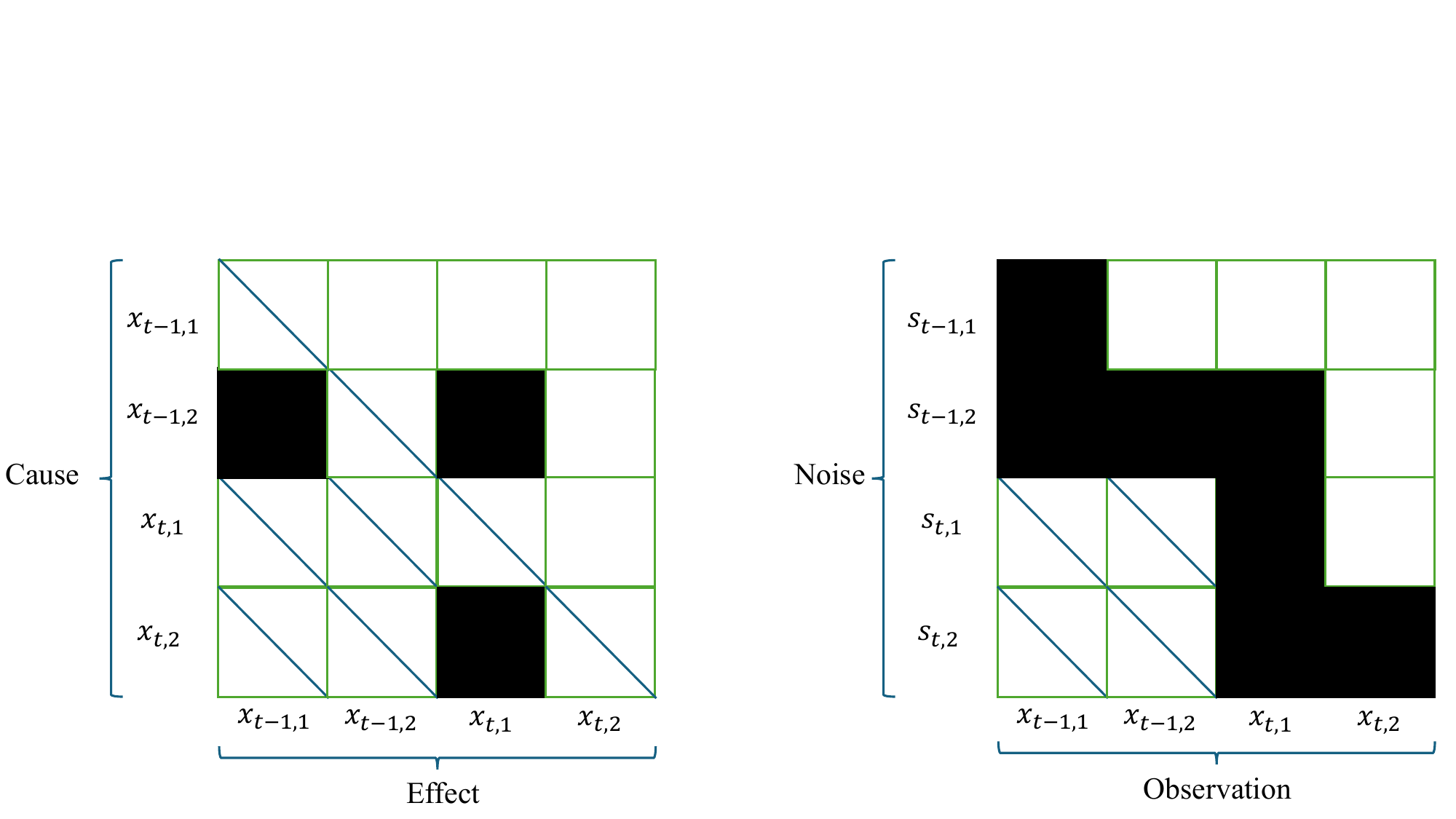}
        \caption{\footnotesize Mixing matrix of equivalent ICA.}
        \label{fig:lag ica mat}
    \end{subfigure}
    \caption{Interpreting Figure~\ref{fig:blk func equ} with causal adjacency matrix of the SEM and the mixing matrix of the equivalent ICA. The diagonal lines indicate masked elements, as future events cannot cause past events, and self-loops are not permitted. Black blocks represent the presence of a causal relationship  or functional dependency in the generating function $g_m$, while white blocks indicate the absence of such a relationship.}
    \vspace{2em}

    \label{fig:lag adj}
\end{figure}

\end{document}